\DeclareFontFamily{U}{mathx}{}
\DeclareFontShape{U}{mathx}{m}{n}{<-> mathx10}{}
\DeclareSymbolFont{mathx}{U}{mathx}{m}{n}
\DeclareMathAccent{\widehat}{0}{mathx}{"70}
\DeclareMathAccent{\widecheck}{0}{mathx}{"71}
\newenvironment{breakablealgorithm}
  {% \begin{breakablealgorithm}
   \begin{center}
     \refstepcounter{algorithm}% New algorithm
     \hrule height.8pt depth0pt \kern2pt% \@fs@pre for \@fs@ruled
     \renewcommand{\caption}[2][\relax]{% Make a new \caption
       {\raggedright\textbf{\ALG@name~\thealgorithm} ##2\par}%
       \ifx\relax##1\relax % #1 is \relax
         \addcontentsline{loa}{algorithm}{\protect\numberline{\thealgorithm}##2}%
       \else % #1 is not \relax
         \addcontentsline{loa}{algorithm}{\protect\numberline{\thealgorithm}##1}%
       \fi
       \kern2pt\hrule\kern2pt
     }
  }{% \end{breakablealgorithm}
     \kern2pt\hrule\relax% \@fs@post for \@fs@ruled
   \end{center}
  }
\title{Towards Optimal Differentially Private Regret Bounds in Linear MDPs}
\NewDocumentCommand{\citetseparate}{>{\SplitList{,}}m}{%
  [\ProcessList{#1}{\citetseparatehelper}\unskip]%
}
\newcommand{\citetseparatehelper}[1]{%
  \hyperref[cite.#1]{\citeauthor{#1}}~\hyperref[cite.#1]{\citeyear{#1}},\hspace{0.5em}%
}
\renewcommand{\citet}[1]{\citeauthor{#1} [\citeyear{#1}]}
\newtheorem{definition}{Definition}
\numberwithin{definition}{section}
\newtheorem{lemma}{Lemma}
\numberwithin{lemma}{section}
\newtheorem{theorem}{Theorem}
\numberwithin{theorem}{section}
\newcommand\norm[1]{\left\lVert#1\right\rVert}
\author{
  Sharan Sahu \\
  Department of Statistics and Data Science\\
  Cornell University\\
  Ithaca, NY 14853 \\
  \texttt{ss4329@cornell.edu} \\
}
\begin{document}

\maketitle

\begin{abstract}
    Motivated by the recent adoption of reinforcement learning (RL) in personalized decision making that relies on using users' sensitive and private information, we study regret minimization in the episodic inhomogeneous linear Markov Decision Process (MDP) setting where the transition probabilities and reward functions are linear with respect to some feature mapping $\boldsymbol{\phi}(s, a)$ under the constraints of differential privacy (DP) and more specifically, a relaxation of DP that is compatible with online-learning settings called joint differential privacy (JDP). Prior work due to \citetseparate{luyo2021differentially} in this setting achieves a rate of $\tilde{O}(\sqrt{d^{3}H^{4}K} + H^{11/5}d^{8/5}K^{3/5} / \epsilon^{2/5})$ and was subsequently improved to $\tilde{O}(\sqrt{d^{3}H^{4}K} + H^{3}d^{5/4}K^{1/2} / \epsilon^{1/2})$ by \citetseparate{ngo2022improved}. This bounds rely on $\tilde{O}(\sqrt{d^{3}H^{4}K})$ dependence, the cost of non-private learning, that arises from the regret achieves by LSVI-UCB \citetseparate{jin2020provably}. Recently, \citetseparate{he2023nearlyminimaxoptimalreinforcement} proposed LSVI-UCB\textsuperscript{++}, a minimax optimal algorithm that achieves regret $\tilde{O}(d\sqrt{H^{3}K})$ for the episodic inhomogeneous linear MDP setting using weighted ridge regression and upper confidence value iteration with a Bernstein-type exploration bonus. Additionally, prior work primarily utilized Hoeffding-type bounds, which are easier to use in analysis but result in suboptimal regret bounds. \citetseparate{qiao2024offline} advanced this area by applying Bernstein-type bounds to more effectively control regret for linear MDPs in the offline setting. Inspired by these works, we design an RL algorithm with differential privacy guarantees in the linear MDP setting by privatizing LSVI-UCB\textsuperscript{++}, utilizing the techniques found in \citetseparate{qiao2024offline}. This algorithm achieves regret $\Tilde{O} \left( d\sqrt{H^3K} + H^{15/4}d^{7/6}K^{1/2} / \epsilon \right)$ which surpasses previous state-of-the-art algorithms for linear MDPs. We also find that theory and simulation suggest that the privacy guarantee comes at (almost) no drop in utility compared to the non-private counterpart. 
\end{abstract}

\section{Introduction}
Reinforcement Learning (RL) has started gaining traction in settings involving personalized decision-making such as precision medicine \citetseparate{yazzourh2024medical, liu2022deep}, user experience adaption \citetseparate{khamaj2024adapting}, recommender systems \citetseparate{afsar2022reinforcement}, and autonomous driving \citetseparate{sallab2017deep}. In such settings, agents learn reasonable policies by learning from potentially private and sensitive user feedback and data. For example, imagine a health-focused mobile application designed to help users adopt a healthier lifestyle by recommending daily activities tailored to their needs and goals. This agent learns an optimal policy by observing user feedback, such as completion rates of recommended activities, user satisfaction ratings, and other behavioral signals. This process inherently involves sensitive data—information that users may consider private, such as their age, weight, location, health habits, and physical activity levels. \citetseparate{hartley2023neural} recently showed that patient information can be memorized by agents even when it occurs on a single training data sample within the dataset. 

To safeguard users’ privacy, it’s essential to incorporate privacy-preserving mechanisms into the RL framework. Differential Privacy (DP) \citetseparate{dwork2006calibrating} has emerged as a rigorous mathematical notion of privacy in algorithms. The guarantee of a differentially private RL algorithm is that its behavior hardly changes when a single individual joins or leaves the dataset. It turns out for problems in this RL setting, the standard definition of DP is too stringent since it necessarily implies that in a setting where a user trusts an central agency with sensitive information in exchange for a service or recommendation, none of the agent’s recommendations could reveal information about the user. Completely eliminating information about a user's data would make it impossible for the agent to make useful recommendations or actions. 

We rely on a Joint Differential Privacy (JDP), a relaxed notion of DP \citetseparate{kearns2015robust}. JDP requires that if any single user changes their data, the information observed by all the other users cannot change substantially and has been adapted in the context of differentially private contextual bandits \citetseparate{DBLP:journals/corr/abs-1810-00068}. There has been a line of literature that attempts to tackle incorporating JDP into RL algorithms in the linear MDP setting. The first work that we are aware of is \citetseparate{luyo2021differentially} who privatize LSVI-UCB \citetseparate{jin2020provably} to get a regret bound of $\tilde{O}(\sqrt{d^{3}H^{4}K} + H^{11/5}d^{8/5}K^{3/5} / \epsilon^{2/5})$. This was subsequently improved to $\tilde{O}(\sqrt{d^{3}H^{4}K} + H^{3}d^{5/4}K^{1/2} / \epsilon^{1/2})$ by \citet{ngo2022improved} through more refined analysis. Both these works rely on self-normalizing martingale concentration bounds, notably Azuma-Hoeffding, for their regret analysis. This allows for the analysis to be simple but results in suboptimal regret bounds. 

Recently, \citetseparate{qiao2024offline} were able to apply self-normalized Bernstein-type martingale bounds with sharper analysis to more effectively control regret for linear MDPs in the offline setting. Additionally, \citetseparate{he2023nearlyminimaxoptimalreinforcement} proposed LSVI-UCB\textsuperscript{++}, a minimax optimal algorithm that achieves regret $\tilde{O}(d\sqrt{H^{3}K})$ for the episodic inhomogeneous linear MDP setting using weighted ridge regression and upper confidence value iteration with a Bernstein-type exploration bonus which improved on LSVI-UCB. 

\textbf{Our contributions}. Inspired by these works, we design an RL algorithm with differential privacy guarantees in the linear MDP setting by privatizing LSVI-UCB\textsuperscript{++}, utilizing the techniques found in \citetseparate{qiao2024offline}.
\begin{itemize}
    \item We propose the DP-LSVI-UCB\textsuperscript{++} algorithm, which achieves a regret bound of 
    $\tilde{O} \left( d\sqrt{H^3K} + H^{15/4}d^{7/6}K^{1/2} / \epsilon \right)$, surpassing the previous state-of-the-art bounds for linear MDPs under JDP constraints.

    \item LSVI-UCB\textsuperscript{++} framework, we integrate private mechanisms such as Gaussian noise and Gaussian Orthogonal Ensemble (GOE) perturbations of Gram matrices, enabling the preservation of privacy while maintaining strong utility guarantees.

    \item Our analysis employs Bernstein-type martingale concentration inequalities, unlike prior approaches relying on Hoeffding-type bounds, leading to tighter and more efficient regret guarantees.

    \item We provide empirical simulations that demonstrate the effectiveness of DP-LSVI-UCB\textsuperscript{++}, showcasing (almost) no drop in utility compared to its non-private counterpart across various privacy budgets.
\end{itemize}

\subsection{Related work}
\textbf{Tabular MDPs}: The intersection of DP and RL has been explored within the context of tabular MDPs. In these cases, DP is often achieved through privatization of visitation counts, which ensures that sensitive trajectory data remains protected. Under the constraint of JDP, \citetseparate{vietri2020private} designed PUCB by privatizing UBEV \citetseparate{dann2017unifying}, and \citetseparate{chowdhury2022differentially} devised Private-UCB-VI
by privatizing UCBVI (with bonus 1) \citetseparate{azar2017minimax}, an algorithm with a minimax optimal regret bound in the tabular MDP setting. However, these works primarily utilized Hoeffding-type bounds, which are easier to use in analysis but result in suboptimal regret bounds. \citetseparate{qiao2024offline} advanced this area by applying Bernstein-type bounds to more effectively control regret, and \citet{qiao2023near} designed DP-UCBVI by privatizing UCBVI with bonus 2 \citetseparate{azar2017minimax}. 

\textbf{Linear Mixture MDPs}: There has been some work in the Linear Mixture MDP setting. Under JDP, \citetseparate{luyo2021differentially} devised JDP-UCRL-VTR by privatizing UCRL-VTR \citetseparate{ayoub2020model} with a regret bound $\tilde{O} ( \sqrt{d^{2}H^{4}K} + H^{9/4}d^{3/4}K^{1/2} / \epsilon^{1/2} )$ where $K$ is the number of episodes. \citetseparate{zhou2022differentially} improved on this bound with Private-LinOpt-VI to guarantee JDP with a regret bound of $\tilde{O} ( \sqrt{d^{2}H^{4}K} + H^{5/2}d^{7/4}K^{1/2} / \epsilon^{1/2} )$. 

\textbf{Linear MDPs}: There has also been some work in linear MDPs. Under JDP, \citetseparate{luyo2021differentially} devised Privacy-Preserving LSVI-UCB Through Batching by privatizing LSVI-UCB \citetseparate{jin2020provably}, and achieved a regret bound of $\tilde{O}(\sqrt{d^{3}H^{4}K} + H^{11/5}d^{8/5}K^{3/5} / \epsilon^{2/5})$ by utilizing standard differential private techniques such as the binary tree mechanism  \citetseparate{DBLP:journals/corr/abs-1810-00068, 10.1145/1806689.1806787, 10.1145/2043621.2043626}  and Gaussian mechanism \citetseparate{10.1561/0400000042}. \citetseparate{ngo2022improved} improved on this bound, achieving $\tilde{O}(\sqrt{d^{3}H^{4}K} + H^{3}d^{5/4}K^{1/2} / \epsilon^{1/2})$ regret by utilizing an adaptive batching schedule to reduce the number of policy updates from polynomial in $K$ to $O(\log(K))$.

\section{Problem Setup}
\textbf{Markov Decision Process}. We will work with the episodic inhomogeneous finite horizon MDP $\mathcal{M} = \{ \mathcal{S}, \mathcal{A}, \{ \mathbb{P}_{h} \}_{h}, \{ r_{h} \}_{h} \}$ where $\mathcal{S}, \mathcal{A}$ is the state and action space respectively, $H \in \mathbb{Z}$ is the length of each episode, $\mathbb{P}_{h} : \mathcal{S} \times \mathcal{A} \rightarrow \Delta(\mathcal{S})$ and $r_{h} : \mathcal{S} \times \mathcal{A} \rightarrow [0, 1]$ are the time-dependent transition probability and deterministic reward function. We assume that $\mathcal{S}$ is a measurable space with possibly infinite number of elements and $\mathcal{A}$ is a finite set. In this setting, the policy is time-dependent and we denote this $\pi = \{ \pi_{1}, \cdots, \pi_{H}  \}$ where $\pi_{h}(s)$ denotes the action the policy takes in state $s$ at timestep $h$. With this, we define the time-dependent value function $V_{h}^{\pi} : \mathcal{S} \rightarrow \mathbb{R}$ as 

\[
    V_{h}^{\pi}(s) = \mathbb{E} \left[ \sum_{t = h}^{H} r_{t} (s_{t}, a_{t}) \mid s_{h} = s, a_{t} \sim \pi_{t}(s_{t}) \right]
\]

for any $s \in \mathcal{S}, \; h \in [H]$. Likewise, we can define the state-action function $Q_{h}^{\pi} : \mathcal{S} \times \mathcal{A} \rightarrow \mathbb{R}$ as 

\[
    Q_{h}^{\pi}(s, a) = \mathbb{E} \left[ \sum_{t = h}^{H} r_{t} (s_{t}, a_{t}) \mid s_{h} = s, \; a_{h} = a, \; a_{t} \sim \pi_{t}(s_{t}) \right]
\]

for any $s, a \in \mathcal{S} \times \mathcal{A}, \; h \in [H]$. Since we are working in a finite episode length and action space, we know that there exists an optimal policy $\pi^{*}$ such that $V_{h}^{*}(s) = \mathrm{sup}_{\pi} V_{h}^{\pi}(s)$ for any $s \in \mathcal{S}, \; h \in [H]$ with Bellman equations

\begin{align*}
    Q_{h}^{*}(s, a) &= r_{h}(s, a) + \mathbb{P}_{h}V_{h+1}^{*}(s, a) \\
    V_{h}^{*}(s, a) &= \mathrm{max}_{a \in \mathcal{A}} Q_{h}^{*}(s, a)
\end{align*}

where $\mathbb{P}_{h}V(s, a) = \mathbb{E}_{s^{\prime} \sim \mathbb{P}_{h}}\left( \cdot | s, a \right) V(s^{\prime})$. We measure the performance of online reinforcement learning algorithms by the regret. The regret of an algorithm is defined as
\[
    \mathrm{Regret}(K) = \sum_{k=1}^{K} \left[ V_{1}^{*}\left( s_{1}^{k} \right) - V_{1}^{\pi_{k}}\left( s_{1}^{k} \right) \right]
\]
where $s_{1}$ is the initial state and $\pi_{k}$ is the agent, both during episode $k$.

\textbf{Linear MDP \citetseparate{jin2020provably}}. A finite-horizon MDP $\mathcal{M} = \{ \mathcal{S}, \mathcal{A}, \{ \mathbb{P}_{h} \}_{h}, \{ r_{h} \}_{h} \}$ is a linear MDP with known feature map $\boldsymbol{\phi}: \mathcal{S} \times \mathcal{A} \rightarrow \mathbb{R}^{d}$ if for any $h \in [H]$, there exists $|\mathcal{S}|$ unknown $d$-dimensional measures $\boldsymbol{\mu}_{h} = \left( \mu_{h}(1), \cdots,  \mu_{h}(|\mathcal{S}|) \right) \in \mathbb{R}^{d \times |\mathcal{S}|}$ and an unknown vector $\boldsymbol{\theta}_{h} \in \mathbb{R}^{d}$ such that for any $(s, a) \in \mathcal{S} \times \mathcal{A}$, we have 

    \[
        \mathbb{P}_{h} \left( \cdot \mid s, a \right) = \langle \boldsymbol{\phi}(s, a), \boldsymbol{\mu}_{h}(\cdot) \rangle,  \; r_{h}(s, a) = \langle \boldsymbol{\phi}(s, a), \boldsymbol{\theta}_{h}(s, a) \rangle
    \]
Without loss of generality, we assume that $|| \boldsymbol{\phi}(s, a) ||_{2} < 1$ and $\mathrm{max} \left( ||\boldsymbol{\mu}_{h} \left( \mathcal{S} \right) ||_{2}, ||\boldsymbol{\theta}_{h}||_{2} \right) \leq \sqrt{d}$ for all $s, a, h \in \mathcal{S} \times \mathcal{A} \times [H]$. 

\subsection{Differential Privacy}
In this work, we are interested in providing a privacy-preserving RL algorithm that incorporates the rigorous notion
of differential privacy (DP). We first revisit the definition of differential privacy

\begin{definition}[Differential Privacy \citetseparate{10.1007/11681878_14}]
    A randomized mechanism $A$ satisfies $(\epsilon, \delta)$-differential privacy if for all neighboring datasets $\mathcal{U}, \mathcal{U}^{\prime}$ that differ by one record and for all event $E$ in the output range

    \[
        \mathbb{P} \left( A(\mathcal{U}) \in E \right) \leq e^{\epsilon} \mathbb{P} \left( A(\mathcal{U}^{\prime}) \in E \right) + \delta
    \]

    When $\delta = 0$, we say that our mechanism satisfies $\epsilon$-pure DP whereas for $\delta > 0$, we say our mechanism satisfies $(\epsilon, \delta)$-DP.
\end{definition}

As we discussed in the introduction, standard DP is too stringent of a framework to work in for the RL setting. Thus, we use JDP as a relaxed but still strong notion of privacy

\begin{definition}[Joint Differential Privacy \citetseparate{kearns2015robust}]
    For any $\epsilon > 0$, a randomized mechanism $A : \mathcal{U} \rightarrow \mathcal{A}^{KH}$ is $\epsilon$-joint differentially private if for any $k \in [K]$, any user sequences $\mathcal{U}, \mathcal{U}^{\prime}$ differing on the $k$-th user and any $E \subset \mathcal{A}^{(K-1)H}$

    \[
        \mathbb{P} \left( A_{-k}(\mathcal{U}) \in E \right) \leq e^{\epsilon} \mathbb{P} \left( A_{-k}(\mathcal{U}^{\prime}) \in E \right)
    \]

    where $A_{-k}(\mathcal{U}) \in E$ denotes the sequence of actions recommended to all users except user $k$ belong to the set $E$.  
\end{definition}

While we state our main results in terms of JDP, we will also use zero-Concentrated DP (zCDP) as a tool in our analysis, since it enables cleaner analysis for privacy composition and the Gaussian mechanism.

\begin{definition}[zCDP \citetseparate{DBLP:journals/corr/DworkR16, DBLP:journals/corr/BunS16}]
    A randomized mechanism $A$ satisfies $\rho$-Zero-Concentrated Differential Privacy ($\rho$-zCDP), if for all neighboring datasets $\mathcal{U}$, $\mathcal{U}^{\prime}$ and all $\alpha \in (1, \infty)$,

    \[
        D_{\alpha} \left( A \left( \mathcal{U} \right) || \; A \left( \mathcal{U}^{\prime} \right) \right) \leq \rho \alpha
    \]

    where $D_{\alpha}$ is the Renyi-divergence \citetseparate{DBLP:journals/corr/abs-1206-2459}
\end{definition}

Any algorithm that satisfies $\rho$-zCDP also satisfies approximate-DP. The following proposition from \citetseparate{DBLP:journals/corr/BunS16} shows how to do the mapping between zCDP and approximate-DP.

\begin{lemma}[Converting zCDP to DP \citetseparate{DBLP:journals/corr/BunS16}] 
\label{lem:zCDP-to-DP}
    If mechanism $A$ satisfies $\rho$-zCDP, then $A$ satisfies $\left( \rho + 2\sqrt{\rho \log \left( 1/\delta \right)}, \delta  \right)$-DP.
\end{lemma}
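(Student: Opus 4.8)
The plan is to route through the privacy loss random variable, turning the R\'enyi-divergence bound supplied by $\rho$-zCDP into an exponential tail bound and then into approximate DP. Fix neighboring datasets $\mathcal{U}, \mathcal{U}'$ and write $P = A(\mathcal{U})$, $Q = A(\mathcal{U}')$ for the output distributions. Define the privacy loss $Z = \log \frac{dP}{dQ}(o)$ evaluated at $o \sim P$. The first step is to identify the moment generating function of $Z$ with the R\'enyi divergence: a change-of-measure computation gives $\mathbb{E}_{o \sim P}\left[ e^{(\alpha - 1)Z} \right] = e^{(\alpha - 1)D_{\alpha}(P \| Q)}$ for every $\alpha > 1$, so the hypothesis $D_{\alpha}(P \| Q) \le \rho\alpha$ yields $\mathbb{E}_{o \sim P}\left[ e^{(\alpha - 1)Z} \right] \le e^{(\alpha - 1)\rho\alpha}$.

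The second step is a Chernoff bound on the tail of $Z$: for any threshold $\epsilon$ and any $\alpha > 1$, Markov's inequality applied to $e^{(\alpha-1)Z}$ gives $\mathbb{P}_{o \sim P}\left( Z > \epsilon \right) \le e^{-(\alpha - 1)\epsilon}\, \mathbb{E}_{o \sim P}\left[ e^{(\alpha - 1)Z} \right] \le e^{(\alpha - 1)(\rho\alpha - \epsilon)}$. The third step connects this tail to DP via a splitting argument. For any measurable event $E$, decompose $\mathbb{P}(A(\mathcal{U}) \in E)$ according to whether the realized privacy loss is at most $\epsilon$: on $\{ Z \le \epsilon \}$ the density ratio $\frac{dP}{dQ}$ is at most $e^{\epsilon}$, contributing at most $e^{\epsilon}\,\mathbb{P}(A(\mathcal{U}') \in E)$, while the mass on $\{ Z > \epsilon \}$ is bounded by $\mathbb{P}_{o \sim P}(Z > \epsilon)$. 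This yields $\mathbb{P}(A(\mathcal{U}) \in E) \le e^{\epsilon}\,\mathbb{P}(A(\mathcal{U}') \in E) + \mathbb{P}_{o \sim P}(Z > \epsilon)$, so it suffices to drive the tail probability below $\delta$.

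The final step is an optimization over the free parameter $\alpha$. Setting $e^{(\alpha - 1)(\rho\alpha - \epsilon)} = \delta$ and solving for $\epsilon$ gives $\epsilon = \rho\alpha + \frac{\log(1/\delta)}{\alpha - 1}$; minimizing the right-hand side over $\alpha > 1$ occurs at $\alpha^{*} = 1 + \sqrt{\log(1/\delta)/\rho}$, which substitutes back to give exactly $\epsilon = \rho + 2\sqrt{\rho \log(1/\delta)}$. Because the neighboring relation is symmetric, the same argument with $\mathcal{U}$ and $\mathcal{U}'$ interchanged supplies the reverse inequality with identical constants, so $A$ satisfies $\left( \rho + 2\sqrt{\rho \log(1/\delta)}, \delta \right)$-DP.

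I expect the main obstacle to be the rigorous treatment of the change-of-measure and splitting steps in a general (possibly continuous) output space, namely justifying the density ratio $\frac{dP}{dQ}$ and the absolute-continuity conditions under which the privacy loss is well defined; once those measure-theoretic details are handled, the MGF identity, the Chernoff tail, and the one-variable optimization are routine. The optimization itself is clean and is the part that produces the precise $2\sqrt{\rho\log(1/\delta)}$ constant claimed in the statement.
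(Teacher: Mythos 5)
Your proposal is correct, and it is essentially the argument behind the result as the paper uses it: the paper states this lemma as an imported citation to \citet{DBLP:journals/corr/BunS16} and gives no proof of its own, and your chain — the MGF identity $\mathbb{E}_{o \sim P}\bigl[ e^{(\alpha-1)Z} \bigr] = e^{(\alpha-1)D_{\alpha}(P\|Q)}$, the Chernoff tail, the split of $\mathbb{P}(A(\mathcal{U}) \in E)$ over $\{Z \le \epsilon\}$ and its complement, and the optimization at $\alpha^{*} = 1 + \sqrt{\log(1/\delta)/\rho}$ — is precisely the standard proof in that reference. The algebra checks out (the minimized value of $\rho\alpha + \log(1/\delta)/(\alpha-1)$ is indeed $\rho + 2\sqrt{\rho\log(1/\delta)}$), and the measure-theoretic caveat you flag is discharged by the zCDP hypothesis itself, since finiteness of $D_{\alpha}(P\|Q)$ for $\alpha > 1$ already forces absolute continuity of $P$ with respect to $Q$.
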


Another simple and important property of zCDP is that compositions of zCDP mechanisms is also zCDP and any post-processing will not affect the privacy guarantees.

\begin{lemma}[Adaptive composition and Post processing of zCDP \citetseparate{DBLP:journals/corr/BunS16}]
\label{lem:adaptive_comp}
    Let $A: \mathcal{X}^{n} \rightarrow \mathcal{Y}$ and $A^{\prime}:\mathcal{X}^{n} \times \mathcal{Y} \rightarrow \mathcal{Z}$. Suppose $A$ satisfies $\rho$-zCDP and $A^{\prime}$ satisfies $\rho^{\prime}$-zCDP. Define $A^{\prime \prime}: \mathcal{X}^{n} \rightarrow \mathcal{Z}$ to be $A^{\prime \prime} \left( x \right) = A^{\prime} \left(x, A(x) \right)$. Then, $A^{\prime \prime}$ is $\left( \rho + \rho^{\prime} \right)$-zCDP.
\end{lemma}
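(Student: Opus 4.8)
The plan is to work entirely through the Rényi divergence, since $\rho$-zCDP is by definition the statement that $D_\alpha(A(\mathcal{U}) \| A(\mathcal{U}')) \le \rho\alpha$ for every order $\alpha \in (1,\infty)$ and every pair of neighboring datasets. Thus it suffices to fix neighboring $x, x' \in \mathcal{X}^{n}$ and an order $\alpha > 1$, and to show that the composed output $A''(x) = A'(x, A(x))$ satisfies $D_\alpha(A''(x) \| A''(x')) \le (\rho + \rho')\alpha$. The natural object to control is not $A''$ directly but the \emph{joint} output $(A(x), A'(x, A(x)))$, whose distribution factors as a first-stage law times a conditional second-stage law; I will bound the divergence of the joint and then recover the claim by post-processing.

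First I would write the two joint laws $P(y,z) = P_{A(x)}(y)\, P_{A'(x,y)}(z)$ and $Q(y,z) = P_{A(x')}(y)\, P_{A'(x',y)}(z)$, and expand the quantity $\int\!\!\int P(y,z)^{\alpha} Q(y,z)^{1-\alpha}\,dz\,dy$ whose $\tfrac{1}{\alpha-1}\log$ defines $D_\alpha$. The key algebraic step is that the density factorization lets this double integral split: the inner integral over $z$ for each fixed $y$ is exactly $\exp\big((\alpha-1)\, D_\alpha(A'(x,y) \| A'(x',y))\big)$, so the whole expression becomes an outer integral against this conditional factor.

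Then I would bound the inner integral uniformly in $y$. Because $A'$ is $\rho'$-zCDP in its dataset argument for every fixed second argument $y$, we have $D_\alpha(A'(x,y)\|A'(x',y)) \le \rho'\alpha$, so the inner integral is at most $\exp((\alpha-1)\rho'\alpha)$ regardless of $y$. Pulling this uniform factor out leaves $\exp((\alpha-1)\rho'\alpha) \int P_{A(x)}(y)^{\alpha} P_{A(x')}(y)^{1-\alpha}\,dy = \exp((\alpha-1)\rho'\alpha)\exp((\alpha-1)D_\alpha(A(x)\|A(x')))$, and applying $\rho$-zCDP of $A$ bounds the remaining factor by $\exp((\alpha-1)\rho\alpha)$. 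Taking $\tfrac{1}{\alpha-1}\log$ yields $D_\alpha$ of the joint output at most $(\rho+\rho')\alpha$. Finally, since $A''(x)$ is the marginal (hence a post-processing) of the joint output, the data-processing inequality for Rényi divergence gives $D_\alpha(A''(x)\|A''(x')) \le (\rho+\rho')\alpha$, which is the claim. The separate post-processing statement is the special case $\rho' = 0$ in which $A'$ ignores its dataset argument.

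I expect the main obstacle to be the uniform-in-$y$ control of the conditional divergence: the factorization is routine, but one must argue carefully that the zCDP guarantee of $A'$ holds for every fixed realization $y$ of the first-stage output (this is exactly where adaptivity is used), so that the inner bound can be extracted from the outer integral before $\rho$-zCDP of $A$ is applied. A secondary technical point is justifying the density factorization and the data-processing inequality at the level of general, possibly continuous, output spaces rather than just discrete ones.
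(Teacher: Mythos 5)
Your proof is correct, and it is essentially the canonical argument: the paper itself offers no proof of this lemma (it is imported verbatim by citation from Bun and Steinke), and your factorize-then-marginalize argument --- splitting the joint law of $\left( A(x), A'(x, A(x)) \right)$, bounding the inner integral uniformly in $y$ by $\exp\left( (\alpha-1)\rho'\alpha \right)$ via the conditional zCDP guarantee of $A'$, bounding the outer integral by $\rho$-zCDP of $A$, and finishing with the data-processing inequality for R\'enyi divergence --- is precisely how the cited reference proves it. You also correctly identify the one point where adaptivity matters (the guarantee of $A'$ must hold for every fixed realization $y$, with second-stage randomness independent of the first), and your observation that post-processing is the $\rho' = 0$ special case is accurate.
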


To apply DP techniques to some mechanism, we must know the sensitivity of the function we want to release. Here we give the definition and the notation we use.

\begin{definition}[$l_{2}$-sensitivity]
    Let $\mathcal{U} \sim \mathcal{U}^{\prime}$ be neighboring datasets. Then the $l_{2}$-sensitivity of a function $f: \mathbb{N}^{\mathcal{X}} \rightarrow \mathbb{R}^{d}$ is 

    \[
        \Delta \left( f \right) = \mathrm{max}_{\mathcal{U} \sim \mathcal{U}^{\prime}} || f\left( \mathcal{U} \right) - f\left( \mathcal{U}^{\prime} \right)||_{2}
    \]
\end{definition}

In our analysis, we use the Gaussian mechanism:

\begin{lemma}[Privacy guarantee of Gaussian mechanism \citetseparate{10.1561/0400000042, DBLP:journals/corr/BunS16}]
\label{lem:gaussian_mech}
    Let $f: \mathbb{N}^{\mathcal{X}} \rightarrow \mathbb{R}^{d}$ be an arbitrary $d$-dimensional function with $l_{2}$ sensitivity $\Delta_{2}$. The Gaussian Mechanism $\mathcal{M}$ with noise level $\sigma$ is given by 

    \[
        \mathcal{M} \left(\mathcal{U} \right) = f \left( \mathcal{U} \right) + \mathcal{N}\left(0, \sigma^{2}I_{d} \right)
    \]

    For any $\rho > 0$, a Gaussian Mechanism with noise parameter $\sigma^{2} = \frac{\Delta_{2}^{2}}{2\rho}$ is $\rho$-zCDP. Additionally, for all $0 < \delta, \epsilon < 1$, a Gaussian Mechanism with noise parameter $\sigma = \frac{\Delta_{2}}{\epsilon} \sqrt{2 \log \left( \frac{1.25}{\delta} \right)}$ satisfies $(\epsilon, \delta)$-DP.
\end{lemma}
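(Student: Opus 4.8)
The plan is to prove the two assertions separately; each reduces to a distributional fact about the isotropic Gaussian combined with the sensitivity bound $\| f(\mathcal{U}) - f(\mathcal{U}') \|_2 \le \Delta_2$, which holds for every neighboring pair $\mathcal{U} \sim \mathcal{U}'$ by definition of $\Delta_2$. In both parts the central observation is that $\mathcal{M}(\mathcal{U})$ and $\mathcal{M}(\mathcal{U}')$ are the same Gaussian shape translated by $f(\mathcal{U}) - f(\mathcal{U}')$, so every divergence between them depends on the data only through the scalar $\| f(\mathcal{U}) - f(\mathcal{U}') \|_2$.

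For the $\rho$-zCDP claim I would invoke the closed form of the R\'enyi divergence between two Gaussians with common covariance: for all $\alpha \in (1, \infty)$,
\[
    D_{\alpha}\!\left( \mathcal{N}(\mu_0, \sigma^2 I_d) \,\|\, \mathcal{N}(\mu_1, \sigma^2 I_d) \right) = \frac{\alpha \, \| \mu_0 - \mu_1 \|_2^2}{2 \sigma^2}.
\]
Taking $\mu_0 = f(\mathcal{U})$ and $\mu_1 = f(\mathcal{U}')$ and bounding $\| \mu_0 - \mu_1 \|_2 \le \Delta_2$ yields $D_{\alpha}(\mathcal{M}(\mathcal{U}) \| \mathcal{M}(\mathcal{U}')) \le \alpha \Delta_2^2 / (2\sigma^2)$, and substituting $\sigma^2 = \Delta_2^2 / (2\rho)$ collapses the right-hand side to exactly $\rho \alpha$. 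By the definition of $\rho$-zCDP this is precisely the required bound, so this direction is immediate once the Gaussian R\'enyi identity is established (which itself is a short Gaussian-integral computation).

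For the $(\epsilon, \delta)$-DP claim I would argue through the privacy-loss random variable $L(x) = \log\big( p_{\mathcal{M}(\mathcal{U})}(x) / p_{\mathcal{M}(\mathcal{U}')}(x) \big)$ evaluated at $x \sim \mathcal{M}(\mathcal{U})$. Writing $v = f(\mathcal{U}) - f(\mathcal{U}')$ and $x = f(\mathcal{U}) + \sigma z$ with $z \sim \mathcal{N}(0, I_d)$, expanding the quadratic forms in the Gaussian densities gives $L = \| v \|_2^2 / (2\sigma^2) + \langle z, v \rangle / \sigma$, so $L \sim \mathcal{N}\!\big( \| v \|_2^2 / (2\sigma^2),\, \| v \|_2^2 / \sigma^2 \big)$. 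The standard reduction then shows that $(\epsilon, \delta)$-DP follows once $\Pr[L > \epsilon] \le \delta$: for any output event $E$ one splits $E$ into the part where $L \le \epsilon$ (on which $p_{\mathcal{M}(\mathcal{U})} \le e^{\epsilon} p_{\mathcal{M}(\mathcal{U}')}$ pointwise, giving the $e^{\epsilon}$ factor) and the part where $L > \epsilon$ (whose mass is at most $\delta$). It then remains to bound the Gaussian tail $\Pr[L > \epsilon]$, maximized over the worst-case direction $\| v \|_2 = \Delta_2$, using a standard sub-Gaussian tail estimate.

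The main obstacle is this last tail computation: one must verify that the specific choice $\sigma = (\Delta_2 / \epsilon)\sqrt{2 \log(1.25/\delta)}$ drives $\Pr[L > \epsilon]$ below $\delta$, which is where the constant $1.25$ and the restriction $0 < \epsilon < 1$ enter, since the bound uses $\epsilon < 1$ to control a lower-order term in the exponent. This is the constant-chasing calculation of the classical Gaussian mechanism analysis and carries no conceptual difficulty beyond careful bookkeeping. I note that one could instead derive an $(\epsilon, \delta)$ guarantee by combining the zCDP claim with Lemma~\ref{lem:zCDP-to-DP}, but that route yields a different (and for this regime looser) relation between $\sigma$, $\epsilon$, and $\delta$ and would not reproduce the stated $1.25$ constant, so the direct privacy-loss argument is preferable here.
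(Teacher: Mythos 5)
Your proposal is correct: the paper itself gives no proof of this lemma, stating it as a background result imported from the cited references, and your two arguments are precisely the standard proofs found there — the closed-form R\'enyi divergence between equal-covariance Gaussians for the $\rho$-zCDP claim (as in Bun--Steinke), and the Gaussian privacy-loss random variable with the tail/splitting argument for the $(\epsilon,\delta)$-DP claim (as in Dwork--Roth's classical analysis, where the constant $1.25$ and the restriction $\epsilon<1$ arise exactly as you describe). Your closing remark is also apt: routing through the paper's Lemma~\ref{lem:zCDP-to-DP} would give a valid but different $(\sigma,\epsilon,\delta)$ relation and would not recover the stated constant, so the direct argument is the right one.
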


Lastly, we use the following lemma to conclude that our algorithm is indeed joint differentially private

\begin{lemma}[Billboard lemma \citetseparate{DBLP:journals/corr/HsuHRRW13}]
\label{lem:billboard_lemma}
    Suppose that a randomized mechanism $A: \mathcal{X}^{n} \rightarrow \mathcal{Y}$ is $(\epsilon, \delta)$-differentially private. Let $U \in \mathcal{U}$ be a dataset containing $n$ users. Then, consider any set of functions $f_{i}: \mathcal{U}_{i} \times \mathcal{Y} \rightarrow \mathcal{Y}_{i}$ for $i \in [n]$ where $\mathcal{U}_{i}$ is the portion of the dataset containing user $i$'s data. Then, the composition $\left\{ f_{i} \left( \Pi_{i}\left(U \right), A \left( U \right)  \right) \right\}_{i \in [n]}$ is $(\epsilon, \delta)$-JDP where $\Pi: \mathcal{U} \rightarrow \mathcal{U}_{i}$ is the canonical projection to the $i$-th user's data.
\end{lemma}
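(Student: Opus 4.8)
The plan is to reduce the JDP guarantee to the post-processing immunity of standard $(\epsilon,\delta)$-DP applied to the billboard mechanism $A$. Fix an arbitrary user $k \in [n]$ and neighboring datasets $U \sim U'$ differing only on user $k$'s record. The object we must control is the joint distribution of the outputs handed to every user other than $k$, namely the tuple $\left\{ f_i\left( \Pi_i(U), A(U) \right) \right\}_{i \neq k}$, and the goal is to show its distribution changes by at most the $(\epsilon,\delta)$ factor when $U$ is replaced by $U'$.

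First I would isolate the central structural fact: for every $i \neq k$, the projection satisfies $\Pi_i(U) = \Pi_i(U')$, since $U$ and $U'$ agree on all records except user $k$'s. Consequently, defining the map $g(y) := \left\{ f_i\left( \Pi_i(U), y \right) \right\}_{i \neq k}$, we see that $g$ depends only on data shared by $U$ and $U'$, so it is literally the same (possibly randomized) map whether the underlying dataset is $U$ or $U'$. With this notation the quantity of interest is exactly $g(A(U))$ versus $g(A(U'))$.

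Next I would invoke post-processing. Since $A$ is $(\epsilon,\delta)$-DP and $g$ is a randomized function whose internal randomness is independent of $A$, the composition $g \circ A$ is again $(\epsilon,\delta)$-DP with respect to the pair $(U, U')$; that is, for every event $E \subseteq \prod_{i \neq k}\mathcal{Y}_i$ (instantiated as $\mathcal{A}^{(K-1)H}$ in the RL setting),
\[
    \mathbb{P}\left( g(A(U)) \in E \right) \leq e^{\epsilon}\, \mathbb{P}\left( g(A(U')) \in E \right) + \delta.
\]
Because $g(A(U)) = \left\{ f_i\left( \Pi_i(U), A(U) \right) \right\}_{i \neq k}$ is precisely the vector of outputs received by all users except $k$, this is the defining inequality of $(\epsilon,\delta)$-JDP. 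Since $k$ and the neighboring pair were arbitrary, the conclusion follows.

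The step I expect to require the most care is the second one: making rigorous that $g$ is a single fixed map independent of the changed record. This hinges on the billboard architecture, in which every $f_i$ consumes only user $i$'s own slice $\Pi_i(U)$ together with the common public output $A(U)$, and never any other user's raw data. If the $f_i$ were allowed to read user $k$'s record directly, then $g$ would differ between $U$ and $U'$ and the post-processing argument would break, so verifying that the hypotheses forbid this is the crux. The remaining subtlety, that the $f_i$ may themselves be randomized, is handled by noting that post-processing immunity of DP holds for randomized maps provided their randomness is drawn independently of the mechanism's input, which we may assume by conditioning on the external random seeds.
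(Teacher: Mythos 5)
Your proof is correct. Note that the paper never proves Lemma~\ref{lem:billboard_lemma} at all---it imports the result from Hsu et al. as a black-box tool---and your argument (fix user $k$ and neighbors $U \sim U'$ differing on that user, observe $\Pi_i(U)=\Pi_i(U')$ for every $i \neq k$ so that $g(y)=\{f_i(\Pi_i(U),y)\}_{i\neq k}$ is one and the same randomized map under either dataset, then apply post-processing immunity of $(\epsilon,\delta)$-DP to $g \circ A$) is precisely the standard proof of the billboard lemma given in that cited reference, including your correct identification of the crux that no $f_i$ may read any other user's raw data.
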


\section{Main results}
We now introduce our RL algorithm for linear MDPs with a JDP guarantee. We will first revisit the non-private version of LSVI-UCB\textsuperscript{++} proposed by \citetseparate{he2023nearlyminimaxoptimalreinforcement} and then we will propose our algorithm along with the techniques used to privatize LSVI-UCB\textsuperscript{++} with a desirable privacy-accuracy tradeoff. 

\textbf{LSVI-UCB\textsuperscript{++}}. To estimate the parameter \( \mu_h \) in linear MDPs, the LSVI-UCB\textsuperscript{++} algorithm employs a weighted ridge regression approach:
\[
\Lambda_{k, h} = \lambda I + \sum_{i=1}^{k-1} \Bar{\sigma}_{i,h}^{-2} \phi(s_h^i, a_h^i) \phi(s_h^i, a_h^i)^\top,
\]
\[
\widehat{w}_{k, h} = (\Lambda_{k, h})^{-1} \sum_{i=1}^{k-1} \Bar{\sigma}_{i,h}^{-2} \phi(s_h^i, a_h^i) \widehat{V}_{k, h+1}(s_{h+1}^{i}),
\]
\[
\widecheck{w}_{k, h} = (\Lambda_{k, h})^{-1} \sum_{i=1}^{k-1} \Bar{\sigma}_{i,h}^{-2} \phi(s_h^i, a_h^i) \widecheck{V}_{k, h+1}(s_{h+1}^{i}),
\]
\[
\overline{w}_{k, h} = \Tilde{\Lambda}_{k, h}^{-1}  \sum_{i=1}^{k-1} \Bar{\sigma}_{i, h}^{-2} \phi \left( s_{h}^{i}, a_{h}^{i} \right)\widehat{V}_{k, h+1}(s_{h+1}^{i})^{2}
\]
where \( \Bar{\sigma}_{i,h} \) represents the variance of the optimal value function and is updated iteratively. This weighting by variance improves estimation accuracy by incorporating information about uncertainty. The optimistic and pessimistic value functions are updated as:
\[
\widehat{Q}_{k, h}(s, a) = r_h(s, a) + \widehat{w}_{k, h}^\top \phi(s, a) + \widehat{\beta} \|\phi(s, a)\|_{\Lambda_{k, h}^{-1}},
\]
\[
\widecheck{Q}_{k, h}(s, a) = r_h(s, a) + \widecheck{w}_{k, h}^\top \phi(s, a) - \widecheck{\beta} \|\phi(s, a)\|_{\Lambda_{k, h}^{-1}},
\]
with the corresponding state-value functions:
\[
\widehat{V}_{k, h}(s) = \max_{a \in \mathcal{A}} \widehat{Q}_{k, h}(s, a), \quad \widecheck{V}_{k, h}(s) = \max_{a \in \mathcal{A}} \widecheck{Q}_{k, h}(s, a).
\]
Here, \(\widehat{\beta}\) and \(\widecheck{\beta}\) determine the exploration bonuses, designed using Bernstein-type bounds. To ensure that the variance used for weighting in the ridge regression remains stable and avoids underestimation, we define a regularized variance for weighing:

\[
    \Bar{\sigma}_{k,h} = \max\{\sigma_{k,h}, H, 2d^3H^2\|\phi(s_k^h, a_k^h)\|_{\Lambda_{k,h}^{-1}}^{1/2}\}
\]

where the estimated variance $\sigma_{k,h}$ of the state-value function is 
\[
    \sigma_{k, h} = \sqrt{\overline{\mathbb{V}}_{k, h}\widehat{V}_{k, h+1} (s_{h}^{k}, a_{h}^{k}) + E_{k, h} + D_{k, h} + H}
\]
where we estimate the variance itself as 
\[
    \overline{\mathbb{V}}_{k, h}\widehat{V}_{k, h+1} (s_{h}^{k}, a_{h}^{k}) = \left[ \overline{w}_{k, h}^{\top} \phi \left( s_{h}^{k}, a_{h}^{k} \right) \right]_{[0, H^{2}]} - \left[ \widehat{w}_{k, h}^{\top} \phi \left( s_{h}^{k}, a_{h}^{k} \right) \right]_{[0, H]}^{2}
\]
Here, $E_{k,h}$ is the error between the estimated variance and
the true variance of $V_{k,h+1}$, and $D_{k,h}$ is the error between the variance of $V_{k,h+1}$ and the variance of the optimal value function $V_{h}^{*}$ \footnote{For more details about the LSVI-UCB\textsuperscript{++}, refer to \citetseparate{he2023nearlyminimaxoptimalreinforcement}}. When privatizing LSVI-UCB\textsuperscript{++}, we aim to privatize the individual statistics involved in making our final estimate $\widehat{w}_{k, h}$. 

\textbf{Private Model Components}. In order to ensure differential privacy, the technique that we commonly employ in differential privacy is to cleverly add noise such that we achieve $\rho$-zCDP, but we also have utility of the specific statistics i.e. the private statistic is close to the non-private statistic with high probability. We add independent Gaussian noise to the $4HK$ statistics in DP-LSVI-UCB\textsuperscript{++} (Algorithm~\ref{alg:dp-lsvi_ucb++}). Then, by the adaptive composition of zCDP (Lemma~\ref{lem:adaptive_comp}), it suffices to ensure that each statistic is $\rho_{0}$-zCDP where $\rho_{0} = \frac{\rho}{4HK}$. In particular, in DP-LSVI-UCB\textsuperscript{++}, we utilize $\phi_{1}, \phi_{2}, \phi_{3}, K_{1}$ to denote the noise that we add. For all $\phi_{i}$, we simply utilize the Gaussian Mechanism (Lemma~\ref{lem:gaussian_mech}). For $K_{1}$, we utilize a recent result by \citetseparate{DBLP:journals/corr/abs-2111-02281} to release the Gram matrix using the GOE perturbations of the form $\frac{1}{\sqrt{2}} \left( Z + Z^{\top} \right)$. We also add $2\Tilde{\lambda}_{\Lambda}$ to ensure that $K_{1}$ remains positive definite as the noise added violates this condition which we require for invertibility. In past literature, many resort to using a binary tree mechanism for privatizing the Gram matrix by recursively partitioning and privatizing partial sums. We find that privatization through GOE is better suited for this setting as it directly exploits their symmetry, yielding tighter utility bounds for the same privacy guarantees. We now present DP-LSVI-UCB\textsuperscript{++} (Algorithm~\ref{alg:dp-lsvi_ucb++})

\begin{breakablealgorithm}
  \caption{DP-LSVI-UCB\textsuperscript{++}}
  \label{alg:dp-lsvi_ucb++}
  \begin{algorithmic}[1]
\REQUIRE Confidence radius $\widehat{\beta}$, $\widecheck{\beta}$, $\Tilde{\beta}$, Budget for zCDP $\rho$, Failure probability $\delta$
\STATE Set $\rho_{0} \leftarrow \frac{\rho}{4HK}$. Sample $\phi_{1}, \phi_{2} \sim \mathcal{N} \left(0, \frac{2H^{2}}{\rho_{0}}I_{d} \right)$, $\phi_{3} \sim \mathcal{N} \left( 0, \frac{2H^{4}}{\rho_{0}}I_{d} \right)$, $K_{1} \leftarrow \frac{1}{\sqrt{2}} \left( Z + Z^{\top} \right)$ where $Z_{i, j} \sim \mathcal{N} \left( 0, \frac{1}{4\rho_{0}} \right)$, $\Tilde{\lambda}_{\Lambda} = O \left( \sqrt{\frac{dHK}{\rho}} \right)$. Initialize $k_{\text{last}} = 0$ and for each stage $h \in [H]$, set $\Tilde{\Lambda}_{0,h}, \Tilde{\Lambda}_{1,h} \leftarrow 2\Tilde{\lambda}_{\Lambda} I$
\STATE For each stage $h \in [H]$ and state-action $(s, a) \in S \times A$, set $\widehat{Q}_{0,h}(s, a) \leftarrow H$, $\widecheck{Q}_{0,h}(s, a) \leftarrow 0$
\FOR{episodes $k = 1, \ldots, K$}
    \STATE Receive the initial state $s_k^1$
    \FOR{stage $h = H, \ldots, 1$}
        \STATE $\Tilde{\widehat{w}}_{k, h} \leftarrow \Tilde{\Lambda}_{k, h}^{-1} \left[ \sum_{i=1}^{k-1} \Tilde{\Bar{\sigma}}_{i, h}^{-2} \phi \left( s_{h}^{i}, a_{h}^{i} \right)\Tilde{\widehat{V}}_{k, h+1}(s_{h+1}^{i}) + \phi_{1} \right]$
        \STATE $\Tilde{\widecheck{w}}_{k, h} \leftarrow \Tilde{\Lambda}_{k, h}^{-1} \left[ \sum_{i=1}^{k-1} \Tilde{\Bar{\sigma}}_{i, h}^{-2} \phi \left( s_{h}^{i}, a_{h}^{i} \right)\Tilde{\widecheck{V}}_{k, h+1}(s_{h+1}^{i}) + \phi_{2} \right]$
        \STATE $\Tilde{\overline{w}}_{k, h} \leftarrow \Tilde{\Lambda}_{k, h}^{-1} \left[ \sum_{i=1}^{k-1} \Tilde{\Bar{\sigma}}_{i, h}^{-2} \phi \left( s_{h}^{i}, a_{h}^{i} \right)\Tilde{\widehat{V}}_{k, h+1}(s_{h+1}^{i})^{2} + \phi_{3} \right]$
        \STATE $\overline{\mathbb{V}}_{k, h}\Tilde{\widehat{V}}_{k, h+1} (s_{h}^{k}, a_{h}^{k}) \leftarrow \left[ \Tilde{\overline{w}}_{k, h}^{\top} \phi \left( s_{h}^{k}, a_{h}^{k} \right) \right]_{[0, H^{2}]} - \left[ \Tilde{\widehat{w}}_{k, h}^{\top} \phi \left( s_{h}^{k}, a_{h}^{k} \right) \right]_{[0, H]}^{2}$
        \IF{there exists a stage $h' \in [H]$ such that $\det(\Tilde{\Lambda}_{k,h'}) \geq 2 \det(\Tilde{\Lambda}_{k_{\text{last}},h'})$}
        \label{alg:det-condition}
            \STATE $\Tilde{\widehat{Q}}_{k, h}(s, a) \leftarrow \mathrm{min} \left\{r_{h}(s, a) +  \Tilde{\widehat{w}}_{k, h}^{\top}\phi \left( s, a \right) + \widehat{\beta} ||\phi \left(s, a \right)||_{\Tilde{\Lambda}_{k,h}^{-1}}, \Tilde{\widehat{Q}}_{k-1, h} \left(s, a \right), H  \right\}$
            \STATE $\Tilde{\widecheck{Q}}_{k, h}(s, a) \leftarrow \mathrm{min} \left\{r_{h}(s, a) +  \Tilde{\widecheck{w}}_{k, h}^{\top}\phi \left( s, a \right) + \widecheck{\beta} ||\phi \left(s, a \right)||_{\Tilde{\Lambda}_{k,h}^{-1}}, \Tilde{\widecheck{Q}}_{k-1, h} \left(s, a \right), 0  \right\}$
            \STATE $k_{\text{last}} \leftarrow k$
        \ELSE
            \STATE $\Tilde{\widehat{Q}}_{k, h}(s, a) = \Tilde{\widehat{Q}}_{k-1, h}(s, a)$
            \STATE $\Tilde{\widecheck{Q}}_{k, h}(s, a) = \Tilde{\widecheck{Q}}_{k-1, h}(s, a)$
        \ENDIF
        \STATE $\Tilde{\widehat{V}}_{k,h}(s) = \max_{a \in \mathcal{A}} \Tilde{\widehat{Q}}_{k, h}(s, a)$
        \STATE $\Tilde{\widecheck{V}}_{k,h}(s) = \max_{a \in \mathcal{A}} \Tilde{\widecheck{Q}}_{k, h}(s, a)$
        \STATE $\Tilde{\pi}_{k, h}(s) = \mathrm{argmax}_{a \in \mathcal{A}} \Tilde{\widehat{Q}}_{k, h}(s, a)$
    \ENDFOR
    \FOR{stage $h = 1, \ldots, H$}
        \STATE Take action $a_k^h \leftarrow \arg\max_a Q_{k,h}(s_k^h, a)$
        \STATE $\Tilde{\sigma}_{k, h} \leftarrow \sqrt{\overline{\mathbb{V}}_{k, h}\Tilde{\widehat{V}}_{k, h+1} (s_{h}^{k}, a_{h}^{k}) + E_{k, h} + D_{k, h} + H}$
        \STATE $\Tilde{\Bar{\sigma}}_{k,h} \leftarrow \max\{\Tilde{\sigma}_{k,h}, H, 2d^3H^2\|\phi(s_k^h, a_k^h)\|_{\Tilde{\Lambda}_{k,h}^{-1}}^{1/2}\}$
        \STATE $\Tilde{\Lambda}_{k+1,h} = \Tilde{\Lambda}_{k,h} + \Tilde{\Bar{\sigma}}_{k,h}^{-2} \phi(s_k^h, a_k^h)\phi(s_k^h, a_k^h)^\top + K_{1}$
        \STATE Receive next state $s_k^{h+1}$
    \ENDFOR
\ENDFOR
\end{algorithmic}
\end{breakablealgorithm}

If one looks at the LSVI-UCB\textsuperscript{++} algorithm and compares it to our algorithm, one will notice that this algorithm is very similar except instead of using the raw statistics, we replace them with private ones as we described above. Since our algorithms are the same, most of the analysis carried out will be similar except that we will use the utility of the privatized statistics. We now present the privacy guarantee of DP-LSVI-UCB\textsuperscript{++}:

\begin{theorem}[Privacy Guarantee]
    \label{thm:privacy}
    DP-LSVI-UCB\textsuperscript{++} (Algorithm~\ref{alg:dp-lsvi_ucb++}) satisfies $(\epsilon, \delta^{\prime})$-JDP.
\end{theorem}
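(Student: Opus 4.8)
The plan is to establish privacy in four stages: (i) bound the $\ell_2$-sensitivity of each privatized statistic, (ii) invoke the Gaussian and GOE mechanisms to certify that each release is $\rho_0$-zCDP, (iii) compose over all $4HK$ releases and convert the resulting zCDP guarantee to approximate DP, and (iv) apply the Billboard lemma to upgrade this to JDP. The conceptual backbone is that every quantity a user ever observes---namely the recommended actions $a_k^h = \arg\max_a \tilde{\widehat{Q}}_{k,h}(s_k^h,a)$---is a deterministic post-processing of the shared, privatized statistics (the noisy weighted sums $\tilde{\widehat{w}}_{k,h}, \tilde{\widecheck{w}}_{k,h}, \tilde{\overline{w}}_{k,h}$ and the noisy matrices $\tilde{\Lambda}_{k,h}$) together with that user's own state. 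Hence it suffices to show that the collection of released statistics is $(\epsilon,\delta')$-DP as a function of the user sequence $\mathcal{U}$.

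First I would fix a single release and compute its sensitivity under the neighboring relation in which one user (equivalently, one trajectory, say user $j$) is changed. The key structural observation is that the episode-$k$ sums range only over $i = 1,\dots,k-1$, so user $j$ affects the stage-$h$ sum only through the single summand $i=j$. Conditioning on all previously released (already privatized) statistics---which determine the value functions $\tilde{\widehat{V}}, \tilde{\widecheck{V}}$, the matrices $\tilde{\Lambda}_{i,h}$, and hence the weights $\tilde{\bar{\sigma}}_{i,h}$ for $i\neq j$---changing user $j$ perturbs exactly one term. Using $\|\phi\|_2 \le 1$, the value bounds $\tilde{\widehat{V}},\tilde{\widecheck{V}}\in[0,H]$ and $\tilde{\widehat{V}}^2\in[0,H^2]$, and the floor $\tilde{\bar{\sigma}}_{i,h}\ge H \ge 1$ (so $\tilde{\bar{\sigma}}_{i,h}^{-2}\le 1$), the per-release $\ell_2$-sensitivities are $O(H)$ for $\tilde{\widehat{w}}_{k,h}$ and $\tilde{\widecheck{w}}_{k,h}$ and $O(H^2)$ for $\tilde{\overline{w}}_{k,h}$. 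One then checks that the prescribed variances $\tfrac{2H^2}{\rho_0}$ and $\tfrac{2H^4}{\rho_0}$ satisfy the calibration $\sigma^2 = \Delta_2^2/(2\rho_0)$ of Lemma~\ref{lem:gaussian_mech}, so each of these three releases is $\rho_0$-zCDP. For the Gram-matrix release, the rank-one increment $\tilde{\bar{\sigma}}_{k,h}^{-2}\phi\phi^\top$ has Frobenius sensitivity at most $\|\phi\|_2^2 \le 1$; the GOE perturbation $\tfrac{1}{\sqrt{2}}(Z+Z^\top)$ with $Z_{ij}\sim\mathcal{N}(0,\tfrac{1}{4\rho_0})$ then yields $\rho_0$-zCDP by the symmetric-matrix mechanism of \citet{DBLP:journals/corr/abs-2111-02281}, and the deterministic shift $+2\tilde{\lambda}_{\Lambda} I$ (which guarantees positive-definiteness) is post-processing and does not affect privacy.

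Next I would compose. There are $4HK$ privatized statistics in total, each $\rho_0$-zCDP with $\rho_0 = \rho/(4HK)$; because the input to each release depends on the noisy outputs of earlier releases, I invoke the adaptive composition property (Lemma~\ref{lem:adaptive_comp}) rather than ordinary composition, obtaining that the entire sequence of released statistics is $4HK\cdot\rho_0 = \rho$-zCDP. Converting via Lemma~\ref{lem:zCDP-to-DP} shows the released statistics satisfy $\big(\rho + 2\sqrt{\rho\log(1/\delta')},\,\delta'\big)$-DP; choosing $\rho$ as a function of the target $\epsilon$ gives $(\epsilon,\delta')$-DP. Finally, since each user's recommended action sequence is obtained by applying a per-user post-processing map to the common $(\epsilon,\delta')$-DP statistics and that user's private state, the Billboard lemma (Lemma~\ref{lem:billboard_lemma}) yields that the joint map sending $\mathcal{U}$ to all users' actions is $(\epsilon,\delta')$-JDP, which is the claim.

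The step I expect to be the main obstacle is the sensitivity analysis under the data-adaptive weighting. The weights $\tilde{\bar{\sigma}}_{i,h}$ and the value functions both depend on the data through previously released statistics, so I must argue carefully that, conditioned on those releases, changing one user leaves all summands except its own intact; this is precisely what legitimizes treating each release as a standard Gaussian mechanism and chaining them with adaptive composition. A secondary technical point is verifying that the symmetric GOE mechanism certifies zCDP at the claimed noise level for the Frobenius sensitivity of the rank-one updates, and that $\tilde{\lambda}_{\Lambda} = O(\sqrt{dHK/\rho})$ is large enough to absorb the extreme negative eigenvalues of the accumulated GOE noise so that every $\tilde{\Lambda}_{k,h}$ remains invertible---though the latter is needed mainly for utility and enters the privacy argument only through post-processing.
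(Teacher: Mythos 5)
Your proposal is correct and follows essentially the same route as the paper's proof: bound the $\ell_2$-sensitivity of each of the $4HK$ released statistics (getting $O(H)$ for the value-weighted sums and $O(H^2)$ for the squared-value sum), certify each release as $\rho_0$-zCDP via the Gaussian mechanism and the GOE mechanism of \citet{DBLP:journals/corr/abs-2111-02281}, apply adaptive composition (Lemma~\ref{lem:adaptive_comp}) and the zCDP-to-DP conversion (Lemma~\ref{lem:zCDP-to-DP}), and finish with the Billboard lemma (Lemma~\ref{lem:billboard_lemma}). Your additional care about conditioning on previously released statistics to justify the single-summand sensitivity argument under data-adaptive weights, and your observation that the $+2\tilde{\lambda}_{\Lambda}I$ shift is mere post-processing, are refinements of points the paper treats implicitly, not departures from its argument.
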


\begin{proof}[Proof of Theorem~\ref{thm:privacy}]
For the full proof, refer to Appendix~\ref{appendix:Privacy Proofs}, particularly Theorem~\ref{appendix:privacy-guarantee}. Note that we use $\delta^{\prime}$ to distinguish between the $\delta^{\prime}$ failure probability of the JDP-mechanism and the $\delta$ high probability bounds we get in our regret analysis. At a high level, we first compute the sensitivity of our privatized statistics. With these sensitivities, we simply use a Gaussian mechanism with sufficient noise using Lemma~\ref{lem:gaussian_mech}. Doing this allows us to show that each of our privatized statistics is $\rho_{0}$-zCDP so by advanced composition (Lemma~\ref{lem:adaptive_comp}), we can conclde that DP-LSVI-UCB\textsuperscript{++} is $\rho$-zCDP. Using Lemma~\ref{lem:zCDP-to-DP}, we can show Algorithm~\ref{alg:dp-lsvi_ucb++} is $(\epsilon, \delta^{\prime})$-DP. Finally, since the actions sent to each user depends on a function constructed with DP and their private data only, we can conclude that DP-LSVI-UCB\textsuperscript{++} is $(\epsilon, \delta^{\prime})$-JDP by the Billboard Lemma (Lemma~\ref{lem:billboard_lemma}).
\end{proof}

\begin{theorem}
    \label{theorem:regret-bound}
    For any linear MDP $\mathcal{M}$, if we set the confidence radii $\widehat{\beta}$, $\widecheck{\beta}$, $\overline{\beta}$ as follows:

\[
    \widehat{\beta} = O\left(HL\sqrt{d \Tilde{\lambda}_{\Lambda}} + \sqrt{d^3 H^2 \log^2 \left(\frac{HK^{4}L^{2}d}{\delta \Tilde{\lambda}_{\Lambda}}\right)} \right),
\]

\[
    \widecheck{\beta} = O\left(HL\sqrt{d \Tilde{\lambda}_{\Lambda}} + \sqrt{d^3 H^2 \log^2 \left(\frac{HK^{4}L^{2}d}{\delta \Tilde{\lambda}_{\Lambda}}\right)} \right),
\]
\[
\bar{\beta} = O\left(H^{2}L^{2} \sqrt{d\Tilde{\lambda}_{\Lambda}} + \sqrt{d^3 H^4 \log^2 \left(\frac{HK^{4}L^{2}d}{\delta \Tilde{\lambda}_{\Lambda}}\right)} \right),
\]
then with high probability of at least $1 - 7\delta$, the regret of DP-LSVI-UCB\textsuperscript{++} is upper bounded as follows:

\[
\text{Regret}(K) \leq \Tilde{O} \left( d\sqrt{H^3K} + \frac{H^{15/4}d^{7/6}K^{1/2} \log(1 / \delta^{\prime})}{\epsilon} \right)
\]

In addition, the number of updates for $\Tilde{\widehat{Q}}_{k,h}$ and $\Tilde{\widecheck{Q}}_{k,h}$ is upper bounded by $O(dH \log(1 + K/d\Tilde{\lambda}_{\Lambda}))$.
\end{theorem}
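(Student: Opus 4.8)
The plan is to exploit the structural identity between DP-LSVI-UCB\textsuperscript{++} (Algorithm~\ref{alg:dp-lsvi_ucb++}) and the non-private LSVI-UCB\textsuperscript{++} of \citet{he2023nearlyminimaxoptimalreinforcement}: every per-episode statistic is replaced by a privatized version, so the regret argument should split into a part inherited from the non-private analysis, which yields the leading $d\sqrt{H^3K}$ term, and a correction governed by the \emph{utility} of the privatized statistics, which yields the $\epsilon$-dependent term. First I would establish high-probability utility bounds for each injected perturbation: for the Gaussian noises $\phi_1,\phi_2,\phi_3$ the norms concentrate as $\norm{\phi_i}_2 = O(\sigma_i\sqrt{d\log(1/\delta)})$ with variances $\sigma_i^2 \propto H^2/\rho_0$ or $H^4/\rho_0$ and $\rho_0 = \rho/(4HK)$, while the GOE matrix $K_1$ has operator norm of order $\sqrt{d/\rho_0}$ by standard bounds on the spectrum of $\frac{1}{\sqrt2}(Z+Z^\top)$. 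Adding $2\tilde{\lambda}_\Lambda I$ with $\tilde{\lambda}_\Lambda = O(\sqrt{dHK/\rho})$ dominates the accumulated GOE perturbations so that $\tilde{\Lambda}_{k,h}$ stays positive definite with minimum eigenvalue of order $\tilde{\lambda}_\Lambda$ throughout; this is precisely the role of the inflated regularizer.

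The core of the argument is the optimism/pessimism sandwich. I would show, by backward induction over $h$ and a self-normalized Bernstein-type martingale concentration adapted to the weighted design $\tilde{\Lambda}_{k,h}$, that the stated radii $\widehat{\beta},\widecheck{\beta},\bar{\beta}$ guarantee $\tilde{\widecheck{V}}_{k,1}(s_1^k) \leq V_1^*(s_1^k) \leq \tilde{\widehat{V}}_{k,1}(s_1^k)$ with high probability. Each radius is engineered to absorb two errors at once: the statistical fluctuation $\sqrt{d^3H^2\log^2(\cdot)}$ of the weighted ridge estimator, and the privatization/regularization bias $HL\sqrt{d\tilde{\lambda}_\Lambda}$ induced by the injected noise and the enlarged regularizer. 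Given the sandwich, the per-episode suboptimality is at most $\tilde{\widehat{V}}_{k,1}(s_1^k) - \tilde{\widecheck{V}}_{k,1}(s_1^k)$, which I would unroll through the Bellman recursion into a telescoping sum of bonuses $(\widehat{\beta}+\widecheck{\beta})\norm{\phi(s_h^k,a_h^k)}_{\tilde{\Lambda}_{k,h}^{-1}}$ plus a lower-order martingale remainder.

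The final summation is where the two terms separate. Applying Cauchy--Schwarz together with a variance-weighted elliptical potential bound for $\sum_{k}\tilde{\bar\sigma}_{k,h}^{-2}\norm{\phi}_{\tilde{\Lambda}_{k,h}^{-1}}^2$ and the law-of-total-variance estimate $\sum_h \overline{\mathbb{V}}_{k,h} = O(H^2)$, the statistical part of the bonus collapses to the minimax-optimal $d\sqrt{H^3K}$, as in the non-private analysis. The bias part is proportional to $\sqrt{\tilde{\lambda}_\Lambda}$ and therefore inherits the $\tilde{\lambda}_\Lambda = O(\sqrt{dHK/\rho})$ scaling; substituting this, converting the zCDP budget to $(\epsilon,\delta')$-DP via Lemma~\ref{lem:zCDP-to-DP} (so that $\rho \approx \epsilon^2/\log(1/\delta')$), and tracking exponents carefully produces the claimed $H^{15/4}d^{7/6}K^{1/2}\log(1/\delta')/\epsilon$ term. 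The update-count bound is separate: the lazy determinant-doubling rule on line~\ref{alg:det-condition} lets stage $h$ trigger a policy switch only when $\det(\tilde{\Lambda}_{k,h})$ doubles, and the determinant potential argument caps this at $O(d\log(1+K/(d\tilde{\lambda}_\Lambda)))$ per stage, hence $O(dH\log(1+K/(d\tilde{\lambda}_\Lambda)))$ overall.

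The main obstacle I anticipate is controlling the variance estimates under privatization. The weights $\tilde{\bar\sigma}_{k,h}$ depend on $\overline{\mathbb{V}}_{k,h}\tilde{\widehat{V}}_{k,h+1}$, itself a difference of privatized quadratic estimates $\tilde{\overline{w}}$ and $\tilde{\widehat{w}}$ that are contaminated by $\phi_3$, $\phi_1$, and the noisy Gram matrix. I must show these estimated variances neither substantially underestimate the true conditional variance of $\tilde{\widehat{V}}_{k,h+1}$ — which would invalidate the Bernstein inequality and destroy optimism — nor inflate it, which would spoil the leading term, all while propagating the perturbations through the nested error terms $E_{k,h}$ and $D_{k,h}$. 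Establishing the self-normalized Bernstein concentration for the \emph{privatized} weighted martingale, whose weights are themselves random and noise-dependent, is the crux, and is precisely where the Bernstein-type analysis of \citet{qiao2024offline} must be re-derived rather than cited directly.
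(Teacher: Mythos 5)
Your proposal follows essentially the same route as the paper's own proof: high-probability utility bounds on $\phi_1,\phi_2,\phi_3$ and the GOE matrix $K_1$ (Lemma~\ref{lemma:private-utility-analysis}), confidence radii that absorb both the statistical fluctuation and the $HL\sqrt{d\Tilde{\lambda}_{\Lambda}}$ privatization bias, optimism/pessimism by backward induction, a re-derived Bernstein-type self-normalized bound for the privatized weighted martingale with the $E_{k,h}$, $D_{k,h}$ variance-error controls, an elliptical-potential/total-variance summation yielding the $d\sqrt{H^3K}$ leading term plus a $\sqrt{\Tilde{\lambda}_{\Lambda}}$-scaled privacy term converted through zCDP-to-DP, and a determinant-doubling argument for the $O(dH\log(1+K/d\Tilde{\lambda}_{\Lambda}))$ switching cost. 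You also correctly identify the crux the paper devotes its Appendix B to, namely that the Bernstein concentration must be re-established for noise-dependent weights rather than cited, so the plan matches the paper's proof in both decomposition and key lemmas.
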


\begin{proof}[Proof of Theorem~\ref{theorem:regret-bound}]
    For the full proof, refer to Appendix~\ref{appendix:estimated-variance-regret-bound-section}, particularly Lemma~\ref{lemma:regret-bound}. At a high level, we replicate the proofs of \citetseparate{he2023nearlyminimaxoptimalreinforcement} with similar function classes for the optimistic, pessimistic, and squared value functions except using the privatized components. We use these function classes along with standard results for covering numbers to determine the confidence radii (Lemma ~\ref{lemma:confidence-bounds-1}), prove the upper bound of the variance estimator (Lemma ~\ref{lemma:confidence-bounds-2}), prove optimism and pessimism, and condition on these to utilize a Bernstein-bound argument to yield a tighter regret bound. Finally, we state some results from \citetseparate{he2023nearlyminimaxoptimalreinforcement} that hold for our analysis and use these to prove the regret bound.
\end{proof}

\section{Empirical simulations}
We evaluate DP-LSVI-UCB\textsuperscript{++} on a synthetic linear MDP that is described in \citetseparate{DBLP:journals/corr/abs-2106-11960, yin2022nearoptimalofflinereinforcementlearning, qiao2024offline}. In this MDP, we fix the horizon to be $H = 20$. We compare our algorithm compared to LSVI-UCB, their differentially private counterparts proposed by \citetseparate{luyo2021differentially, ngo2022improved}, the non-private LSVI-UCB\textsuperscript{++}, and our algorithm DP-LSVI-UCB\textsuperscript{++} in terms of cumulative regret with a fixed privacy budget. We also compare how our regret scales with varying privacy budgets compared to LSVI-UCB\textsuperscript{++}. We ran the simulation 10 times and
took the average performance.

\begin{figure}[H]
    \centering
    % First image
    \begin{subfigure}[b]{0.49\linewidth}
        \centering
        \includegraphics[width=\linewidth]{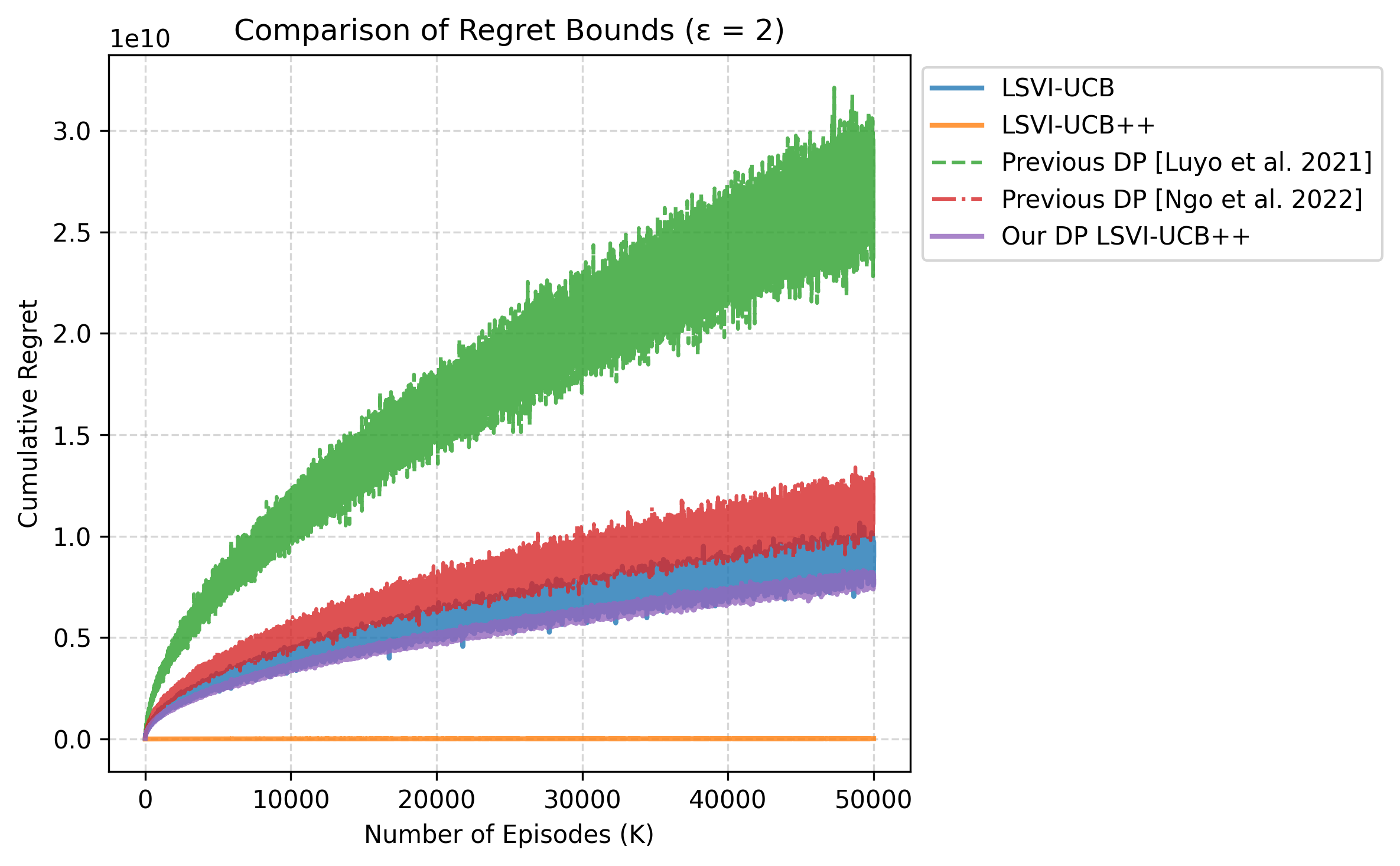}
        \caption{Comparison between different algorithms, $H=20$}
        \label{fig:sub-1}
    \end{subfigure}
    \hfill
    % Second image
    \begin{subfigure}[b]{0.49\linewidth}
        \centering
        \includegraphics[width=\linewidth]{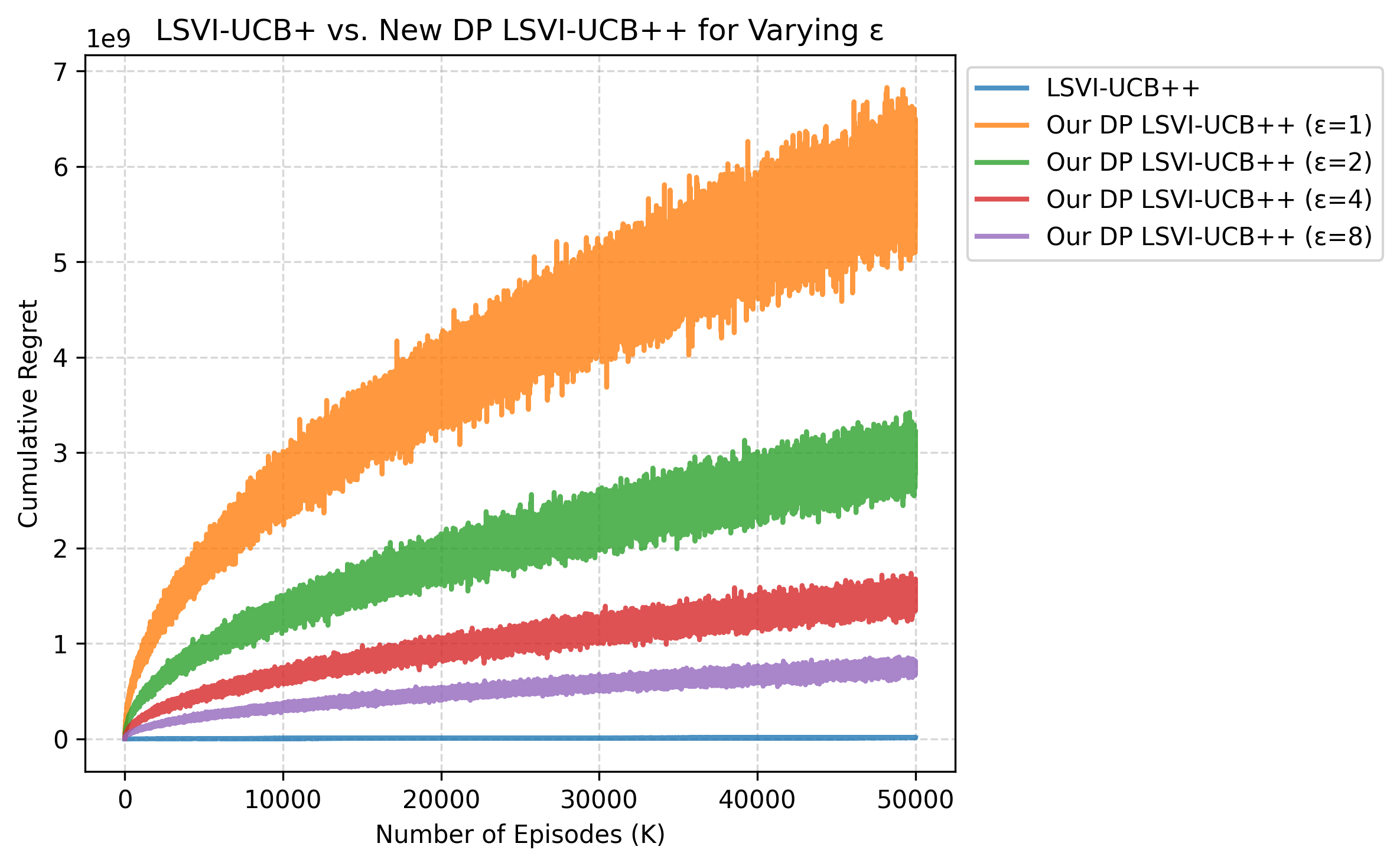}
        \caption{Different privacy budgets, $H=20$}
        \label{fig:sub-2}
    \end{subfigure}
    \label{fig:side-by-side}
\end{figure}
\textbf{Key Takeaways}. From Figure~\ref{fig:sub-1}, we can observe the DP-LSVI-UCB\textsuperscript{++} performs better compared to the previous state-of-the-art algorithm devised by \citetseparate{ngo2022improved} and ofcourse also performs better than \citetseparate{luyo2021differentially}. Additionally, we see that DP-LSVI-UCB\textsuperscript{++} even being a privatized algorithm, performs better than the non-private LSVI-UCB. Looking at Figure~\ref{fig:sub-2}, we see that as we increase the privacy budget of DP-LSVI-UCB\textsuperscript{++}, we get closer to LSVI-UCB\textsuperscript{++} and thus with sufficient noise, we can guarantee $(\epsilon, \delta)$-JDP that will perform slightly worse than LSVI-UCB\textsuperscript{++}. This is due to the fact that we add Gaussian noise to each count. In particular, we enjoy a better regret bound by using the GOE technique as previous state-of-the-art bounds using a binary-tree mechanism that yields suboptimal regret for the same privacy guarantee. We also enjoy a better bound due to our usage of rare-switching to reduce the amount of noise we added (as adding noise to every statistic would lead to suboptimal regret). This also supports our theoretical regret bound since the cost of privacy appears as lower order terms in the regret bound. 

\section{Conclusions and future works}
In this work, we introduced DP-LSVI-UCB\textsuperscript{++}, a differentially private reinforcement learning algorithm for the linear MDP setting, achieving state-of-the-art regret bounds under joint differential privacy (JDP) constraints. Our approach incorporates advanced techniques such as Bernstein-type martingale concentration inequalities and GOE perturbations, enabling us to improve the utility-privacy tradeoff while maintaining strong theoretical guarantees. The algorithm's regret bound $\Tilde{O} \left( d\sqrt{H^3K} + H^{15/4}d^{7/6}K^{1/2} / \epsilon \right)$ surpasses prior works and demonstrates that incorporating differential privacy need not lead to substantial utility degradation. Through empirical simulations, we verified that DP-LSVI-UCB\textsuperscript{++} achieves near-optimal performance, often matching or even outperforming non-private baselines. We believe that there are many promising directions to explore from our study. While our work focuses on linear MDPs, it would be valuable to extend these techniques to the low-rank MDP setting. Additionally, our work utilized Gaussian mechanisms and GOE-based perturbations for privacy guarantees. Exploring alternative mechanisms that adapt noise levels dynamically based on the observed data's sensitivity could lead to improved regret bounds.

\bibliographystyle{plainnat}
\bibliography{citations}

\appendix

\section{Privacy Proofs}
\label{appendix:Privacy Proofs}
First, we will prove the main privacy guarantee of our algorithm

\begin{theorem}[Privacy Guarantee]
    \label{appendix:privacy-guarantee}
    DP-LSVI-UCB\textsuperscript{++} (Algorithm~\ref{alg:dp-lsvi_ucb++}) satisfies $(\epsilon, \delta)$-JDP.
\end{theorem}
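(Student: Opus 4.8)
The plan is to show that the entire transcript of released statistics satisfies $\rho$-zCDP, convert this to $(\epsilon,\delta)$-DP, and then invoke the Billboard lemma to upgrade to JDP. The substantive step is a sensitivity analysis of the quantities that touch user data. In each round $(k,h)$, Algorithm~\ref{alg:dp-lsvi_ucb++} accesses the trajectories only through four sufficient statistics: the value-weighted feature sums $\sum_{i=1}^{k-1}\bar{\sigma}_{i,h}^{-2}\phi(s_h^i,a_h^i)\widehat{V}_{k,h+1}(s_{h+1}^i)$ and its pessimistic analogue with $\widecheck{V}$, the squared-value sum $\sum_{i=1}^{k-1}\bar{\sigma}_{i,h}^{-2}\phi(s_h^i,a_h^i)\widehat{V}_{k,h+1}(s_{h+1}^i)^2$, and the weighted Gram matrix $\sum_{i=1}^{k-1}\bar{\sigma}_{i,h}^{-2}\phi(s_h^i,a_h^i)\phi(s_h^i,a_h^i)^\top$. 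The key structural observation, which legitimizes treating this as an adaptive composition, is that the value functions $\widehat{V}_{k,h+1},\widecheck{V}_{k,h+1}$ and the variance weights $\bar{\sigma}_{i,h}$ are computed from statistics released in earlier rounds and are therefore determined by the prior private outputs; conditioning on those outputs, swapping a single user's trajectory perturbs exactly one summand in each sum (and leaves the statistic unchanged for rounds preceding that user's episode). I would then bound the $l_2$-sensitivity of a single summand using $\norm{\phi(s,a)}_2 < 1$, the value-range bounds $\widehat{V},\widecheck{V}\in[0,H]$ and $\widehat{V}^2\in[0,H^2]$, and the inverse-weight bound $\bar{\sigma}_{i,h}^{-2}\le 1$ (which follows from $\bar{\sigma}_{i,h}\ge H\ge 1$), obtaining sensitivities of order $H$ for the two value sums, order $H^2$ for the squared-value sum, and order $1$ in Frobenius norm for the Gram matrix.

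Given these sensitivities, the next step calibrates noise so that each of the $4HK$ released statistics is individually $\rho_0$-zCDP with $\rho_0 = \rho/(4HK)$. For the three vector-valued sums I would apply the Gaussian mechanism (Lemma~\ref{lem:gaussian_mech}) with $\sigma^2 = \Delta_2^2/(2\rho_0)$, which reproduces exactly the noise levels $2H^2/\rho_0$ for $\phi_1,\phi_2$ and $2H^4/\rho_0$ for $\phi_3$ prescribed in Algorithm~\ref{alg:dp-lsvi_ucb++}. For the Gram matrix I would instead invoke the GOE mechanism of \citet{DBLP:journals/corr/abs-2111-02281}, releasing the symmetric perturbation $K_1 = \tfrac{1}{\sqrt{2}}(Z + Z^\top)$ with $Z_{i,j}\sim\mathcal{N}(0,1/(4\rho_0))$; matching the symmetry of $\phi\phi^\top$ yields $\rho_0$-zCDP for the matrix release, and the deterministic shift by $2\widetilde{\lambda}_\Lambda I$ merely restores positive definiteness and is pure post-processing.

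The remaining steps are bookkeeping. By adaptive composition of zCDP (Lemma~\ref{lem:adaptive_comp}), composing all $4HK$ mechanisms, each $\rho_0$-zCDP, yields $4HK\cdot\rho_0 = \rho$-zCDP for the full sequence; note that composition correctly charges a user who contributes to several partial sums, since the bound is stated over neighboring instances of the entire dataset. Because the estimates $\widehat{w}_{k,h},\widecheck{w}_{k,h},\overline{w}_{k,h}$, the $Q$- and $V$-functions, and the greedy policies are all functions of the noised statistics, they inherit $\rho$-zCDP by post-processing. Applying Lemma~\ref{lem:zCDP-to-DP} converts this to $(\rho + 2\sqrt{\rho\log(1/\delta)},\delta)$-DP, and choosing the budget so that $\rho + 2\sqrt{\rho\log(1/\delta)} = \epsilon$ gives $(\epsilon,\delta)$-DP for the released public state. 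Finally, the action sequence delivered to any user is a function of that user's own data together with this DP public state, so the Billboard lemma (Lemma~\ref{lem:billboard_lemma}) upgrades the guarantee to $(\epsilon,\delta)$-JDP.

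I expect the sensitivity step to be the main obstacle. One must argue rigorously that the data-dependent weights $\bar{\sigma}_{i,h}$ and the value functions may be frozen as public, prior-round quantities inside the composition, so that a single-user change perturbs only one summand rather than propagating through the estimated variances; and one must confirm that the GOE perturbation certifies $\rho_0$-zCDP for the matrix-valued release as a whole, not merely entrywise, while the added $2\widetilde{\lambda}_\Lambda I$ preserves invertibility without leaking information. Once the per-statistic zCDP guarantees are in place, the composition, conversion, and Billboard steps are essentially mechanical.
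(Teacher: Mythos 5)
Your proposal is correct and follows essentially the same route as the paper's proof: bound the $l_2$-sensitivities of the four released statistics, calibrate the Gaussian mechanism (Lemma~\ref{lem:gaussian_mech}) and the GOE mechanism of \citet{DBLP:journals/corr/abs-2111-02281} so each of the $4HK$ releases is $\rho_0$-zCDP with $\rho_0 = \rho/(4HK)$, apply adaptive composition (Lemma~\ref{lem:adaptive_comp}), convert via Lemma~\ref{lem:zCDP-to-DP}, and finish with the Billboard lemma (Lemma~\ref{lem:billboard_lemma}). If anything, your write-up is more explicit than the paper's on two points it leaves implicit: the justification for freezing the data-dependent weights $\bar{\sigma}_{i,h}$ and value functions as prior-round outputs inside the adaptive composition, and the post-processing argument for the downstream estimates and the $2\widetilde{\lambda}_\Lambda I$ shift.
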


\begin{proof}[Proof of Theorem~\ref{appendix:privacy-guarantee}]
        In order to prove this, we must first determine the $l_{2}$ sensitivity of our privatized statistics. Consider two neighboring user sequences $\mathcal{U}$, $\mathcal{U}^{\prime}$. Let $i \leq k$ be some episode where $s_{h}^{i} \neq s_{h}^{\prime i}$ and $a_{h}^{i} \neq a_{h}^{\prime i}$ where $\left( s_{h}^{i}, a_{h}^{i} \right) \in \mathcal{U}$ and $\left( s_{h}^{\prime i}, a_{h}^{\prime i} \right) \in \mathcal{U}^{\prime}$. Then, $\sum_{i=1}^{k-1} \Tilde{\Bar{\sigma}}_{i, h}^{-2} \phi \left( s_{h}^{i}, a_{h}^{i} \right)\Tilde{\widehat{V}}_{k, h+1}(s_{h+1}^{i})$ and $\sum_{i=1}^{k-1} \Tilde{\Bar{\sigma}}_{i, h}^{-2} \phi \left( s_{h}^{i}, a_{h}^{i} \right)\Tilde{\widecheck{V}}_{k, h+1}(s_{h+1}^{i})$ have $l_{2}$ sensitivity of $2H$. Likewise, $\sum_{i=1}^{k-1} \Tilde{\Bar{\sigma}}_{i, h}^{-2} \phi \left( s_{h}^{i}, a_{h}^{i} \right)\Tilde{\widehat{V}}_{k, h+1}(s_{h+1}^{i})^{2}$ has $l_{2}$ sensitivity of $2H^{2}$. Thus, by using a Gaussian mechanism with noise $\sigma^{2} = \frac{2H^{2}}{\rho_{0}}$ and $\sigma^{2} = \frac{2H^{4}}{\rho_{0}}$, respectively, we are guaranteed to have $\rho_{0}$-zCDP for each of the first three terms (Lemma~\ref{lem:gaussian_mech}). For the term $\sum_{i=1}^{k-1} \Tilde{\Bar{\sigma}}_{i,h}^{-2} \phi(s_i^h, a_i^h)\phi(s_i^h, a_i^h)^\top$, according to Appendix D in \citetseparate{DBLP:journals/corr/abs-2111-02281}, we have that the per-instance $l_{2}$ sensitivity is given as

\[
    ||\Delta_{x}||_{2} = \frac{1}{\sqrt{2}} \mathrm{sup}_{\phi: ||\phi||_{2} \leq 1} ||\phi \phi^{\top}||_{F} \leq \frac{1}{\sqrt{2}}
\]

Thus, by using a Gaussian mechanism with noise $\sigma^{2} = \frac{1}{4\rho_{0}}$, we guarantee that this statistic is $\rho_{0}$-zCDP. \footnote{For those more interested in the details of the GOE DP mechanism, we refer the reader to Appendix D of \citetseparate{DBLP:journals/corr/abs-2111-02281}}. Now, we need to track $4KH$ statistics so combing the results of each privatized statistic advanced composition (Lemma~\ref{lem:adaptive_comp}) to conclude that the DP-LSVI-UCB\textsuperscript{++} is $\rho$-zCDP. Thus, by conversion of zCDP to DP (Lemma~\ref{lem:zCDP-to-DP}), Algorithm~\ref{alg:dp-lsvi_ucb++} satisfies $\left( \rho + 2\sqrt{\rho \log \left( 1/\delta \right)}, \delta  \right)$-DP. Since the actions sent to each user depends on a function constructed with DP and their private data only, by the Billboard Lemma (Lemma~\ref{lem:billboard_lemma}), we conclude Algorithm~\ref{alg:dp-lsvi_ucb++} is $(\epsilon, \delta)$-JDP. 
\end{proof}

Now, we will give a high probability bound of the noises we add for privatization. These will be useful for the further analysis we do later on.

\begin{lemma}[Utility Analysis]
\label{lemma:private-utility-analysis}
Let 

\[
    L = 4H \sqrt{\frac{dHK}{\rho} \log \left( \frac{10dKH}{\delta} \right)}
\]
and 
\[
    \Tilde{\lambda}_{\Lambda} = \sqrt{\frac{8dHK}{\rho}} \left( 2 + \left( \frac{\log \left( 5c_{1} H / \delta \right)}{c_{2}d}  \right)^{\frac{2}{3}} \right)
\]

for some universal constants $c_{1}, c_{2}$. Then, with probability atleast $1 - \delta$, for all $h, k \in [H] \times [K]$, we have that $||\phi_{1}||_{2} \leq L$, $||\phi_{2}||_{2} \leq L$, and $||\phi_{3}||_{2} \leq HL$. Additionally, we have that $K_{1}$ is symmetric and positive definite with $||K_{1}||_{2} \leq \Tilde{\lambda}_{\Lambda}$.  
\end{lemma}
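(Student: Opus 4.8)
The plan is to handle the vector noises $\phi_1,\phi_2,\phi_3$ and the matrix noise $K_1$ by different tools: the former are controlled by routine Gaussian-vector (chi-squared) concentration, while the latter requires a spectral-norm bound for a symmetric Gaussian (GOE) matrix, which is the genuinely delicate part.

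For the vector noises, since $\rho_0=\rho/(4HK)$ we have $\phi_1,\phi_2\sim\mathcal{N}\!\left(0,\tfrac{8H^3K}{\rho}I_d\right)$, so $\norm{\phi_i}_2=\sqrt{8H^3K/\rho}\,\norm{g}_2$ for a standard Gaussian $g\in\mathbb{R}^d$. I would invoke a chi-squared tail bound of Laurent--Massart type, $\norm{g}_2^2\le d+2\sqrt{dt}+2t$ with probability $1-e^{-t}$, take $t=\log(10dKH/\delta)$, and use $d+2\sqrt{dt}+2t\le 2dt$ (valid once $t\ge 1$, i.e.\ in any reasonable parameter regime). This gives $\norm{\phi_i}_2^2\le \tfrac{8H^3K}{\rho}\cdot 2d\log(10dKH/\delta)=\tfrac{16H^3dK}{\rho}\log(10dKH/\delta)$, which is exactly $L^2$, so $\norm{\phi_i}_2\le L$. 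The argument for $\phi_3$ is identical except its variance is larger by a factor $H^2$ (namely $\tfrac{2H^4}{\rho_0}$ versus $\tfrac{2H^2}{\rho_0}$), so the same computation yields $\norm{\phi_3}_2\le HL$. A union bound over the three vector types and over all $HK$ stage--episode pairs (so the bounds hold simultaneously for every $h,k$) is absorbed into the constant $10$ and the $dKH$ factor inside the logarithm defining $L$.

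For $K_1=\tfrac{1}{\sqrt{2}}(Z+Z^\top)$, symmetry is immediate. The matrix is a rescaled GOE with off-diagonal entry variance $s^2=\tfrac{1}{4\rho_0}=HK/\rho$ (and diagonal variance $2s^2$), so its spectrum concentrates near the semicircle edge at $2s\sqrt{d}=2\sqrt{dHK/\rho}$. I would bound $\norm{K_1}_2=\max(\lambda_{\max}(K_1),-\lambda_{\min}(K_1))$ via the right-tail large-deviation estimate for the extreme eigenvalues of a GOE matrix (as in Appendix~D of the cited GOE-release reference): $\mathbb{P}(\lambda_{\max}\ge 2s\sqrt{d}\,(1+\varepsilon))\lesssim \exp(-c\,d\,\varepsilon^{3/2})$. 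Inverting this tail at level $\delta/(5c_1H)$ gives $\varepsilon=(\log(5c_1H/\delta)/(c_2 d))^{2/3}$, hence $\norm{K_1}_2\le 2\sqrt{dHK/\rho}\,(1+\varepsilon)$. The definition $\tilde{\lambda}_\Lambda=\sqrt{8dHK/\rho}\,(2+\varepsilon)$ is chosen precisely so that it dominates this quantity (since $2\sqrt{2}(2+\varepsilon)\ge 2(1+\varepsilon)$), with the extra $H$ inside the logarithm covering a union bound over the $H$ stages. Given $\norm{K_1}_2\le\tilde{\lambda}_\Lambda$, the eigenvalues of $K_1$ lie in $[-\tilde{\lambda}_\Lambda,\tilde{\lambda}_\Lambda]$, so the shifted matrix $2\tilde{\lambda}_\Lambda I+K_1$ that actually enters each Gram-matrix update satisfies $2\tilde{\lambda}_\Lambda I+K_1\succeq\tilde{\lambda}_\Lambda I\succ 0$, which is the positive-definiteness the algorithm needs for invertibility.

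Finally I would combine all failure events by a union bound to obtain overall probability at least $1-\delta$. The chi-squared steps are entirely routine; the main obstacle is the spectral control of $K_1$, because a naive Lipschitz/sub-Gaussian concentration of the operator norm would only produce an additive $\sqrt{\log(1/\delta)}$ correction and would not reproduce the $(\cdot)^{2/3}$ dependence on $d$ and $\log(1/\delta)$ that appears in $\tilde{\lambda}_\Lambda$. Recovering that exponent is exactly what forces the sharper Tracy--Widom right-tail large-deviation estimate for the top eigenvalue of a GOE matrix, and this is the step where I would rely on the cited result rather than a generic concentration inequality.
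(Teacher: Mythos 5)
Your proposal is correct and follows essentially the same route as the paper's (very terse) proof: chi-squared Gaussian concentration plus a union bound over $h,k$ for $\phi_{1},\phi_{2},\phi_{3}$, and the cited GOE spectral-norm tail (Lemma 19 of the GOE-mechanism reference) for $K_{1}$; your explicit computations recover the paper's exact expressions for $L$ and $\Tilde{\lambda}_{\Lambda}$, including the $(\log(\cdot)/d)^{2/3}$ correction coming from the Tracy--Widom-type right tail. You also correctly resolve the one sloppy point in the statement: $K_{1}$ itself is not positive definite, but $\|K_{1}\|_{2}\leq\Tilde{\lambda}_{\Lambda}$ gives $2\Tilde{\lambda}_{\Lambda}I + K_{1}\succeq\Tilde{\lambda}_{\Lambda}I\succ 0$, which is the property the algorithm actually needs for invertibility.
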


\begin{proof}[Proof of Lemma~\ref{lemma:private-utility-analysis}]

The bounds on $\phi_{i}$ hold by simple Gaussian concentration and union bound over all $h, k \in [H] \times [K]$. The bound on $K_{1}$ hold from Lemma 19 in \citetseparate{DBLP:journals/corr/abs-2111-02281}.  
\end{proof}

\section{Upper Confidence Bound Proofs}
\label{appendix:ucb-proofs}

We provide this lemma from \citetseparate{jin2020provably} that we will utilize in our analysis.

\begin{lemma}[Lemma D.1 from \citetseparate{he2023nearlyminimaxoptimalreinforcement}, Lemma D.4 from \citetseparate{jin2020provably} for weighted linear regression]
\label{lemma:martingale-ineq-weighted-linear}
    Let $\{x_k\}_{k=1}^\infty$ be a real-valued stochastic process on state space $S$ with corresponding filtration $\{\mathcal{F}_k\}_{k=1}^\infty$. Let $\{\phi_k\}_{k=1}^\infty$ be an $\mathbb{R}^d$-valued stochastic process, where $\phi_k \in \mathcal{F}_{k-1}$ and $\|\phi_k\|_2 \leq 1$. Let $\{w_k\}_{k=1}^\infty$ be a real-valued stochastic process where $w_k \in \mathcal{F}_{k-1}$ and $0 \leq w_k \leq C$. For any $k \geq 0$, define $\Sigma_k = 2\Tilde{\lambda}_{\Lambda} I + \sum_{i=1}^k w_i^2 \phi_i \phi_i^\top + K_{1}$. Then with probability at least $1 - \delta$, for all $k \in \mathbb{N}$ and all functions $V \in \mathcal{V}$ with $\max_s |V(x)| \leq H$, we have
\[
\left\|
\sum_{i=1}^k w_i^2 \phi_i \left\{ V(x_i) - \mathbb{E}[V(x_i) | \mathcal{F}_{i-1}] \right\}
\right\|_{\Sigma_k^{-1}}^2
\leq 4C^2H^2 \left[ \frac{d}{2} \log\left(1 + \frac{kC^2}{\Tilde{\lambda}_{\Lambda}}\right) + \log\left(\frac{N_\varepsilon}{\delta}\right) \right]
+ \frac{8k^2C^4\varepsilon^2}{\Tilde{\lambda}_{\Lambda}},
\]
where $N_\varepsilon$ is the $\varepsilon$-covering number of the function class $\mathcal{V}$ with respect to the distance function $\text{dist}(V_1, V_2) = \max_s |V_1(s) - V_2(s)|$.

\end{lemma}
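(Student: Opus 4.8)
The plan is to prove this as a self-normalized martingale concentration bound for the weighted sum, made uniform over the function class $\mathcal{V}$ via a covering argument, following the template of \citet{jin2020provably} adapted to the weighted features and the GOE-perturbed regularizer.

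First I would fix a single \emph{deterministic} function $V \in \mathcal{V}$ with $\max_s |V(s)| \le H$ and set $\eta_i = V(x_i) - \mathbb{E}[V(x_i)\mid\mathcal{F}_{i-1}]$. Since $\phi_i, w_i \in \mathcal{F}_{i-1}$ and $\mathbb{E}[\eta_i\mid\mathcal{F}_{i-1}]=0$, the increments $w_i^2\phi_i\eta_i = (w_i\phi_i)(w_i\eta_i)$ form a vector-valued martingale difference sequence whose feature $w_i\phi_i$ is $\mathcal{F}_{i-1}$-measurable and whose noise $w_i\eta_i$ is conditionally centered and bounded by $2CH$, hence conditionally sub-Gaussian with parameter $R=O(CH)$. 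Writing $\Sigma_k = \Sigma_0 + \sum_{i=1}^k (w_i\phi_i)(w_i\phi_i)^\top$ with $\Sigma_0 = 2\tilde\lambda_{\Lambda} I + K_1$, and noting that $K_1$ is symmetric positive definite so that $\Sigma_0 \succeq 2\tilde\lambda_{\Lambda} I$ by Lemma~\ref{lemma:private-utility-analysis}, I would invoke the self-normalized concentration inequality for vector-valued martingales (the method-of-mixtures bound, which already holds uniformly in $k$) to get, for this fixed $V$ and with probability at least $1-\delta$,
\[
\left\| \sum_{i=1}^k w_i^2 \phi_i \eta_i \right\|_{\Sigma_k^{-1}}^2 \le 2R^2 \left[ \tfrac{1}{2}\log\frac{\det\Sigma_k}{\det\Sigma_0} + \log\frac{1}{\delta} \right].
\]

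Next I would convert the determinant ratio into the advertised dimension term: since $\|w_i\phi_i\|_2^2 \le C^2$, the trace of $\Sigma_k$ increases by at most $kC^2$, so the determinant-trace (AM-GM) inequality together with $\Sigma_0 \succeq \tilde\lambda_{\Lambda} I$ yields $\tfrac12\log(\det\Sigma_k/\det\Sigma_0) \le \tfrac{d}{2}\log(1 + kC^2/\tilde\lambda_{\Lambda})$, which is the first bracketed term. To replace the fixed $V$ by an arbitrary element of $\mathcal{V}$, I would fix a deterministic minimal $\varepsilon$-net $\mathcal{C}_\varepsilon$ of $\mathcal{V}$ under $\mathrm{dist}(V_1,V_2)=\max_s|V_1(s)-V_2(s)|$ of cardinality $N_\varepsilon$, apply the fixed-$V$ bound at every net point, and union bound, replacing $\log(1/\delta)$ by $\log(N_\varepsilon/\delta)$. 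For arbitrary $V \in \mathcal{V}$ I would choose $V' \in \mathcal{C}_\varepsilon$ with $\max_s|V-V'|\le\varepsilon$ and decompose the sum as the corresponding quantity for $V'$ plus an approximation vector $D = \sum_{i=1}^k w_i^2\phi_i\{(V-V')(x_i) - \mathbb{E}[(V-V')(x_i)\mid\mathcal{F}_{i-1}]\}$, where $\|D\|_2 \le 2\varepsilon k C^2$ (each bracket is at most $2\varepsilon$ and $\|w_i^2\phi_i\|_2 \le C^2$). Using $\lambda_{\min}(\Sigma_k)\ge\tilde\lambda_{\Lambda}$ to bound $\|D\|_{\Sigma_k^{-1}}^2 \le \|D\|_2^2/\tilde\lambda_{\Lambda} \le 4\varepsilon^2 k^2 C^4/\tilde\lambda_{\Lambda}$ and combining via $\|a+b\|^2\le 2\|a\|^2+2\|b\|^2$, I would recover the stated bound after tracking constants.

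The main obstacle is the uniformity over the function class: the bound must hold simultaneously for every $V\in\mathcal{V}$ and every $k$, yet the value functions actually queried by the algorithm are adapted to the data, so a naive union bound over realized iterates would destroy the martingale structure. The delicate resolution is to net $\mathcal{V}$ with a cover chosen independently of the sample path, so that each net point gives a genuine martingale, and then to show the weighted-norm discretization error $\|D\|_{\Sigma_k^{-1}}$ is controlled by the smallest eigenvalue $\lambda_{\min}(\Sigma_k)\ge\tilde\lambda_{\Lambda}$; the uniformity in $k$ comes for free from the method-of-mixtures form of the self-normalized inequality. Everything else — the sub-Gaussian constant, the determinant-trace step, and the final bookkeeping of constants — is routine.
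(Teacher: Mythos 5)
Your proposal is correct and follows essentially the same route as the paper's proof: decompose via a deterministic $\varepsilon$-net of $\mathcal{V}$, apply a self-normalized martingale bound with $x_i = w_i\phi_i$ and noise $w_i\eta_i$ union-bounded over the net, and control the discretization term through $\Sigma_k \succeq \tilde{\lambda}_{\Lambda} I$ to get the $8k^2C^4\varepsilon^2/\tilde{\lambda}_{\Lambda}$ remainder. The only cosmetic difference is that you unroll the method-of-mixtures inequality and the determinant--trace step explicitly, whereas the paper invokes the packaged confidence-ellipsoid result (Lemma~\ref{lemma:abbassi-confidence-theorem-2}, whose own proof contains exactly those two ingredients adapted to the $2\tilde{\lambda}_{\Lambda}I + K_1$ regularizer), and the paper bounds the approximation term by Cauchy--Schwarz over the sum rather than through $\|D\|_2$ directly—both give the same bound.
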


\begin{proof}[Proof of Lemma~\ref{lemma:martingale-ineq-weighted-linear}]
For any function $V \in \mathcal{V}$, there exists some $\Tilde{V}$ in the $\varepsilon$-net such that $\mathrm{dist} \left(V, \Tilde{V} \right) \leq \varepsilon$. Using this, the concentration error can be upper bounded as 

\begin{align*}
    \norm{\sum_{i=1}^k w_i^2 \phi_i \left\{ V(x_i) - \mathbb{E}[V(x_i) | \mathcal{F}_{i-1}] \right\}}_{\Sigma_k^{-1}}^{2} &\leq 2 \norm{\sum_{i=1}^k w_i^2 \phi_i \left\{ \Tilde{V}(x_i) - \mathbb{E}[\Tilde{V}(x_i) | \mathcal{F}_{i-1}] \right\}}_{\Sigma_k^{-1}}^{2} \\
    &+ 2\norm{\sum_{i=1}^k w_i^2 \phi_i \left\{ \Delta_{V}(x_i) - \mathbb{E}[\Delta_{V}(x_i) | \mathcal{F}_{i-1}] \right\}}_{\Sigma_k^{-1}}^{2}
\end{align*}
where $\Delta_{V} = V - \Tilde{V}$ and the inequality holds from the fact that $\norm{a + b}_{\Sigma}^{2} \leq 2\norm{a}_{\Sigma}^{2} + 2\norm{b}_{\Sigma}^{2}$. For any fixed value function $\Tilde{V}$, take $x_{i} = w_{i}\phi_{i}$ and $\eta_{i} = w_{i}\Tilde{V}(x_{i}) - w_{i}\mathbb{E} \left[ \Tilde{V}(x_{i}) \right]$. Notice that 
\begin{align*}
    &\norm{x_{i}}_{2} \leq C \\
    & \mathbb{E} \left[ \eta_{i} \mid \mathcal{F}_{i} \right] = 0, \; \left| \eta_{i} \right| \leq HC
\end{align*}
Then, by Lemma~\ref{lemma:abbassi-confidence-theorem-2} and taking a union-bound over the $\varepsilon$-net $\mathcal{N}_{\varepsilon}$, we get the first term being upper bounded as 

\begin{align*}
    2 \norm{\sum_{i=1}^k w_i^2 \phi_i \left\{ \Tilde{V}(x_i) - \mathbb{E}[\Tilde{V}(x_i) | \mathcal{F}_{i-1}] \right\}}_{\Sigma_k^{-1}}^{2} \leq 4H^{2}C^{2} \left[ \frac{d}{2} \log \left( 1 + KC^{2} /  \Tilde{\lambda}_{\Lambda} \right) + \log \frac{N_{\varepsilon}}{\delta} \right]
\end{align*}

The second term can be upper bounded as 

\begin{align*}
    2\norm{\sum_{i=1}^k w_i^2 \phi_i \left\{ \Delta_{V}(x_i) - \mathbb{E}[\Delta_{V}(x_i) | \mathcal{F}_{i-1}] \right\}}_{\Sigma_k^{-1}}^{2} &\leq 2k\sum_{i=1}^k \norm{w_i^2 \phi_i \left\{ \Delta_{V}(x_i) - \mathbb{E}[\Delta_{V}(x_i) | \mathcal{F}_{i-1}] \right\}}_{\Sigma_k^{-1}}^{2} \\
    &\leq 8k^{2}C^{4}\varepsilon^{2} / \Tilde{\lambda}_{\Lambda}
\end{align*}
where the first inequality holds from Cauchy-Schwartz and the last inequality holds from $\left| \Delta_{V} \right| \leq \varepsilon$, $w_{i}^{2} \leq C^{2}$, and $\Sigma_{k} \succeq \Tilde{\lambda}_{\Lambda}$. Thus, putting these together, we get the claim.
    
\end{proof}

Now, we are ready to begin to derive our confidence radii. This is a Hoeffding-type upper bound for the estimation
error.

\begin{lemma}
    \label{lemma:confidence-bounds-1}
    Define \( \mathcal{E} \) as the event that the following inequalities hold for all \( s, a, k, h \in \mathcal{S} \times \mathcal{A} \times [K] \times [H] \):
\[
\left| \Tilde{\widehat{w}}_{k,h}^\top \phi(s, a) - [\mathbb{P}_h \Tilde{\widehat{V}}_{k,h+1}](s, a) \right| \leq \widehat{\beta} \sqrt{\phi(s, a)^\top \Tilde{\Lambda}_{k,h}^{-1} \phi(s, a)},
\]
\[
\left| \Tilde{\Bar{w}}_{k,h}^\top \phi(s, a) - [\mathbb{P}_h \Tilde{\widehat{V}}_{k,h+1}^2](s, a) \right| \leq \Bar{\beta} \sqrt{\phi(s, a)^\top \Tilde{\Lambda}_{k,h}^{-1} \phi(s, a)},
\]
\[
\left| \Tilde{\widecheck{w}}_{k,h}^\top \phi(s, a) - [\mathbb{P}_h \Tilde{\widecheck{V}}_{k,h+1}](s, a) \right| \leq \widecheck{\beta} \sqrt{\phi(s, a)^\top \Tilde{\Lambda}_{k,h}^{-1} \phi(s, a)},
\]
where
\[
\widehat{\beta} = \widecheck{\beta} = O\left(HL\sqrt{d \Tilde{\lambda}_{\Lambda}} + \sqrt{d^3 H^2 \log^2 \left(\frac{HK^{4}L^{2}d}{\delta \Tilde{\lambda}_{\Lambda}}\right)} \right),
\]
and
\[
\bar{\beta} = O\left(H^{2}L^{2} \sqrt{d\Tilde{\lambda}_{\Lambda}} + \sqrt{d^3 H^4 \log^2 \left(\frac{HK^{4}L^{2}d}{\delta \Tilde{\lambda}_{\Lambda}}\right)} \right).
\]

The event \( \mathcal{E} \) holds with probability at least \( 1 - 7\delta \).
\end{lemma}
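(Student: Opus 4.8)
The plan is to mirror the non-private confidence-radius argument of \citet{he2023nearlyminimaxoptimalreinforcement} and \citet{jin2020provably}, substituting the privatized statistics and conditioning throughout on the high-probability event of Lemma~\ref{lemma:private-utility-analysis}. On that event $K_{1}$ is symmetric positive definite with $\|K_{1}\|_{2}\le\Tilde{\lambda}_{\Lambda}$ and $\|\phi_{1}\|_{2},\|\phi_{2}\|_{2}\le L$, $\|\phi_{3}\|_{2}\le HL$, so $\Tilde{\Lambda}_{k,h}\succeq 2\Tilde{\lambda}_{\Lambda}I$ is invertible and the remainder of the argument becomes deterministic in the privacy noise. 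I will establish the three inequalities separately and collect the failure probabilities by a union bound at the end.

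First I would use the linear-MDP structure: for any bounded $V$ we have $[\mathbb{P}_{h}V](s,a)=\langle\phi(s,a),w_{h}^{V}\rangle$ with $w_{h}^{V}=\int V(s')\,d\boldsymbol{\mu}_{h}(s')$, so $\|w_{h}^{V}\|_{2}\le H\sqrt{d}$ (and $\le H^{2}\sqrt{d}$ when $V=\Tilde{\widehat{V}}_{k,h+1}^{2}$). Writing $\Tilde{\Lambda}_{k,h}=\Lambda^{\circ}_{k,h}+R_{k,h}$, where $\Lambda^{\circ}_{k,h}=\sum_{i=1}^{k-1}\Tilde{\Bar{\sigma}}_{i,h}^{-2}\phi(s_{h}^{i},a_{h}^{i})\phi(s_{h}^{i},a_{h}^{i})^{\top}$ is the weighted Gram part and $R_{k,h}$ collects the $2\Tilde{\lambda}_{\Lambda}I$ regularizer together with the accumulated GOE perturbation, substituting the definition of $\Tilde{\widehat{w}}_{k,h}$, splitting each regression target $\Tilde{\widehat{V}}_{k,h+1}(s_{h+1}^{i})$ into its conditional mean $[\mathbb{P}_{h}\Tilde{\widehat{V}}_{k,h+1}](s_{h}^{i},a_{h}^{i})=\langle\phi(s_{h}^{i},a_{h}^{i}),w_{h}^{\Tilde{\widehat{V}}}\rangle$ plus a mean-zero fluctuation yields
\[
\Tilde{\widehat{w}}_{k,h}-w_{h}^{\Tilde{\widehat{V}}}=\Tilde{\Lambda}_{k,h}^{-1}\big[M_{k,h}+\phi_{1}-R_{k,h}\,w_{h}^{\Tilde{\widehat{V}}}\big],
\]
with the weighted martingale $M_{k,h}=\sum_{i=1}^{k-1}\Tilde{\Bar{\sigma}}_{i,h}^{-2}\phi(s_{h}^{i},a_{h}^{i})\big(\Tilde{\widehat{V}}_{k,h+1}(s_{h+1}^{i})-[\mathbb{P}_{h}\Tilde{\widehat{V}}_{k,h+1}](s_{h}^{i},a_{h}^{i})\big)$. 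Cauchy--Schwarz in the $\Tilde{\Lambda}_{k,h}^{-1}$ metric then gives $|\langle\Tilde{\widehat{w}}_{k,h}-w_{h}^{\Tilde{\widehat{V}}},\phi(s,a)\rangle|\le\big(\|M_{k,h}\|_{\Tilde{\Lambda}_{k,h}^{-1}}+\|\phi_{1}\|_{\Tilde{\Lambda}_{k,h}^{-1}}+\|R_{k,h}w_{h}^{\Tilde{\widehat{V}}}\|_{\Tilde{\Lambda}_{k,h}^{-1}}\big)\,\|\phi(s,a)\|_{\Tilde{\Lambda}_{k,h}^{-1}}$, so it remains to bound the bracketed sum by $\widehat{\beta}$.

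The two deterministic pieces are immediate: since $\Tilde{\Lambda}_{k,h}\succeq 2\Tilde{\lambda}_{\Lambda}I$ we have $\|v\|_{\Tilde{\Lambda}_{k,h}^{-1}}\le\|v\|_{2}/\sqrt{2\Tilde{\lambda}_{\Lambda}}$, and combining $\|\phi_{1}\|_{2}\le L$, $\|R_{k,h}\|_{2}=O(\Tilde{\lambda}_{\Lambda})$, and $\|w_{h}^{\Tilde{\widehat{V}}}\|_{2}\le H\sqrt{d}$ produces the $O(HL\sqrt{d\Tilde{\lambda}_{\Lambda}})$ summand of $\widehat{\beta}$. The martingale term is the crux and is controlled by the weighted self-normalized inequality Lemma~\ref{lemma:martingale-ineq-weighted-linear}, applied with weights $w_{i}=\Tilde{\Bar{\sigma}}_{i,h}^{-1}$, value range $H$, and $\Sigma_{k}$ identified with $\Tilde{\Lambda}_{k,h}$; after inserting the $\varepsilon$-covering number $N_{\varepsilon}$ of the pertinent value-function class this yields the $\sqrt{d^{3}H^{2}\log^{2}(\cdots)}$ summand. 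Repeating the identical argument for $\Tilde{\widecheck{V}}_{k,h+1}$ gives $\widecheck{\beta}$, and for the squared iterate $\Tilde{\widehat{V}}_{k,h+1}^{2}$ (range $H^{2}$, noise $\phi_{3}$ with $\|\phi_{3}\|_{2}\le HL$, coefficient norm $H^{2}\sqrt{d}$) gives $\bar{\beta}$ with its extra powers of $H$ and $L$. A union bound over the three martingale events, the utility event of Lemma~\ref{lemma:private-utility-analysis}, and the auxiliary events controlling the variance weights $\Tilde{\Bar{\sigma}}_{i,h}$ aggregates the total failure probability to $7\delta$.

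The main obstacle is the uniform-concentration step underlying the martingale bound. Because $\Tilde{\widehat{V}}_{k,h+1}$ is data-dependent---built from $\Tilde{\widehat{w}}$, the bonus $\widehat{\beta}\|\cdot\|_{\Tilde{\Lambda}_{k,h}^{-1}}$, and the frozen privacy noise $\phi_{1},\phi_{3},K_{1}$---I cannot invoke a fixed-function concentration bound; instead I must exhibit a parametric class $\mathcal{V}$, conditioned on the privacy noise, that contains every realizable $\Tilde{\widehat{V}}_{k,h+1}$ and bound its log-covering number by $\Tilde{O}(d^{2}\log(\cdots))$, which then enters $N_{\varepsilon}$ and fixes the $\log$ factors in the radii. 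The eigenvalue floor $\Tilde{\Lambda}_{k,h}\succeq 2\Tilde{\lambda}_{\Lambda}I$ is precisely what keeps the parameter ranges, and hence $N_{\varepsilon}$, finite. I must also verify that the accumulated GOE perturbation inside $\Tilde{\Lambda}_{k,h}$ does not inflate the $\Tilde{\Lambda}_{k,h}^{-1}$-norm estimates, which is where the scaling $\Tilde{\lambda}_{\Lambda}=\Tilde{\Theta}(\sqrt{dHK/\rho})$ from Lemma~\ref{lemma:private-utility-analysis} is chosen to balance the privacy noise against the regularizer.
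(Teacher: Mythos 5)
Your proposal is correct and follows essentially the same route as the paper's proof: the same decomposition of $\Tilde{\widehat{w}}_{k,h}-w_{k,h}$ into a weighted self-normalized martingale term plus the deterministic $\phi_{1}$, GOE, and regularizer bias terms, the same use of the utility event (Lemma~\ref{lemma:private-utility-analysis}) to bound the deterministic pieces by $O(HL\sqrt{d\Tilde{\lambda}_{\Lambda}})$, the same application of Lemma~\ref{lemma:martingale-ineq-weighted-linear} with an $\varepsilon$-net over the (privacy-noise-dependent) value-function class for the $\sqrt{d^{3}H^{2}\log^{2}(\cdot)}$ term, and the same repetition over the pessimistic and squared classes followed by a union bound. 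Your packaging of $2\Tilde{\lambda}_{\Lambda}I + K_{1}$ into a single remainder $R_{k,h}$ and your explicit flagging of the data-dependence of $\Tilde{\widehat{V}}_{k,h+1}$ are only cosmetic refinements of the paper's argument, not a different method.
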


\begin{proof}[Proof of Lemma~\ref{lemma:confidence-bounds-1}]
For any fixed stage $h \in [H]$ and the optimistic private value function $\Tilde{\widehat{V}}_{k,h+1}$, by Lemma~\ref{lem:variance-linear-mdp}, there exists a vector $w_{k, h+1}$ such that $\mathbb{P}_h \Tilde{\widehat{V}}_{k,h+1}(s, a)$ can be represented as $w_{k, h}^{\top}\phi(s, a)$ with $\|w_{k, h}\|_2 \leq H\sqrt{d}$. Then, we can decompose the estimation error $\norm{\Tilde{\widehat{w}}_{k,h} - w_{k, h}}_{\Tilde{\Lambda}_{h, k}}$ as 

\begin{align*}
   &\norm{\Tilde{\Lambda}_{k, h}^{-1} 
    \left[ \sum_{i=1}^{k-1} \Tilde{\Bar{\sigma}}_{i, h}^{-2} \phi \left( s_{h}^{i}, a_{h}^{i} \right)\Tilde{\widehat{V}}_{k, h+1}(s_{h+1}^{i}) + \phi_{1} \right] - \Tilde{\Lambda}_{k, h}^{-1} 
    \left[ 2 \Tilde{\lambda}_{\Lambda} I_{d} 
    + \sum_{i=1}^{k-1} \Tilde{\Bar{\sigma}}_{i, h}^{-2} \phi \left( s_{h}^{i}, a_{h}^{i} \right) \phi \left( s_{h}^{i}, a_{h}^{i} \right)^{\top} 
    + K_{1} \right]w_{k, h}}_{\Tilde{\Lambda}_{h, k}} \\
    &= \norm{\Tilde{\Lambda}_{k, h}^{-1} 
    \sum_{i=1}^{k-1} \Tilde{\Bar{\sigma}}_{i, h}^{-2} \phi \left( s_{h}^{i}, a_{h}^{i} \right) \left( \Tilde{\widehat{V}}_{k, h+1}(s_{h+1}^{i}) - \mathbb{P}_{h}\Tilde{\widehat{V}}_{k, h+1}(s_{h}^{i}, a_{h}^{i}) \right) + \Tilde{\Lambda}_{k, h}^{-1}\phi_{1} + \Tilde{\Lambda}_{k, h}^{-1}K_{1}w_{k, h} - 2 \Tilde{\lambda}_{\Lambda} \Tilde{\Lambda}_{k, h}^{-1}w_{k, h}}_{\Tilde{\Lambda}_{h, k}} \\
    &\leq \norm{\Tilde{\Lambda}_{k, h}^{-1}\phi_{1}}_{\Tilde{\Lambda}_{h, k}} + \norm{\Tilde{\Lambda}_{k, h}^{-1}w_{k, h}K_{1}}_{\Tilde{\Lambda}_{h, k}} + \norm{2 \Tilde{\lambda}_{\Lambda} \Tilde{\Lambda}_{k, h}^{-1}w_{k, h}}_{\Tilde{\Lambda}_{h, k}} + \\
    &\norm{\Tilde{\Lambda}_{k, h}^{-1} 
    \sum_{i=1}^{k-1} \Tilde{\Bar{\sigma}}_{i, h}^{-2} \phi \left( s_{h}^{i}, a_{h}^{i} \right) \left( \Tilde{\widehat{V}}_{k, h+1}(s_{h+1}^{i}) - \mathbb{P}_{h}\Tilde{\widehat{V}}_{k, h+1}(s_{h}^{i}, a_{h}^{i}) \right)}_{\Tilde{\Lambda}_{h, k}}
\end{align*}

where the first inequality holds from $\norm{a + b}_{\Sigma} \leq \norm{a}_{\Sigma} + \norm{b}_{\Sigma}$. For the first term, we know that by construction, $\Tilde{\Lambda}_{k, h}^{-1} \preceq 1 / \Tilde{\lambda}_{\Lambda}$. Additionally, by utility (Lemma~\ref{lemma:private-utility-analysis}), we have that $\norm{\phi_{1}}_{2} \leq L$. Putting these together, we get
\[
    \norm{\Tilde{\Lambda}_{k, h}^{-1}\phi_{1}}_{\Tilde{\Lambda}_{h, k}} \leq L \sqrt{\frac{1}{\Tilde{\lambda}_{\Lambda}}} \leq HL \sqrt{d \Tilde{\lambda}_{\Lambda}}
\]
For the second term, we have that $\|w_{k, h}\|_2 \leq H\sqrt{d}$. Again, by utility, we have that $\norm{K_{1}}_{2} \leq \Tilde{\lambda}_{\Lambda}$. Thus, we get 
\[
    \norm{\Tilde{\Lambda}_{k, h}^{-1}w_{k, h}K_{1}}_{\Tilde{\Lambda}_{h, k}} \leq H \sqrt{d \Tilde{\lambda}_{\Lambda}} \leq HL \sqrt{d \Tilde{\lambda}_{\Lambda}}
\]
For the third term, using the facts we have described above, we get 
\[
    \norm{2 \Tilde{\lambda}_{\Lambda} \Tilde{\Lambda}_{k, h}^{-1}w_{k, h}}_{\Tilde{\Lambda}_{h, k}} \leq 2H \sqrt{d \Tilde{\lambda}_{\Lambda}} \leq 2HL \sqrt{d \Tilde{\lambda}_{\Lambda}}
\]
Lastly, for the last term, we apply Lemma~\ref{lemma:martingale-ineq-weighted-linear} with the following optimistic value function class $\widehat{\mathcal{V}}_{h}$ and $\varepsilon = H\sqrt{d\Tilde{\lambda}_{\Lambda}} / K$, then for any fixed $h \in [H]$, with probability atleast $1 - \delta / H$, for all episodes $k \in [K]$, we have 
\begin{align*}
    &\norm{\Tilde{\Lambda}_{k, h}^{-1} 
    \sum_{i=1}^{k-1} \Tilde{\Bar{\sigma}}_{i, h}^{-2} \phi \left( s_{h}^{i}, a_{h}^{i} \right) \left( \Tilde{\widehat{V}}_{k, h+1}(s_{h+1}^{i}) - \mathbb{P}_{h}\Tilde{\widehat{V}}_{k, h+1}(s_{h}^{i}, a_{h}^{i}) \right)}_{\Tilde{\Lambda}_{h, k}} \\
    &\leq \sqrt{4C^2H^2 \left[ \frac{d}{2} \log\left(1 + \frac{kC^2}{\Tilde{\lambda}_{\Lambda}}\right) + \log\left(\frac{HN_\varepsilon}{\delta}\right) \right] + \frac{8k^2C^4\varepsilon^2}{\Tilde{\lambda}_{\Lambda}}} \\
    &\leq \sqrt{4H \left[ \frac{d}{2} \log\left(1 + \frac{k}{\Tilde{\lambda}_{\Lambda} H}\right) + \log\left(\frac{HN_\varepsilon}{\delta}\right) \right] + \frac{8k^2\varepsilon^2}{\Tilde{\lambda}_{\Lambda} H^{2}}} \\
    &\leq \sqrt{4H \left[ \frac{d}{2} \log\left(1 + \frac{k}{\Tilde{\lambda}_{\Lambda} H}\right) + \log\left(\frac{HN_\varepsilon}{\delta}\right) \right] + 8} \\
    &= O \left(\sqrt{d^3 H^2 \log^2 \left(\frac{HK^{4}L^{2}d}{\delta \Tilde{\lambda}_{\Lambda}}\right)} \right) \\
\end{align*}
where the first inequality holds due to Lemma~\ref{lemma:martingale-ineq-weighted-linear}, the second inequality holds since $\Tilde{\bar{\sigma}}_{i,h}^{-2} \leq 1 / \sqrt{H}$, and the last inequality holds due to Lemma~\ref{lemma:optimistic-value-function-covering} and $\varepsilon = H\sqrt{d\Tilde{\lambda}_{\Lambda}} / K$. Putting everything together, we get
\[
    \norm{\Tilde{\widehat{w}}_{k,h} - w_{k, h}}_{\Tilde{\Lambda}_{h, k}} \leq O\left(HL\sqrt{d \Tilde{\lambda}_{\Lambda}} + \sqrt{d^3 H^2 \log^2 \left(\frac{HK^{4}L^{2}d}{\delta \Tilde{\lambda}_{\Lambda}}\right)} \right) = \widehat{\beta}
\]
Thus, using this, we can say
\begin{align*}
    \left| \Tilde{\widehat{w}}_{k,h}^\top \phi(s, a) - [\mathbb{P}_h \Tilde{\widehat{V}}_{k,h+1}](s, a) \right| &=  \left| \Tilde{\widehat{w}}_{k,h}^\top \phi(s, a) - w_{k, h}^{\top}  \phi(s, a)  \right| \\
    &\leq  \norm{\Tilde{\widehat{w}}_{k,h} - w_{k, h}}_{\Tilde{\Lambda}_{h, k}} \norm{\phi(s,a)}_{\Tilde{\Lambda}_{h, k}} \\
    &\leq \widehat{\beta} \sqrt{\phi(s, a)^\top \Tilde{\Lambda}_{k,h}^{-1} \phi(s, a)}
\end{align*}
where the first inequality holds due to Cauchy-Schwartz inequality. Replacing the value function class by the pessimistic value function class $\widecheck{\mathcal{V}}$ or the squared value function class $\widehat{\mathcal{V}}^{2}$ and using the same exact proof as above, we can derive the other upper estimation errors
\[
\left| \Tilde{\Bar{w}}_{k,h}^\top \phi(s, a) - [\mathbb{P}_h \Tilde{\widehat{V}}_{k,h+1}^2](s, a) \right| \leq \Bar{\beta} \sqrt{\phi(s, a)^\top \Tilde{\Lambda}_{k,h}^{-1} \phi(s, a)},
\]
\[
\left| \Tilde{\widecheck{w}}_{k,h}^\top \phi(s, a) - [\mathbb{P}_h \Tilde{\widecheck{V}}_{k,h+1}](s, a) \right| \leq \widecheck{\beta} \sqrt{\phi(s, a)^\top \Tilde{\Lambda}_{k,h}^{-1} \phi(s, a)},
\]
where 
\[
\widecheck{\beta} = O\left(HL\sqrt{d \Tilde{\lambda}_{\Lambda}} + \sqrt{d^3 H^2 \log^2 \left(\frac{HK^{4}L^{2}d}{\delta \Tilde{\lambda}_{\Lambda}}\right)} \right),
\]
and
\[
\bar{\beta} = O\left(H^{2}L^{2} \sqrt{d\Tilde{\lambda}_{\Lambda}} + \sqrt{d^3 H^4 \log^2 \left(\frac{HK^{4}L^{2}d}{\delta \Tilde{\lambda}_{\Lambda}}\right)} \right).
\]
\end{proof}

Now, we provide a bound on the variance estimator.

\begin{lemma}
\label{lemma:confidence-bounds-2}
Let $\widetilde{\mathcal{E}}_h$ be the event such that for all episodes $k \in [K]$, stages $h \leq h' \leq H$, and state-action pairs $(s, a) \in \mathcal{S} \times \mathcal{A}$, the weight vector $\widehat{w}_{k,h}$ satisfies 
\[
\left| \Tilde{\widehat{w}}_{k,h^{\prime}}^\top \phi(s, a) - [\mathbb{P}_h \Tilde{\widehat{V}}_{k,h^{\prime}+1}](s, a) \right| \leq \beta \sqrt{\phi(s, a)^\top \Tilde{\Lambda}_{k,h^{\prime}}^{-1} \phi(s, a)} \tag{B.1} \label{eq:bernstein-bound}
\]
where 
\[
\beta = O\left(HL\sqrt{d\Tilde{\lambda}_{\Lambda}} + \sqrt{d \log^2\left(1 + \left(\frac{HK^{4}L^{2}d}{\delta \Tilde{\lambda}_{\Lambda}} \right) \right)} \right).
\]
On the event $\mathcal{E}$ and $\widetilde{\mathcal{E}}_{h+1}$, for each episode $k \in [K]$ and stage $h$, the estimated variance satisfies:
\[
\left|[\overline{\mathbb{V}}_{h}\Tilde{\widehat{V}}_{k, h+1}](s_k^h, a_k^h) - [\mathbb{V}_h \Tilde{\widehat{V}}_{k, h+1}](s_k^h, a_k^h) \right| \leq E_{k,h},
\]
and
\[
\left|[\overline{\mathbb{V}}_{h}\Tilde{\widehat{V}}_{k, h+1}](s_h^k, a_h^k) - [\mathbb{V}_h V_h^{*}](s_h^k, a_h^k)\right| \leq E_{k,h} + D_{k,h}.
\]
where
\[
    E_{k, h} = \mathrm{min} \left\{ \overline{\beta}_{k} \norm{\Tilde{\Lambda}_{k,h}^{-1/2}\phi(s_{h}^{k}, a_{h}^{k})}_{2} ,H^{2} \right\} + \mathrm{min} \left\{ 2H\widehat{\beta}_{k} \norm{ \Tilde{\Lambda}_{k,h}^{-1/2}\phi(s_{h}^{k}, a_{h}^{k})}_{2} ,H^{2} \right\}
\]
and
\[
    D_{k, h} = \mathrm{min} \left\{ 4d^{3}H^{2} \left( \Tilde{\widehat{w}}_{k, h}^{\top} \phi(s, a) - \Tilde{\widecheck{w}}_{k, h}^{\top} \phi(s, a) + 2\widehat{\beta}_{k} \sqrt{\phi(s, a)^\top \Tilde{\Lambda}_{k,h}^{-1} \phi(s, a)} \right), d^{3}H^{3} \right\}
\]
\end{lemma}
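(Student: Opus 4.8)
The plan is to derive both inequalities from the algebraic identity defining the empirical variance operator, controlling each piece with the confidence event $\mathcal{E}$ from Lemma~\ref{lemma:confidence-bounds-1} and, for the second inequality, the optimism/pessimism sandwich supplied by event $\widetilde{\mathcal{E}}_{h+1}$. Throughout I abbreviate $\phi = \phi(s_h^k, a_h^k)$ and write $\norm{\phi}_{\Tilde{\Lambda}_{k,h}^{-1}} = \sqrt{\phi^\top \Tilde{\Lambda}_{k,h}^{-1}\phi}$. The first inequality is essentially bookkeeping, while the second requires the backward-induction structure of the lemma.

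\textbf{First inequality.} Recall $[\mathbb{V}_h \Tilde{\widehat{V}}_{k,h+1}](s,a) = [\mathbb{P}_h \Tilde{\widehat{V}}_{k,h+1}^2](s,a) - ([\mathbb{P}_h \Tilde{\widehat{V}}_{k,h+1}](s,a))^2$, whereas the estimate is $[\Tilde{\overline{w}}_{k,h}^\top\phi]_{[0,H^2]} - [\Tilde{\widehat{w}}_{k,h}^\top\phi]_{[0,H]}^2$. Adding and subtracting, I split the error into a second-moment part $\bigl|[\Tilde{\overline{w}}_{k,h}^\top\phi]_{[0,H^2]} - [\mathbb{P}_h\Tilde{\widehat{V}}_{k,h+1}^2]\bigr|$ and a first-moment part $\bigl|[\Tilde{\widehat{w}}_{k,h}^\top\phi]_{[0,H]}^2 - ([\mathbb{P}_h\Tilde{\widehat{V}}_{k,h+1}])^2\bigr|$. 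For the former, since $\mathbb{P}_h\Tilde{\widehat{V}}_{k,h+1}^2 \in [0,H^2]$ and clipping to $[0,H^2]$ is nonexpansive, the clip only helps, so event $\mathcal{E}$ yields the bound $\overline{\beta}\norm{\phi}_{\Tilde{\Lambda}_{k,h}^{-1}}$; trivially the difference of two quantities in $[0,H^2]$ is at most $H^2$, giving $\min\{\overline{\beta}\norm{\phi}_{\Tilde{\Lambda}_{k,h}^{-1}}, H^2\}$. For the latter I use $|a^2-b^2| \le 2H|a-b|$, valid because both clipped quantities lie in $[0,H]$, so the confidence bound on $\Tilde{\widehat{w}}_{k,h}$ from $\mathcal{E}$ gives $2H\widehat{\beta}\norm{\phi}_{\Tilde{\Lambda}_{k,h}^{-1}}$, again capped at $H^2$. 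Summing the two pieces is exactly $E_{k,h}$.

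\textbf{Second inequality.} By the triangle inequality it suffices to add to the first bound a control of $\bigl|[\mathbb{V}_h\Tilde{\widehat{V}}_{k,h+1}] - [\mathbb{V}_h V^*_{h+1}]\bigr|$. Using the identity $[\mathbb{V}_h V_1] - [\mathbb{V}_h V_2] = \mathbb{P}_h[(V_1-V_2)(V_1+V_2)] - (\mathbb{P}_h(V_1-V_2))(\mathbb{P}_h(V_1+V_2))$ with $V_1 = \Tilde{\widehat{V}}_{k,h+1}$, $V_2 = V^*_{h+1}$, and $0 \le V_1, V_2 \le H$, both terms are at most $2H\,\mathbb{P}_h|V_1 - V_2|$, so the variance gap is at most $4H\,\mathbb{P}_h|\Tilde{\widehat{V}}_{k,h+1} - V^*_{h+1}|$. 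On event $\widetilde{\mathcal{E}}_{h+1}$ the Bernstein bound~\eqref{eq:bernstein-bound} at stages $\ge h+1$ yields the sandwich $\Tilde{\widecheck{V}}_{k,h+1} \le V^*_{h+1} \le \Tilde{\widehat{V}}_{k,h+1}$, hence pointwise $|\Tilde{\widehat{V}}_{k,h+1} - V^*_{h+1}| \le \Tilde{\widehat{V}}_{k,h+1} - \Tilde{\widecheck{V}}_{k,h+1}$ and therefore $\mathbb{P}_h|\Tilde{\widehat{V}}_{k,h+1} - V^*_{h+1}| \le \mathbb{P}_h\Tilde{\widehat{V}}_{k,h+1} - \mathbb{P}_h\Tilde{\widecheck{V}}_{k,h+1}$. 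Finally I replace these population quantities by their estimates via event $\mathcal{E}$: $\mathbb{P}_h\Tilde{\widehat{V}}_{k,h+1} \le \Tilde{\widehat{w}}_{k,h}^\top\phi + \widehat{\beta}\norm{\phi}_{\Tilde{\Lambda}_{k,h}^{-1}}$ and $\mathbb{P}_h\Tilde{\widecheck{V}}_{k,h+1} \ge \Tilde{\widecheck{w}}_{k,h}^\top\phi - \widecheck{\beta}\norm{\phi}_{\Tilde{\Lambda}_{k,h}^{-1}}$, so (using $\widehat{\beta} = \widecheck{\beta}$) the gap is at most $\Tilde{\widehat{w}}_{k,h}^\top\phi - \Tilde{\widecheck{w}}_{k,h}^\top\phi + 2\widehat{\beta}\norm{\phi}_{\Tilde{\Lambda}_{k,h}^{-1}}$. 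The clean Lipschitz step itself needs only a factor of order $H$; the remaining polynomial factors in $D_{k,h}$ (the prefactor $4d^3H^2$ and the a priori cap $d^3H^3$) are inherited from the non-private LSVI-UCB\textsuperscript{++} design and are chosen consistently with the variance floor $2d^3H^2\norm{\phi}_{\Tilde{\Lambda}_{k,h}^{-1}}^{1/2}$ in $\Tilde{\overline{\sigma}}_{k,h}$, so any such inflation only weakens $D_{k,h}$ and is harmless downstream.

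\textbf{Main obstacle.} The routine part is the first inequality, which is purely deterministic given $\mathcal{E}$. The delicate step is the second: passing from the pointwise value gap to $\mathbb{P}_h(\Tilde{\widehat{V}}_{k,h+1} - \Tilde{\widecheck{V}}_{k,h+1})$ and then to the estimated weights requires the optimism/pessimism sandwich to be available \emph{at stage $h+1$}, which is exactly why the lemma is stated conditionally on $\widetilde{\mathcal{E}}_{h+1}$ and is established by backward induction on $h$ (the sandwich at $h+1$ feeds the variance bound at $h$, which in turn certifies the tighter Bernstein event $\widetilde{\mathcal{E}}_h$). Care is also needed because the privatized Gram matrix $\Tilde{\Lambda}_{k,h}$ carries the extra regularization mass $2\Tilde{\lambda}_\Lambda$ from the GOE perturbation, so the radii $\widehat{\beta}, \widecheck{\beta}, \overline{\beta}$ entering each step are the private versions from Lemma~\ref{lemma:confidence-bounds-1}; since those already absorb the privacy cost, the error terms $E_{k,h}$ and $D_{k,h}$ retain exactly the non-private form.
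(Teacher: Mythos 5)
Your proposal is correct and follows essentially the same route as the paper's own proof: the identical triangle-inequality split of the empirical-variance error into a second-moment term and a difference-of-squares term (bounded via $|a^2-b^2|\leq 2H|a-b|$ and event $\mathcal{E}$) yielding $E_{k,h}$, followed by the same factoring of $[\mathbb{V}_h\Tilde{\widehat{V}}_{k,h+1}]-[\mathbb{V}_h V^*_{h+1}]$ into a $4H\,\mathbb{P}_h(\Tilde{\widehat{V}}_{k,h+1}-V^*_{h+1})$ bound, the optimism/pessimism sandwich from Lemma~\ref{lemma:optimism-and-pessimism}, and the conversion to estimated weights via Lemma~\ref{lemma:confidence-bounds-1}. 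You even handle one point more carefully than the paper: the paper's displayed conclusion carries the prefactor $4H$ and cap $H^2$ rather than the stated $4d^3H^2$ and $d^3H^3$, and your remark that this $d^3H$ inflation only weakens $D_{k,h}$ (and is what Lemma~\ref{lemma:upper-confidence-bound3} later exploits) correctly closes that cosmetic gap.
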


\begin{proof}[Proof of Lemma~\ref{lemma:confidence-bounds-2}]
    We will first use Lemma~\ref{lemma:confidence-bounds-1}
    \begin{align*}
        &\left|[\overline{\mathbb{V}}_{ h}\Tilde{\widehat{V}}_{k, h+1}](s_k^h, a_k^h) - [\mathbb{V}_h \Tilde{\widehat{V}}_{k, h+1}](s_k^h, a_k^h) \right| \\
        &= \left| [ \Tilde{\overline{w}}_{k, h} \phi(s_{h}^{k}, a_{h}^{k}) ]_{[0, H^{2}]} - [\Tilde{\widehat{w}}_{k, h} \phi(s_{h}^{k}, a_{h}^{k}) ]_{[0, H]}^{2} - [\mathbb{P}_{h}\Tilde{\widehat{V}}_{k, h+1}]^{2}](s_{h}^{k}, a_{h}^{k}) -\left( [\mathbb{P}_{h}\Tilde{\widehat{V}}_{k, h+1}]](s_{h}^{k}, a_{h}^{k}) \right)^{2}  \right| \\
        &\leq \left| [ \Tilde{\overline{w}}_{k, h} \phi(s_{h}^{k}, a_{h}^{k}) ]_{[0, H^{2}]} -  [\mathbb{P}_{h}\Tilde{\widehat{V}}_{k, h+1}]^{2}](s_{h}^{k}, a_{h}^{k})\right| + \left| [\Tilde{\widehat{w}}_{k, h} \phi(s_{h}^{k}, a_{h}^{k}) ]_{[0, H]}^{2} - \left( [\mathbb{P}_{h}\Tilde{\widehat{V}}_{k, h+1}]_{k, h+1}](s_{h}^{k}, a_{h}^{k}) \right)^{2} \right| \\
        &= \left| [ \Tilde{\overline{w}}_{k, h} \phi(s_{h}^{k}, a_{h}^{k}) ]_{[0, H^{2}]} -  [\mathbb{P}_{h}\Tilde{\widehat{V}}_{k, h+1}]_{k, h+1}^{2}](s_{h}^{k}, a_{h}^{k})\right| \\
        &+ \left| [\Tilde{\widehat{w}}_{k, h} \phi(s_{h}^{k}, a_{h}^{k}) ]_{[0, H]} + [\mathbb{P}_{h}\Tilde{\widehat{V}}_{k, h+1}]_{k, h+1}](s_{h}^{k}, a_{h}^{k})  \right| \left| [\Tilde{\widehat{w}}_{k, h} \phi(s_{h}^{k}, a_{h}^{k}) ]_{[0, H]} - [\mathbb{P}_{h}\Tilde{\widehat{V}}_{k, h+1}]_{k, h+1}](s_{h}^{k}, a_{h}^{k})  \right| \\
        &\leq \mathrm{min} \left\{ \overline{\beta}_{k} \norm{\Tilde{\Lambda}_{k,h}^{-1/2}\phi(s_{h}^{k}, a_{h}^{k})}_{2} ,H^{2} \right\} + \mathrm{min} \left\{ 2H\widehat{\beta}_{k} \norm{ \Tilde{\Lambda}_{k,h}^{-1/2}\phi(s_{h}^{k}, a_{h}^{k})}_{2} ,H^{2} \right\} \\
        &= E_{k, h}
    \end{align*}
    where the first inequality holds from the triangle inequality and the second inequality holds from conditioning on $\mathcal{E}$ and the fact that $0 \leq [\Tilde{\widehat{w}}_{k, h} \phi_{s_{h}^{k}, s_{h}^{k}} ]_{[0, H]} + [\mathbb{P}_{h}V_{k, h+1}](s_{h}^{k}, a_{h}^{k}) \leq 2H$. Now,
    \begin{align*}
        &\left|[\mathbb{V}_{h}\Tilde{\widehat{V}}_{k, h+1}](s_h^k, a_h^k) - [\mathbb{V}_h V_h^{*}](s_h^k, a_h^k)\right| \\
        &= \left| [\mathbb{P}_{h}\Tilde{\widehat{V}}_{k, h+1}^{2}](s_{h}^{k}, a_{h}^{k}) - \left( [\mathbb{P}_{h}\Tilde{\widehat{V}}_{k, h+1}](s_{h}^{k}, a_{h}^{k}) \right)^{2} - [\mathbb{P}_{h}V_{h+1}^{* 2}](s_{h}^{k}, a_{h}^{k}) + \left( [\mathbb{P}_{h}V_{h+1}^{*}](s_{h}^{k}, a_{h}^{k}) \right)^{2}  \right| \\
        &\leq \left| [\mathbb{P}_{h} \left( \Tilde{\widehat{V}}_{k, h+1} - V_{h+1}^{*} \right)  \left( \Tilde{\widehat{V}}_{k, h+1} + V_{h+1}^{*} \right)](s_{h}^{k}, a_{h}^{k}) \right| \\
        &+ \left| \left( [\mathbb{P}_{h}\Tilde{\widehat{V}}_{h+1}](s_{h}^{k}, a_{h}^{k}) - [\mathbb{P}_{h}V^{*}_{h+1}](s_{h}^{k}, a_{h}^{k}) \right) \left( [\mathbb{P}_{h}\Tilde{\widehat{V}}_{k, h+1}](s_{h}^{k}, a_{h}^{k}) + [\mathbb{P}_{h}V^{*}_{h+1}](s_{h}^{k}, a_{h}^{k}) \right) \right| \\
        &\leq 4H \left( [\mathbb{P}_{h}\Tilde{\widehat{V}}_{k, h+1}](s_{h}^{k}, a_{h}^{k}) - [\mathbb{P}_{h}V^{*}_{h+1}](s_{h}^{k}, a_{h}^{k}) \right)
    \end{align*}
    where the first inequality holds from triangle inequality and the second inequality holds due to Lemma~\ref{lemma:optimism-and-pessimism} and the fact that $0 \leq V_{h+1}^{*}(s^{\prime} \leq V_{k, h+1}(s^{\prime}) \leq H$. Now, if we condition on $\mathcal{E}$ and $\Tilde{\mathcal{E}}$, we get 
    \begin{align*}
        &\left( [\mathbb{P}_{h}\Tilde{\widehat{V}}_{k, h+1}](s_{h}^{k}, a_{h}^{k}) - [\mathbb{P}_{h}V^{*}_{h+1}](s_{h}^{k}, a_{h}^{k}) \right) \\
        &\leq \left( [\mathbb{P}_{h}\Tilde{\widehat{V}}_{k, h+1}](s_{h}^{k}, a_{h}^{k}) - [\mathbb{P}_{h}\widecheck{V}_{k, h+1}](s_{h}^{k}, a_{h}^{k}) \right) \\
        &\leq \Tilde{\widehat{w}}_{k, h}^{\top} \phi(s, a) + \widehat{\beta} \sqrt{\phi(s, a)^\top \Tilde{\Lambda}_{k,h}^{-1} \phi(s, a)} - \Tilde{\widecheck{w}}_{k, h}^{\top} \phi(s, a) + \widecheck{\beta} \sqrt{\phi(s, a)^\top \Tilde{\Lambda}_{k,h}^{-1} \phi(s, a)}
    \end{align*}
    where the first inequality holds due to Lemma~\ref{lemma:optimism-and-pessimism} and the last inequality holds by Lemma~\ref{lemma:confidence-bounds-1}. Combining results, we get 
    \begin{align*}
        &\left| [\overline{\mathbb{V}}_{h}\Tilde{\widehat{V}}_{k, h+1}] (s_{h}^{k}, a_{h}^{k})  - [\mathbb{V}_{h}V_{h+1}^{*}] \right| \\
        &\leq \left| [\overline{\mathbb{V}}_{h}\Tilde{\widehat{V}}_{k, h+1}] (s_{h}^{k}, a_{h}^{k})  - [\mathbb{V}_{h}\Tilde{\widehat{V}}_{k, h+1}^{*}](s_{h}^{k}, a_{h}^{k}) \right| + \left| [\mathbb{V}_{h}\Tilde{\widehat{V}}_{k, h+1}] (s_{h}^{k}, a_{h}^{k})  - [\mathbb{V}_{h}V_{h+1}^{*}](s_{h}^{k}, a_{h}^{k}) \right| \\
        &\leq E_{k, h} + \mathrm{min} \left\{ 4H \left( \Tilde{\widehat{w}}_{k, h}^{\top} \phi(s, a) + \widehat{\beta} \sqrt{\phi(s, a)^\top \Tilde{\Lambda}_{k,h}^{-1} \phi(s, a)} - \Tilde{\widecheck{w}}_{k, h}^{\top} \phi(s, a) + \widecheck{\beta} \sqrt{\phi(s, a)^\top \Tilde{\Lambda}_{k,h}^{-1} \phi(s, a)} \right), H^{2} \right\}
    \end{align*}
\end{proof}

We also have another upper bound on the variance estimator.

\begin{lemma}
    \label{lemma:upper-confidence-bound3}
    On the event $\mathcal{E}$ and $\tilde{\mathcal{E}}_{h+1}$, for any episode $k$ and $i > k$, we have
    \[
    [\mathbb{V}_h(\Tilde{\widehat{V}}_{i, h+1} - V_{h+1}^*)](s_h^k, a_h^k) \leq D_{k, h}/d^3 H.
    \]
\end{lemma}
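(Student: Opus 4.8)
The plan is to sandwich the function $g := \Tilde{\widehat{V}}_{i,h+1} - V_{h+1}^{*}$ between $0$ and $\Tilde{\widehat{V}}_{k,h+1} - \Tilde{\widecheck{V}}_{k,h+1}$ pointwise, bound its conditional variance by a second moment, and then convert this into the weight-vector expression defining $D_{k,h}$ via Lemma~\ref{lemma:confidence-bounds-1}. First, working on the events $\mathcal{E}$ and $\Tilde{\mathcal{E}}_{h+1}$, I would invoke optimism and pessimism (Lemma~\ref{lemma:optimism-and-pessimism}): optimism at episode $i$ gives $\Tilde{\widehat{V}}_{i,h+1}(s') \geq V_{h+1}^{*}(s')$, so $g \geq 0$, while pessimism at episode $k$ gives $\Tilde{\widecheck{V}}_{k,h+1}(s') \leq V_{h+1}^{*}(s')$. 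The crucial ingredient is the cross-episode monotonicity $\Tilde{\widehat{V}}_{i,h+1} \leq \Tilde{\widehat{V}}_{k,h+1}$ for $i > k$, which follows from the $\min$ with $\Tilde{\widehat{Q}}_{k-1,h}$ in the update (line 11 of Algorithm~\ref{alg:dp-lsvi_ucb++}). Chaining these yields the pointwise bound
\[
0 \leq g(s') \leq \Tilde{\widehat{V}}_{k,h+1}(s') - \Tilde{\widecheck{V}}_{k,h+1}(s') \leq H.
\]

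Next, I would use that variance is dominated by the second moment, $[\mathbb{V}_h g](s_h^k,a_h^k) \leq [\mathbb{P}_h g^2](s_h^k,a_h^k)$, and then exploit $0 \leq g \leq H$ to bound $g^2 \leq H g$ pointwise, followed by the sandwich above, so that
\[
[\mathbb{V}_h g](s_h^k,a_h^k) \leq H\,[\mathbb{P}_h(\Tilde{\widehat{V}}_{k,h+1} - \Tilde{\widecheck{V}}_{k,h+1})](s_h^k,a_h^k).
\]
I would then apply Lemma~\ref{lemma:confidence-bounds-1} separately to the optimistic and pessimistic estimates at episode $k$, using $\widehat{\beta} = \widecheck{\beta}$, to obtain
\[
[\mathbb{P}_h(\Tilde{\widehat{V}}_{k,h+1} - \Tilde{\widecheck{V}}_{k,h+1})](s_h^k,a_h^k) \leq \Tilde{\widehat{w}}_{k,h}^{\top}\phi(s_h^k,a_h^k) - \Tilde{\widecheck{w}}_{k,h}^{\top}\phi(s_h^k,a_h^k) + 2\widehat{\beta}\,\norm{\phi(s_h^k,a_h^k)}_{\Tilde{\Lambda}_{k,h}^{-1}}.
\]
Multiplying by $H$ reproduces (a constant multiple of) the first argument of $D_{k,h}$ after dividing by $d^3 H$.

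Finally, I would combine this with the trivial estimate $[\mathbb{V}_h g] \leq [\mathbb{P}_h g^2] \leq H^2$ coming from $g \leq H$, so that the variance is at most the minimum of the two bounds, which is precisely $D_{k,h}/(d^3 H)$ (the extra factor of $4$ in the definition of $D_{k,h}$ leaves room to spare). The main obstacle I anticipate is rigorously justifying the cross-episode monotonicity under the rare-switching rule: between switches the $Q$-estimates are frozen, and at a determinant-doubling switch (line 10) the $\min$ with the previous estimate guarantees the new optimistic value cannot increase, so $\Tilde{\widehat{V}}_{i,h+1} \leq \Tilde{\widehat{V}}_{k,h+1}$ holds for all $i > k$; stating this cleanly and tying it to the algorithm's update is the only step beyond the routine variance-to-second-moment reduction and the direct invocation of Lemma~\ref{lemma:confidence-bounds-1}.
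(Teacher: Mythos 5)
Your proposal is correct and follows essentially the same route as the paper: variance bounded by the second moment, the pointwise bound $g^2 \leq Hg$ for $0 \leq g \leq H$, the cross-episode monotonicity $\Tilde{\widehat{V}}_{i,h+1} \leq \Tilde{\widehat{V}}_{k,h+1}$ from the $\min$-with-previous update rule, pessimism, and finally Lemma~\ref{lemma:confidence-bounds-1} to pass to the weight-vector expression in $D_{k,h}$, with the factor $4$ absorbing the constants. If anything, your version is slightly cleaner than the paper's, since you sandwich directly with the episode-$k$ pessimistic value function rather than introducing $\Tilde{\widecheck{V}}_{i,h+1}$ and then silently swapping it for episode-$k$ quantities when the confidence bound is applied.
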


\begin{proof}[Proof of Lemma~\ref{lemma:upper-confidence-bound3}]
   On the event $\mathcal{E}$ and $\tilde{\mathcal{E}}_{h+1}$, we have
   \begin{align*}
       [\mathbb{V}_h(\Tilde{\widehat{V}}_{i, h+1} - \mathbb{V}_{h+1}^*)](s_h^k, a_h^k) &\leq [\mathbb{P}_h(\Tilde{\widehat{V}}_{i, h+1} - V_{h+1}^*)^2](s_h^k, a_h^k) \\
       &\leq 2H[\mathbb{P}_h(\Tilde{\widehat{V}}_{i, h+1} - V_{h+1}^*)](s_h^k, a_h^k) \\
       &\leq 2H \left( [\mathbb{P}_h(\Tilde{\widehat{V}}_{i, h+1}](s_{h}^{k}, a_{h}^{k}) - [\mathbb{P}_h(\Tilde{\widecheck{V}}_{i, h+1}](s_{h}^{k}, a_{h}^{k})   \right) \\
       &\leq 2H \left( [\mathbb{P}_h(\Tilde{\widehat{V}}_{k, h+1}](s_{h}^{k}, a_{h}^{k}) - [\mathbb{P}_h(\Tilde{\widecheck{V}}_{i, h+1}](s_{h}^{k}, a_{h}^{k})   \right) \\
       &\leq 2H \left( \Tilde{\widehat{w}}_{k, h}^{\top} \phi(s, a) + \widehat{\beta} \sqrt{\phi(s, a)^\top \Tilde{\Lambda}_{k,h}^{-1} \phi(s, a)} - \Tilde{\widecheck{w}}_{k, h}^{\top} \phi(s, a) + \widecheck{\beta} \sqrt{\phi(s, a)^\top \Tilde{\Lambda}_{k,h}^{-1} \phi(s, a)}  \right)
   \end{align*}
where the first inequality holds due to $\text{Var}(x) \leq \mathbb{E}[x^2]$, the second and third inequalities hold due to Lemma~\ref{lemma:optimism-and-pessimism} with the fact that $0 \leq \Tilde{\widehat{V}}_{i, h+1}(s'), V_{h+1}^*(s') \leq H$, the fourth inequality holds because $V_{k, h+1} \geq V_{i, h+1}$ from the update rule in Algorithm~\ref{alg:dp-lsvi_ucb++}, and the fifth inequality holds due to Lemma~\ref{lemma:confidence-bounds-1}. On the other hand, since the value functions satisfy $0 \leq \Tilde{\widehat{V}}_{i, h+1}(s'), V_{h+1}^*(s') \leq H$, we have
\[
[V_h(V_{i, h+1} - V_{h+1}^*)](s_h^k, a_h^k) \leq \frac{d^3 H^3}{d^3 H} = H^{2}.
\]
\end{proof}

Here, we prove the optimism and pessimism of our privatized value function which we will use for the regret analysis.

\begin{lemma}[Privatized Optimism and Pessimism]
    \label{lemma:optimism-and-pessimism}
    On the event $\mathcal{E}$ and $\tilde{\mathcal{E}}_h$, for all episodes $k \in [K]$ and stages $h \leq h' \leq H$, we have
\[
\Tilde{\widehat{Q}}_{k, h}(s, a) \geq Q_h^*(s, a) \geq \Tilde{\widecheck{Q}}_{k, h}(s, a).
\]
In addition, we have
\[
\Tilde{\widehat{V}}_{k, h}(s) \geq V_h^*(s) \geq \Tilde{\widecheck{V}}_{k, h}(s).
\]
\end{lemma}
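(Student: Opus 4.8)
The plan is to establish both chains of inequalities simultaneously by a backward induction on the stage index $h$, running from $h = H+1$ down to $h = 1$, nested inside a forward induction on the episode index $k$ that is needed to cope with the lazy, rare-switching clipping of each $Q$-estimate against its predecessor. The base case is immediate: at $h = H+1$ every value function is identically zero, so $\Tilde{\widehat{Q}}_{k,H+1} = Q_{H+1}^* = \Tilde{\widecheck{Q}}_{k,H+1} = 0$ and likewise for the state-value functions, which gives the claim trivially. The structural fact that drives the inductive step is that $\mathbb{P}_h$ is an expectation against a probability kernel and is therefore monotone: if $V \geq V'$ pointwise then $[\mathbb{P}_h V](s,a) \geq [\mathbb{P}_h V'](s,a)$.

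For the optimistic side, fix $h$ and assume the inductive hypothesis $\Tilde{\widehat{V}}_{k,h+1}(s) \geq V_{h+1}^*(s)$ for all $s$. On the event $\mathcal{E}$, Lemma~\ref{lemma:confidence-bounds-1} gives $\Tilde{\widehat{w}}_{k,h}^\top \phi(s,a) + \widehat{\beta}\,\|\phi(s,a)\|_{\Tilde{\Lambda}_{k,h}^{-1}} \geq [\mathbb{P}_h \Tilde{\widehat{V}}_{k,h+1}](s,a)$, and monotonicity of $\mathbb{P}_h$ together with the hypothesis yields $[\mathbb{P}_h \Tilde{\widehat{V}}_{k,h+1}](s,a) \geq [\mathbb{P}_h V_{h+1}^*](s,a)$. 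Adding $r_h(s,a)$ and invoking the Bellman equation $Q_h^*(s,a) = r_h(s,a) + [\mathbb{P}_h V_{h+1}^*](s,a)$ shows that the unclipped candidate dominates $Q_h^*$. I then check that the algorithm's clipping preserves this: truncation at $H$ is harmless since $Q_h^* \leq H$, and the clipping against $\Tilde{\widehat{Q}}_{k-1,h}$ is discharged by the forward induction on $k$ (base case $\Tilde{\widehat{Q}}_{0,h} \equiv H \geq Q_h^*$, inductive case $\Tilde{\widehat{Q}}_{k-1,h} \geq Q_h^*$). Finally, evaluating the optimistic $Q$ at $a = \pi_h^*(s)$ gives $\Tilde{\widehat{V}}_{k,h}(s) = \max_a \Tilde{\widehat{Q}}_{k,h}(s,a) \geq \Tilde{\widehat{Q}}_{k,h}(s,\pi_h^*(s)) \geq V_h^*(s)$.

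The pessimistic side is entirely symmetric: the pessimistic confidence bound in $\mathcal{E}$ gives $\Tilde{\widecheck{w}}_{k,h}^\top \phi(s,a) - \widecheck{\beta}\,\|\phi(s,a)\|_{\Tilde{\Lambda}_{k,h}^{-1}} \leq [\mathbb{P}_h \Tilde{\widecheck{V}}_{k,h+1}](s,a)$, and combining this with monotonicity of $\mathbb{P}_h$ under $\Tilde{\widecheck{V}}_{k,h+1} \leq V_{h+1}^*$ and the lower clip yields $\Tilde{\widecheck{Q}}_{k,h}(s,a) \leq Q_h^*(s,a)$; maximizing over $a$ then gives $\Tilde{\widecheck{V}}_{k,h}(s) = \max_a \Tilde{\widecheck{Q}}_{k,h}(s,a) \leq \max_a Q_h^*(s,a) = V_h^*(s)$, which closes the backward induction.

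The main obstacle is not any individual inequality but the bookkeeping forced by the rare-switching clipping against $\Tilde{\widehat{Q}}_{k-1,h}$ and $\Tilde{\widecheck{Q}}_{k-1,h}$. Because the estimates are refreshed only when the determinant-doubling condition in Algorithm~\ref{alg:dp-lsvi_ucb++} fires, the $Q$-function at episode $k$ may be inherited from an earlier episode $k_{\text{last}}$, so I must argue that the confidence guarantee still applies to the stale estimate. This is where the uniform-over-$(k,h)$ nature of the event $\mathcal{E}$ in Lemma~\ref{lemma:confidence-bounds-1}, together with the variance control supplied by $\widetilde{\mathcal{E}}_{h+1}$ for the weights defining $\Tilde{\Lambda}_{k,h}$, is essential: it ensures the clipped quantity still dominates (respectively, is dominated by) $Q_h^*$ for every episode, so the two induction directions remain consistent.
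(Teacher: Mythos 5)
Your proof is correct and follows essentially the same route as the paper's: backward induction on the stage with base case $H+1$, combining the confidence-bound event with monotonicity of $\mathbb{P}_h$ and the Bellman optimality equation, and handling the rare-switching clipping by noting that the confidence bound holds uniformly over earlier episodes (the paper expresses this as a $\min_{1 \leq i \leq k}$ over past candidates rather than your nested forward induction on $k$, but these are equivalent). The only cosmetic difference is that you invoke the Hoeffding-type event $\mathcal{E}$ with radius $\widehat{\beta}$ for the key inequality where the paper cites the Bernstein-type event $\tilde{\mathcal{E}}_h$ with radius $\beta$; since the lemma conditions on both events and the algorithm's bonus is exactly $\widehat{\beta}$, your choice is equally valid (arguably more direct).
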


\begin{proof}[Proof of Lemma~\ref{lemma:optimism-and-pessimism}]
    As we would usually do, we will prove optimism and pessimism using induction. First, consider the base case $H+1$. For all states $s \in S$ and actions $a \in A$, we have
    
\[
\Tilde{\widehat{Q}}_{k, H+1}(s, a) = Q_h^*(s, a) = \Tilde{\widecheck{Q}}_{k, h}(s, a) = 0 \quad \text{and} \quad \Tilde{\widehat{V}}_{k, h}(s) \geq V_h^*(s) \geq \Tilde{\widecheck{V}}_{k, h}(s) = 0.
\]

Thus, we have shown the base case. Now, consider stage $h + 1$. Since the event $\tilde{\mathcal{E}}_h$ directly implies the event $\tilde{\mathcal{E}}_{h+1}$, according to the induction hypothesis, we have

\[
\Tilde{\widehat{V}}_{k, h+1}(s) \geq V_{h+1}^*(s) \geq \Tilde{\widecheck{V}}_{k, h+1}(s).
\]

Thus, for all episodes $k \in [K]$, we have

\[
r_h(s, a) + \Tilde{\widehat{w}}_k^\top \phi(s, a) + \widehat{\beta} \sqrt{\phi(s, a)^\top \Tilde{\Lambda}_{k, h}^{-1} \phi(s, a)} - Q_h^*(s, a) \geq [\mathbb{P}_h(\Tilde{\widehat{V}}_{k, h+1} - V_{h+1}^*)](s, a) \geq 0,
\]

where the first inequality holds by conditioning on event $\tilde{\mathcal{E}}_h$. Additionally, we have

\[
Q_h^*(s, a) \leq \min \left\{ \min_{1 \leq i \leq k} \big( r_h(s, a) + \Tilde{\widehat{w}}_i^\top \phi(s, a) + \widehat{\beta} \sqrt{\phi(s, a)^\top \Tilde{\Lambda}_{i, h}^{-1} \phi(s, a)} \big), H \right\} \leq \Tilde{\widehat{Q}}_{k, h}(s, a).
\]

With a similar argument, for the pessimistic action-value function $\Tilde{\widecheck{Q}}_{k, h}(s, a)$, we have
\[
r_h(s, a) + \Tilde{\widecheck{w}}_k^\top \phi(s, a) - \widecheck{\beta} \sqrt{\phi(s, a)^\top \Tilde{\Lambda}_{k, h}^{-1} \phi(s, a)} - Q_h^*(s, a) \leq [\mathbb{P}_h(\Tilde{\widecheck{V}}_{k, h+1} - V_{h+1}^*)](s, a) \leq 0.
\]
Since the optimal value function is lower bounded by $Q_h^*(s, a) \geq 0$, the result further implies that
\[
Q_h^*(s, a) \geq \max \left\{ \max_{1 \leq i \leq k} \big( r_h(s, a) + \Tilde{\widecheck{w}}_{\text{last}, h}^\top \phi(s, a) + \widecheck{\beta} \sqrt{\phi(s, a)^\top \Tilde{\Lambda}_{\text{last}, h}^{-1} \phi(s, a)} \big), 0 \right\} \geq \Tilde{\widecheck{Q}}_{k, h}(s, a).
\]

In addition, we have
\[
\Tilde{\widehat{V}}_{k, h}(s) = \max_a \Tilde{\widehat{Q}}_{i, h}(s, a) \geq \min_{1 \leq i \leq k} \max_a Q_h^*(s, a) = V_h^*(s),
\]
\[
\Tilde{\widecheck{V}}_{k, h}(s) = \max_a \Tilde{\widehat{Q}}_{i, h}(s, a) \leq \max_{1 \leq i \leq k} \max_a Q_h^*(s, a) = V_h^*(s),
\]
\end{proof}
Now, we will also provide a Bernstein-type upper bound on the estimation error using what have proven so far. This is much sharper than Lemma~\ref{lemma:confidence-bounds-1}.

\begin{lemma}
    Define $\widetilde{\mathcal{E}} = \widetilde{\mathcal{E}}_1$ as the event such that \ref{eq:bernstein-bound} holds for all stages $h \in [H]$. On the events $\mathcal{E}$, event $\Tilde{\mathcal{E}}$ holds with probability at least $1 - \delta$
\end{lemma}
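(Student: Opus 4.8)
The plan is to establish \eqref{eq:bernstein-bound} by backward induction on the stage $h$, following the argument of \citet{he2023nearlyminimaxoptimalreinforcement} but carrying the privatized quantities throughout. Conditioning on $\mathcal{E}$, which holds with probability at least $1 - 7\delta$ by Lemma~\ref{lemma:confidence-bounds-1}, I would show that each per-stage event holds with a small failure probability and union-bound over $h \in [H]$. The base case $h = H+1$ is immediate since all value functions vanish there; at the inductive step I assume $\widetilde{\mathcal{E}}_{h+1}$ and prove \eqref{eq:bernstein-bound} at stage $h$.

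First I would reuse the estimation-error decomposition from the proof of Lemma~\ref{lemma:confidence-bounds-1}, splitting $\norm{\Tilde{\widehat{w}}_{k,h} - w_{k,h}}_{\Tilde{\Lambda}_{k,h}}$ into three privacy-perturbation terms, involving $\phi_1$, $K_1 w_{k,h}$, and $2\Tilde{\lambda}_{\Lambda} w_{k,h}$, plus the variance-weighted martingale term. The three perturbation terms are bounded exactly as in the Hoeffding case, using $\norm{\phi_1}_2 \le L$ and $\norm{K_1}_2 \le \Tilde{\lambda}_{\Lambda}$ from Lemma~\ref{lemma:private-utility-analysis} together with $\norm{w_{k,h}}_2 \le H\sqrt{d}$; jointly they contribute the $O(HL\sqrt{d\Tilde{\lambda}_{\Lambda}})$ term of $\beta$.

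The crux is to bound the variance-weighted martingale term with a self-normalized Bernstein inequality, the Bernstein analogue of Lemma~\ref{lemma:martingale-ineq-weighted-linear}, whose leading factor is governed by the accumulated conditional variance $\sum_{i=1}^{k-1} \Tilde{\Bar{\sigma}}_{i,h}^{-4} [\mathbb{V}_h \Tilde{\widehat{V}}_{k,h+1}](s_h^i, a_h^i)$ rather than by the crude range $H^2$. To collapse this factor I would show that $\Tilde{\Bar{\sigma}}_{i,h}^2$ upper-bounds $[\mathbb{V}_h \Tilde{\widehat{V}}_{k,h+1}](s_h^i, a_h^i)$ up to a universal constant, by chaining the variance estimates: from $\Tilde{\Bar{\sigma}}_{i,h}^2 \ge \Tilde{\sigma}_{i,h}^2 = [\overline{\mathbb{V}}_{i,h} \Tilde{\widehat{V}}_{i,h+1}](s_h^i, a_h^i) + E_{i,h} + D_{i,h} + H$, Lemma~\ref{lemma:confidence-bounds-2} guarantees that the estimated variance plus $E_{i,h}$ dominates the true variance of $\Tilde{\widehat{V}}_{i,h+1}$, while the $D_{i,h}$ term together with Lemma~\ref{lemma:upper-confidence-bound3} absorbs the gaps between the variances of $\Tilde{\widehat{V}}_{i,h+1}$, of $V_{h+1}^*$, and of the current-episode estimate $\Tilde{\widehat{V}}_{k,h+1}$, using the cross-episode monotonicity of the optimistic estimates. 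Hence $\Tilde{\Bar{\sigma}}_{i,h}^{-2}[\mathbb{V}_h \Tilde{\widehat{V}}_{k,h+1}](s_h^i, a_h^i) = O(1)$, the weighted-variance sum reduces to $O(\sum_{i=1}^{k-1} \Tilde{\Bar{\sigma}}_{i,h}^{-2})$ and is controlled by the standard elliptical-potential argument, producing a leading term of order $\sqrt{d \log^2(\cdots)}$, the second term of $\beta$, in place of the $\sqrt{d^3 H^2 \log^2(\cdots)}$ of the Hoeffding bound.

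Finally I would discretize $\Tilde{\widehat{V}}_{k,h+1}$ over an $\varepsilon$-net of the optimistic value-function class, paying only a $\log N_\varepsilon$ term through the covering-number bound of Lemma~\ref{lemma:optimistic-value-function-covering}, choose $\varepsilon = H\sqrt{d\Tilde{\lambda}_{\Lambda}}/K$ so the discretization residual is negligible, convert the $\Tilde{\Lambda}_{k,h}$-norm estimate into the pointwise inequality \eqref{eq:bernstein-bound} by Cauchy--Schwarz, and union-bound over $h \in [H]$ to obtain $\widetilde{\mathcal{E}} = \widetilde{\mathcal{E}}_1$ with total failure probability at most $\delta$. \textbf{The main obstacle} is the variance-control chaining: one must verify that neither the privacy perturbations $\phi_1, K_1, \Tilde{\lambda}_{\Lambda}$ nor the estimated-variance errors $E_{i,h}, D_{i,h}$ built from privatized statistics inflate the proxy $\Tilde{\Bar{\sigma}}_{i,h}^2$ enough to break its domination of $[\mathbb{V}_h \Tilde{\widehat{V}}_{k,h+1}]$, since it is precisely this domination that turns the Hoeffding rate into the sharper Bernstein rate and is where privatization interacts most delicately with the variance-weighted regression.
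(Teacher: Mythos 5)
Your overall scaffolding---conditioning on $\mathcal{E}$, backward induction through $\widetilde{\mathcal{E}}_{h+1}$, the four-term decomposition of $\norm{\Tilde{\widehat{w}}_{k,h} - w_{k,h}}_{\Tilde{\Lambda}_{k,h}}$ with the privacy terms bounded via Lemma~\ref{lemma:private-utility-analysis}, and a final Cauchy--Schwarz plus union bound over $h$---matches the paper. But there is a genuine gap in the crux step. You propose to show $\Tilde{\Bar{\sigma}}_{i,h}^{-2}[\mathbb{V}_h \Tilde{\widehat{V}}_{k,h+1}](s_h^i,a_h^i) = O(1)$ and then apply a self-normalized Bernstein inequality to the \emph{full} function $\Tilde{\widehat{V}}_{k,h+1}$, discretized over the $\varepsilon$-net of $\widehat{\mathcal{V}}_h$ and paying $\log N_\varepsilon$ in the union bound. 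This cannot give the claimed $\beta = O\bigl(HL\sqrt{d\Tilde{\lambda}_{\Lambda}} + \sqrt{d\log^2(\cdots)}\bigr)$: since $\Tilde{\widehat{V}}_{k,h+1}$ depends on data from episodes after $i$, the covering union bound is unavoidable for it, and $\log N_\varepsilon = O\bigl(d^3H^2\log^2(\cdots)\bigr)$ by Lemma~\ref{lemma:optimistic-value-function-covering}. A Bernstein bound with per-step conditional variance $\sigma^2 = O(1)$ (your chaining gives exactly this) multiplied by the covering entropy yields a term of order $\sqrt{\sigma^2 \cdot d \cdot \log N_\varepsilon} = \sqrt{d \cdot d^3 H^2}\,\mathrm{polylog}$, i.e., $d^2 H$ up to logs---no better than the Hoeffding bound of Lemma~\ref{lemma:confidence-bounds-1} that this lemma is supposed to sharpen. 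An $O(1)$ variance ratio is simply too weak once you must union-bound over the net.

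The paper's proof avoids this with a decomposition you never invoke: it splits $\Tilde{\widehat{V}}_{k,h+1} = V^*_{h+1} + \Delta\Tilde{\widehat{V}}_{k,h+1}$ and treats the two pieces with \emph{different} variance scales. For the $V^*_{h+1}$ piece, the noise $V^*_{h+1}(s^i_{h+1}) - [\mathbb{P}_h V^*_{h+1}](s^i_h,a^i_h)$ is adapted because $V^*_{h+1}$ is a fixed function, so Lemma~\ref{aux:zhou-and-gu} applies with \emph{no covering at all}; here the $O(1)$ variance ratio (guaranteed on $\mathcal{E}\cap\widetilde{\mathcal{E}}_{h+1}$ by Lemma~\ref{lemma:confidence-bounds-2}, activated through the indicator inside $\eta_i$) is enough and produces the $O(\sqrt{d}\,\log)$ term. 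For the $\Delta\Tilde{\widehat{V}}_{k,h+1}$ piece, covering \emph{is} required, but the paper shows via $D_{i,h}$, Lemma~\ref{lemma:upper-confidence-bound3}, and the cross-episode monotonicity $\Tilde{\widehat{V}}_{k,h+1} \le \Tilde{\widehat{V}}_{i,h+1}$ that its variance ratio is $\le 1/(d^3 H)$---small enough that, even after multiplying by the covering entropy, the contribution stays at the target order. You cite the right ingredients (Lemmas~\ref{lemma:confidence-bounds-2} and~\ref{lemma:upper-confidence-bound3}, $D_{i,h}$, monotonicity), but you deploy them to prove the wrong inequality: $O(1)$ domination of the variance of the whole optimistic value function, rather than $O(1)$ domination for the fixed, covering-free part and $O(1/(d^3H))$ domination for the data-dependent, covering-dependent part. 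Without that asymmetric split, the Bernstein machinery buys you nothing over Hoeffding.
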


\begin{proof}
    For any fixed stage $h \in [H]$ and the optimistic private value function $\Tilde{\widehat{V}}_{k,h+1}$, by Lemma~\ref{lem:variance-linear-mdp}, there exists a vector $w_{k, h+1}$ such that $\mathbb{P}_h \Tilde{\widehat{V}}_{k,h+1}(s, a)$ can be represented as $w_{k, h}^{\top}\phi(s, a)$ with $\|w_{k, h}\|_2 \leq H\sqrt{d}$. Then, we can decompose the estimation error $\norm{\Tilde{\widehat{w}}_{k,h} - w_{k, h}}_{\Tilde{\Lambda}_{h, k}}$ as 

\begin{align*}
   &\norm{\Tilde{\Lambda}_{k, h}^{-1} 
    \left[ \sum_{i=1}^{k-1} \Tilde{\Bar{\sigma}}_{i, h}^{-2} \phi \left( s_{h}^{i}, a_{h}^{i} \right)\Tilde{\widehat{V}}_{k, h+1}(s_{h+1}^{i}) + \phi_{1} \right] - \Tilde{\Lambda}_{k, h}^{-1} 
    \left[ 2 \Tilde{\lambda}_{\Lambda} I_{d} 
    + \sum_{i=1}^{k-1} \Tilde{\Bar{\sigma}}_{i, h}^{-2} \phi \left( s_{h}^{i}, a_{h}^{i} \right) \phi \left( s_{h}^{i}, a_{h}^{i} \right)^{\top} 
    + K_{1} \right]w_{k, h}}_{\Tilde{\Lambda}_{h, k}} \\
    &= \norm{\Tilde{\Lambda}_{k, h}^{-1} 
    \sum_{i=1}^{k-1} \Tilde{\Bar{\sigma}}_{i, h}^{-2} \phi \left( s_{h}^{i}, a_{h}^{i} \right) \left( \Tilde{\widehat{V}}_{k, h+1}(s_{h+1}^{i}) - \mathbb{P}_{h}\Tilde{\widehat{V}}_{k, h+1}(s_{h}^{i}, a_{h}^{i}) \right) + \Tilde{\Lambda}_{k, h}^{-1}\phi_{1} + \Tilde{\Lambda}_{h, k}^{-1}w_{k, h}K_{1} - 2 \Tilde{\lambda}_{\Lambda} \Tilde{\Lambda}_{k, h}^{-1}w_{k, h}}_{\Tilde{\Lambda}_{h, k}} \\
    &\leq \norm{\Tilde{\Lambda}_{k, h}^{-1}\phi_{1}}_{\Tilde{\Lambda}_{h, k}} + \norm{\Tilde{\Lambda}_{h, k}^{-1} w_{k, h}K_{1}}_{\Tilde{\Lambda}_{h, k}} + \norm{2 \Tilde{\lambda}_{\Lambda} \Tilde{\Lambda}_{k, h}^{-1}w_{k, h}}_{\Tilde{\Lambda}_{h, k}} + \\
    &\norm{\Tilde{\Lambda}_{k, h}^{-1} 
    \sum_{i=1}^{k-1} \Tilde{\Bar{\sigma}}_{i, h}^{-2} \phi \left( s_{h}^{i}, a_{h}^{i} \right) \left( \Tilde{\widehat{V}}_{k, h+1}(s_{h+1}^{i}) - \mathbb{P}_{h}\Tilde{\widehat{V}}_{k, h+1}(s_{h}^{i}, a_{h}^{i}) \right)}_{\Tilde{\Lambda}_{h, k}}
\end{align*}

where the first inequality holds from $\norm{a + b}_{\Sigma} \leq \norm{a}_{\Sigma} + \norm{b}_{\Sigma}$. For the first term, we know that by construction, $\Tilde{\Lambda}_{k, h}^{-1} \preceq 1 / \Tilde{\lambda}_{\Lambda}$. Additionally, by utility (Lemma~\ref{lemma:private-utility-analysis}), we have that $\norm{\phi_{1}}_{2} \leq L$. Putting these together, we get
\[
    \norm{\Tilde{\Lambda}_{k, h}^{-1}\phi_{1}}_{\Tilde{\Lambda}_{h, k}} \leq L \sqrt{\frac{1}{\Tilde{\lambda}_{\Lambda}}} \leq HL \sqrt{d \Tilde{\lambda}_{\Lambda}}
\]
For the second term, we have that $\|w_{k, h}\|_2 \leq H\sqrt{d}$. Again, by utility, we have that $\norm{K_{1}}_{2} \leq \Tilde{\lambda}_{\Lambda}$. Thus, we get 
\[
    \norm{\Tilde{\Lambda}_{h, k}^{-1}w_{k, h}K_{1}}_{\Tilde{\Lambda}_{h, k}} \leq H \sqrt{d \Tilde{\lambda}_{\Lambda}} \leq HL \sqrt{d \Tilde{\lambda}_{\Lambda}}
\]
For the third term, using the facts we have described above, we get 
\[
    \norm{2 \Tilde{\lambda}_{\Lambda} \Tilde{\Lambda}_{k, h}^{-1}w_{k, h}}_{\Tilde{\Lambda}_{h, k}} \leq 2H \sqrt{d \Tilde{\lambda}_{\Lambda}} \leq 2HL \sqrt{d \Tilde{\lambda}_{\Lambda}}
\]

For the last term,

\begin{align*}
    &\norm{\Tilde{\Lambda}_{k, h}^{-1} 
    \sum_{i=1}^{k-1} \Tilde{\Bar{\sigma}}_{i, h}^{-2} \phi \left( s_{h}^{i}, a_{h}^{i} \right) \left( \Tilde{\widehat{V}}_{k, h+1}(s_{h+1}^{i}) - \mathbb{P}_{h}\Tilde{\widehat{V}}_{k, h+1}(s_{h}^{i}, a_{h}^{i}) \right)}_{\Tilde{\Lambda}_{h, k}} \\
    &= \norm{ 
    \sum_{i=1}^{k-1} \Tilde{\Bar{\sigma}}_{i, h}^{-2} \phi \left( s_{h}^{i}, a_{h}^{i} \right) \left( \Tilde{\widehat{V}}_{k, h+1}(s_{h+1}^{i}) - \mathbb{P}_{h}\Tilde{\widehat{V}}_{k, h+1}(s_{h}^{i}, a_{h}^{i}) \right)}_{\Tilde{\Lambda}_{h, k}^{-1}} \\
    &\leq \norm{ 
    \sum_{i=1}^{k-1} \Tilde{\Bar{\sigma}}_{i, h}^{-2} \phi \left( s_{h}^{i}, a_{h}^{i} \right) \left( V^{*}_{h+1}(s_{h+1}^{i}) - \mathbb{P}_{h}V^{*}_{h+1}(s_{h}^{i}, a_{h}^{i}) \right)}_{\Tilde{\Lambda}_{h, k}^{-1}} \\
    &+ \norm{ 
    \sum_{i=1}^{k-1} \Tilde{\Bar{\sigma}}_{i, h}^{-2} \phi \left( s_{h}^{i}, a_{h}^{i} \right) \left( \Delta \Tilde{\widehat{V}}_{k, h+1}(s_{h+1}^{i}) - \mathbb{P}_{h}\Delta \Tilde{\widehat{V}}_{k, h+1}(s_{h}^{i}, a_{h}^{i}) \right)}_{\Tilde{\Lambda}_{h, k}^{-1}}
\end{align*}
where we define $\Delta \Tilde{\widehat{V}}_{k, h+1} = \Tilde{\widehat{V}}_{k, h+1} - V_{h+1}^{*}$. For the first term, we use the result from Zhou and Gu (Lemma~\ref{aux:zhou-and-gu}) where 

\[
    x_{i} = \Tilde{\overline{\sigma}}_{i, h}^{-1}\phi(s_{h}^{i}, a_{h}^{i})
\]

and

\[
    \eta_{i} = \mathbbm{1} \left\{ [\mathbb{V}_{h} V_{h+1}^{*}](s_{h}^{i}, a_{h}^{i}) \leq \Tilde{\overline{\sigma}}_{i, h}^{2} \right\} \left( \Tilde{\overline{\sigma}}_{i, h}^{-1} (V^*_{h+1}(s^i_{h+1}) - [\mathbb{P}_hV^*_{h+1}](s^i_h, a^i_h)) \right)
\]
Then, we have the following:
\[
\|x_i\|_2 = \left\|\Tilde{\overline{\sigma}}_{i, h}^{-1}\phi(s_{h}^{i}, a_{h}^{i})\right\|_2 \leq \frac{\|\phi(s^i_h, a^i_h)\|_2}{\sqrt{H}} \leq \frac{1}{\sqrt{H}},
\]
\[
\mathbb{E}[\eta_i | \mathcal{F}_i] = 0, \quad |\eta_t| \leq \left| \left( \Tilde{\overline{\sigma}}_{i, h}^{-1} (V^*_{h+1}(s^i_{h+1}) - [\mathbb{P}_hV^*_{h+1}](s^i_h, a^i_h)) \right) \right| \leq \sqrt{H},
\]
\[
\mathbb{E}[\eta_i^2 | \mathcal{F}_i] = \mathbb{E}\left[
\mathbbm{1} \left\{ [\mathbb{V}_{h} V_{h+1}^{*}](s_{h}^{i}, a_{h}^{i}) \leq \Tilde{\overline{\sigma}}_{i, h}^{2} \right\} \cdot \Tilde{\overline{\sigma}}^{-2}_{i,h} [\mathbb{V}_hV^*_{h+1}](s^i_h, a^i_h)
\right] \leq 1,
\]
\[
\max_i \left\{|\eta_i| \cdot \min\{1, \|x_i\|_{\Tilde{\Lambda}^{-1}_{i,h}}\}\right\} \leq 2H \Tilde{\overline{\sigma}}^{-1}_{i,h} \|x_i\|_{\Tilde{\Lambda}_{i,h}^{-1}} \leq \sqrt{d}.
\]

Thus, with probability at least $1 - \delta/H$, for all $k \in [K]$, we have
\[
\left\|\sum_{i=1}^{k-1} x_i \eta_i\right\|_{\Tilde{\Lambda}^{-1}_{k,h}} \leq O\left(\sqrt{d \log^2\left(1 + \left(\frac{HK^{4}L^{2}d}{\delta \Tilde{\lambda}_{\Lambda}} \right) \right)} \right).
\]

In addition, on the event $\tilde{\mathcal{E}}_{h+1}$ and $\mathcal{E}$, according to Lemma B.2, we have
\[
\Tilde{\overline{\sigma}}^2_{k,h} \geq [\overline{\mathbb{V}}_{k,h} \Tilde{\widehat{V}}_{k,h+1}](s^k_h, a^k_h) + E_{k,h} + D_{k,h} \geq [\mathbb{V}_hV^*_{h+1}](s^k_h, a^k_h),
\]
which further implies that
\[
\norm{ 
    \sum_{i=1}^{k-1} \Tilde{\Bar{\sigma}}_{i, h}^{-2} \phi \left( s_{h}^{i}, a_{h}^{i} \right) \left( V^{*}_{h+1}(s_{h+1}^{i}) - \mathbb{P}_{h}V^{*}_{h+1}(s_{h}^{i}, a_{h}^{i}) \right)}_{\Tilde{\Lambda}_{h, k}^{-1}}
= \left\|\sum_{i=1}^{k-1} x_i \eta_i\right\|_{\Tilde{\Lambda}^{-1}_{k,h}} \leq O\left(\sqrt{d \log^2\left(1 + \left(\frac{HK^{4}L^{2}d}{\delta \Tilde{\lambda}_{\Lambda}} \right) \right)} \right).
\]

For the second term, we cannot directly use Lemma~\ref{aux:zhou-and-gu} since the stochastic noise 
\[
\Delta \Tilde{\widehat{V}}_{k,h+1}(s^i_{h+1}) - [\mathbb{P}_h(\Delta \Tilde{\widehat{V}}_{k,h+1})](s^i_h, a^i_h)
\]
is not $\mathcal{F}_{i+1}$ measurable. Thus, we need to use the $\varepsilon$-net covering argument. For each episode $i$, the value function $V_{i,h}$ belongs to the optimistic value function class $\mathcal{V}$. If we set $\varepsilon = \sqrt{\Tilde{\lambda}_{\Lambda}}/(4H^2d^2K)$, then according to Lemma~\ref{lemma:optimistic-value-function-covering}, the covering entropy for the function class $\mathcal{V} - V^*_{h+1}$ is upper bounded by
\[
\log N_\varepsilon \leq O(d^3H^2 \log^2(HK^{4}L^{2}d/\Tilde{\lambda}_{\Lambda})).
\]
Then for function $\Tilde{\widehat{V}}_{k,h}$, there must exist a function $\tilde{V}$ in the $\varepsilon$-net, such that
\[
\text{dist}(\Delta \Tilde{\widehat{V}}_{k,h}, \tilde{V}) \leq \varepsilon.
\]
Therefore, the variance of function $\tilde{V}$ is upper bounded by
\[
\begin{aligned}
    [\mathbb{V}_h \tilde{V}](s^i_h, a^i_h) - [\mathbb{V}_h(\Delta \Tilde{\widehat{V}}_{k,h+1})](s^i_h, a^i_h) 
    &= [\mathbb{P}_h \tilde{V}^2](s^i_h, a^i_h) - [\mathbb{P}_h(\Delta \Tilde{\widehat{V}}_{k,h+1})^2](s^i_h, a^i_h) \\
    &\quad + ([\mathbb{P}_h(\Delta \Tilde{\widehat{V}}_{k,h+1})](s^i_h, a^i_h))^2 - (\mathbb{P}_h \tilde{V}(s^i_h, a^i_h))^2 \\
    &\leq 2 \, \text{dist}(\Delta \Tilde{\widehat{V}}_{k,h}, \tilde{V}) \cdot \max_{s'} |\Delta \Tilde{\widehat{V}}_{k,h+1} + \tilde{V}|(s') \\
    &\leq 4H \cdot \text{dist}(\Delta \Tilde{\widehat{V}}_{k,h}, \tilde{V}) \\
    &\leq \frac{1}{d^2}
\end{aligned}
\]
where the first inequality holds due to the definition of distance between different functions, the third inequality holds since $|\Delta V_{k,h+1}(s') + \tilde{V}(s')| \leq 2H$, and the last inequality holds due to the definition of $\varepsilon$-net. Again, we make use of Lemma~\ref{aux:zhou-and-gu} with the following:
\[
x_{i} = \Tilde{\overline{\sigma}}_{i, h}^{-1}\phi(s_{h}^{i}, a_{h}^{i})
\]

and

\[
\eta_i = 1\{[\mathbb{V}_h \tilde{V}](s^i_h, a^i_h) \leq \Tilde{\overline{\sigma}}^2_i / (d^3 H)\} \cdot \Tilde{\overline{\sigma}}_i^{-1} (\tilde{V}(s^i_{h+1}) - [\mathbb{P}_h \tilde{V}](s^i_h, a^i_h)).
\]

Therefore, for $x_t, \eta_t$, we have the following property:
\[
\|x_i\|_2 = \|\Tilde{\overline{\sigma}}_{i, h}^{-1}\phi(s_{h}^{i}, a_{h}^{i})\|_2 \leq \|\phi(s^i_h, a^i_h)\|_2 / \sqrt{H} \leq 1 / \sqrt{H},
\]
\[
\mathbb{E}[\eta_i \mid \mathcal{F}_i] = 0, \quad |\eta_t| \leq \left|\Tilde{\overline{\sigma}}_i^{-1} (V^*_h(s^i_{h+1}) - [\mathbb{P}_h \tilde{V}_{h+1}](s^i_h, a^i_h))\right| \leq \sqrt{H},
\]
\[
\mathbb{E}[\eta_i^2 \mid \mathcal{F}_i] = \mathbb{E} \left[ 1\{[\mathbb{V}_h \tilde{V}](s^i_h, a^i_h) \leq \Tilde{\overline{\sigma}}_i^2 / (d^3 H)\} \cdot \Tilde{\overline{\sigma}}_i^{-2} [\mathbb{V}_h \tilde{V}](s^i_h, a^i_h) \right] \leq \frac{1}{d^3 H},
\]
\[
\max_i \left\{|\eta_i| \cdot \min\{1, \|x_i\|_{\Sigma^{-1}_{i,h}}\}\right\} \leq 2H \Tilde{\overline{\sigma}}_i^{-1} \|x_i\|_{\Tilde{\Lambda}^{-1}_{i,h}} \leq \frac{1}{d^3 H}.
\]
After taking a union bound over the $\varepsilon$-net, with probability at least $1 - \delta$, we have
\[
\left\|\sum_{i=1}^{k-1} x_i \eta_i \right\|_{\Tilde{\Lambda}^{-1}_{k,h}} \leq O\left(\sqrt{d \log^2\left(1 + \left(\frac{HK^{4}L^{2}d}{\delta \Tilde{\lambda}_{\Lambda}} \right) \right)} \right).
\]

In addition, on the event $\tilde{E}_{h+1}$ and $E$, according to Lemmas B.2 and B.3, we have
\[
\Tilde{\overline{\sigma}}^2_{i,h} \geq [\overline{\mathbb{V}}_{i,h} \Tilde{\widehat{V}}_{i,h+1}](s^k_h, a^k_h) + E_{i,h} + D_{i,h} + H \geq D_{i,h} + H \geq d^3 H [\mathbb{V}_h(\Delta \Tilde{\widehat{V}}_{k,h+1})](s^i_h, a^i_h) + H \geq d^3 H [\mathbb{V}_h \tilde{V}](s^i_h, a^i_h).
\]
Denote $\bar{V} = \Delta \Tilde{\widehat{V}}_{k,h+1} - \tilde{V}$. Then, 
\[
\begin{aligned}
    &\norm{ 
    \sum_{i=1}^{k-1} \Tilde{\Bar{\sigma}}_{i, h}^{-2} \phi \left( s_{h}^{i}, a_{h}^{i} \right) \left( \Delta \Tilde{\widehat{V}}_{k, h+1}(s_{h+1}^{i}) - \mathbb{P}_{h}\Delta \Tilde{\widehat{V}}_{k, h+1}(s_{h}^{i}, a_{h}^{i}) \right)}_{\Tilde{\Lambda}_{h, k}^{-1}} \\
    &\leq 2 \left\|\sum_{i=1}^{k-1} \Tilde{\overline{\sigma}}_i^{-2} \phi(s^i_h, a^i_h)(\tilde{V}(s^i_{h+1}) - [\mathbb{P}_h \tilde{V}](s^i_h, a^i_h))\right\|_{\Tilde{\Lambda}^{-1}_{k,h}} \\
    &\quad + 2 \left\|\sum_{i=1}^{k-1} \Tilde{\overline{\sigma}}_i^{-2} \phi(s^i_h, a^i_h)(\bar{V}(s^i_{h+1}) - [\mathbb{P}_h \bar{V}](s^i_h, a^i_h))\right\|_{\Tilde{\Lambda}^{-1}_{k,h}} \\
    &\leq 2 \left\|\sum_{i=1}^{k-1} x_i \eta_i \right\|_{\Tilde{\Lambda}^{-1}_{k,h}} + \frac{8\varepsilon^2 k^2}{\lambda} \\
    &\leq O\left(\sqrt{d \log^2\left(1 + \left(\frac{HK^{4}L^{2}d}{\delta \Tilde{\lambda}_{\Lambda}} \right) \right)} \right).
\end{aligned}
\]
where the first inequality holds from $\norm{a + b}_{\Sigma} \leq 2\norm{a}_{\Sigma} + 2\norm{b}_{\Sigma}$, the second inequality holds from $| \overline{V}(s^{\prime})| \leq \varepsilon$, $\norm{\phi(s, a)}_{2} \leq 1$, and $\Tilde{\Lambda}_{k, h} \succeq \Tilde{\lambda}_{\Lambda}$, $\Tilde{\Lambda}_{k, h}^{-1} \preceq 1$ and the last inequality holds with $\varepsilon = \sqrt{\Tilde{\lambda}_{\Lambda}} / (4H^{2}d^{2}K)$. Combining these results, we get
\[
\left| \Tilde{\widehat{w}}_{k,h^{\prime}}^\top \phi(s, a) - [\mathbb{P}_h \Tilde{\widehat{V}}_{k,h^{\prime}+1}](s, a) \right| \leq \beta \sqrt{\phi(s, a)^\top \Tilde{\Lambda}_{k,h^{\prime}}^{-1} \phi(s, a)}
\]
where 
\[
\beta = O\left(HL\sqrt{d\Tilde{\lambda}_{\Lambda}} + \sqrt{d \log^2\left(1 + \left(\frac{HK^{4}L^{2}d}{\delta \Tilde{\lambda}_{\Lambda}} \right) \right)} \right)
\]

\end{proof}

\section{Estimated Variance and Regret Bound Proofs}
\label{appendix:estimated-variance-regret-bound-section}

We simply state some results derived by \citetseparate{he2023nearlyminimaxoptimalreinforcement}. Our results are largely the same except for factors like $\iota$ and unlike \citetseparate{he2023nearlyminimaxoptimalreinforcement}, we must retain these terms since $\Tilde{\lambda}_{\Lambda}$ has an upper-bound that is induced from noise added to the privatized estimators and is not a regular constant like $\lambda$ in regular ridge-regression

\begin{lemma}[Lemma 4.4 From \citetseparate{zhou2022computationallyefficienthorizonfreereinforcement}]
    For any parameters $\beta' \geq 1$ and $C \geq 1$, the summation of bonuses is upper bounded by
\[
\sum_{k=1}^K \min \left( \beta' \sqrt{\phi(s_{k}^{h}, a_{k}^{h})^\top \Tilde{\Lambda}_{k,h}^{-1} \phi(s_{k}^{h}, a_{k}^{h})}, C \right)
\leq 4d^4H^6C\iota + 10\beta'd^5H^4\iota + 2\beta' \sqrt{2d\iota \sum_{k=1}^K (\Tilde{\sigma}^2_{k,h} + H)},
\]
where $\iota = \log \left(1 + \frac{K}{d\Tilde{\lambda}_{\Lambda}}\right)$.
\end{lemma}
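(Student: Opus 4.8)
The plan is to follow the standard variance-weighted elliptical-potential argument, arranged so that the regularization built into $\Tilde{\Bar{\sigma}}_{k,h}$ controls the rounds with large bonus. Write $u_k = \|\phi(s_h^k,a_h^k)\|_{\Tilde{\Lambda}_{k,h}^{-1}}$ and recall the weight is $\Tilde{\Bar{\sigma}}_{k,h}^{-1}$. Two structural facts drive everything. First, since $\Tilde{\Lambda}_{k,h}\succeq 2\Tilde{\lambda}_{\Lambda}I$ we have $u_k^2\le 1/(2\Tilde{\lambda}_{\Lambda})$, and since $\Tilde{\Bar{\sigma}}_{k,h}\ge H$ the quantity $\Tilde{\Bar{\sigma}}_{k,h}^{-2}u_k^2$ is always at most $1$, so $\min(1,\Tilde{\Bar{\sigma}}_{k,h}^{-2}u_k^2)=\Tilde{\Bar{\sigma}}_{k,h}^{-2}u_k^2$. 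Second, the log-determinant (elliptical potential) argument gives $\sum_{k=1}^K \Tilde{\Bar{\sigma}}_{k,h}^{-2}u_k^2 \le 2\log\big(\det\Tilde{\Lambda}_{K+1,h}/\det\Tilde{\Lambda}_{1,h}\big)\le 2d\iota$. Together these let me trade each bonus $u_k$ against the potential term $\Tilde{\Bar{\sigma}}_{k,h}^{-2}u_k^2$, paying a factor $\Tilde{\Bar{\sigma}}_{k,h}$.

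Next I would split $[K]$ according to which term attains the maximum in $\Tilde{\Bar{\sigma}}_{k,h}=\max\{\Tilde{\sigma}_{k,h},H,2d^3H^2 u_k^{1/2}\}$. On the \emph{variance-dominated} set $\mathcal{G}$, where $\Tilde{\Bar{\sigma}}_{k,h}=\max(\Tilde{\sigma}_{k,h},H)$, I insert the weight and apply Cauchy--Schwarz: $\sum_{k\in\mathcal{G}}\min(\beta' u_k,C)\le \beta'\sum_{k\in\mathcal{G}}\Tilde{\Bar{\sigma}}_{k,h}\cdot(\Tilde{\Bar{\sigma}}_{k,h}^{-1}u_k)\le \beta'\sqrt{\sum_{k}(\Tilde{\sigma}_{k,h}^2+H)}\,\sqrt{\sum_{k}\Tilde{\Bar{\sigma}}_{k,h}^{-2}u_k^2}$, after which the potential bound collapses the second factor to $\sqrt{2d\iota}$. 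This produces exactly the main term $2\beta'\sqrt{2d\iota\sum_k(\Tilde{\sigma}_{k,h}^2+H)}$.

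The remaining work is on the \emph{regularization-dominated} set $\mathcal{B}$, where $\Tilde{\Bar{\sigma}}_{k,h}=2d^3H^2u_k^{1/2}$ and hence $u_k=4d^6H^4\,\Tilde{\Bar{\sigma}}_{k,h}^{-2}u_k^2$, i.e. the bonus is linearized into a potential term. I split $\mathcal{B}$ once more: on $\{\beta' u_k\le C\}$ I bound $\min(\beta' u_k,C)\le \beta' u_k=4d^6H^4\beta'\,\Tilde{\Bar{\sigma}}_{k,h}^{-2}u_k^2$ and sum via the potential to get a term of order $\beta' d^{5}H^{4}\iota$; on $\{\beta' u_k>C\}$ I bound $\min(\beta' u_k,C)=C$ and count the large-bonus episodes through the same potential inequality, yielding the truncation term of order $d^{4}H^{6}C\iota$. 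Adding the three contributions gives the claim.

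The main obstacle is the set $\mathcal{B}$: there the weight $\Tilde{\Bar{\sigma}}_{k,h}$ is itself defined through the bonus $u_k$, so it cannot be fed directly into Cauchy--Schwarz. The key maneuver is to use the explicit form $\Tilde{\Bar{\sigma}}_{k,h}=2d^3H^2u_k^{1/2}$ to rewrite the linear-in-$u_k$ bonus as the quadratic potential quantity $\Tilde{\Bar{\sigma}}_{k,h}^{-2}u_k^2$, after which the log-determinant bound applies. The delicate part is the $\max$-case bookkeeping together with counting the truncated episodes so that the polynomial factors land precisely at $4d^4H^6$ and $10\beta'd^5H^4$; no concentration is needed here, as the whole estimate is a deterministic potential argument conditioned on the events of the earlier lemmas.
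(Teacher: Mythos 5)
The first thing to note is that the paper itself offers no proof of this lemma: it is imported verbatim (with $\Lambda_{k,h}$ replaced by $\tilde\Lambda_{k,h}$) from the cited works, so your attempt can only be judged against the strategy of those works, which you do reproduce at a high level --- elliptical potential, a case split on which term of the $\max$ defining $\bar\sigma_{k,h}$ is active, Cauchy--Schwarz on the variance-dominated episodes, and conversion of the bonus into a potential term on the regularization-dominated ones. The problem is that your execution has a genuine gap in each branch of the split, and these are exactly the places where the content of the lemma lives. On the variance-dominated set $\mathcal{G}$ you need $\sum_{k\in\mathcal{G}}\bar\sigma_{k,h}^2\le\sum_k(\tilde\sigma_{k,h}^2+H)$, i.e.\ $\max(\tilde\sigma_{k,h},H)^2\le\tilde\sigma_{k,h}^2+H$. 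This inequality is false whenever $\tilde\sigma_{k,h}^2<H^2-H$, and the algorithm only guarantees $\tilde\sigma_{k,h}^2\ge H$ (through the additive $H$ inside $\tilde\sigma_{k,h}$), not $\tilde\sigma_{k,h}^2\ge H^2$. With the floor at $H$ exactly as written in Algorithm 1, your Cauchy--Schwarz step delivers only $2\beta'\sqrt{2d\iota\sum_k(\tilde\sigma_{k,h}^2+H^2)}$. This is not a cosmetic constant: carried through the downstream variance and regret lemmas, the extra factor of $H$ under the square root turns the leading term $d\sqrt{H^3K}$ into $d\sqrt{H^4K}$. A correct proof must reconcile the algorithm's floor with the lemma's $+H$ (the floor should effectively be $\sqrt{H}$, which is exactly what $\tilde\sigma_{k,h}^2\ge H$ already provides, so that $\bar\sigma_{k,h}=\tilde\sigma_{k,h}$ on $\mathcal{G}$); instead you assert the false inequality silently.

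On the regularization-dominated set, your own maneuver does not produce the stated coefficients. Substituting $u_k=4d^6H^4\,\bar\sigma_{k,h}^{-2}u_k^2$ and summing against the potential bound $\sum_k\bar\sigma_{k,h}^{-2}u_k^2\le 2d\iota$ gives $8\beta'd^7H^4\iota$, not a term of order $\beta'd^5H^4\iota$; and on $\{\beta'u_k>C\}$ each episode contributes at least $C/(4\beta'd^6H^4)$ to the potential, so the number of such episodes is at most $8\beta'd^7H^4\iota/C$ and the truncated contribution is again at most $8\beta'd^7H^4\iota$ --- a term whose leading factor is $\beta'$ rather than $C$, and which does not reduce to $4d^4H^6C\iota$. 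So with the weight $2d^3H^2u_k^{1/2}$ exactly as in Algorithm 1, the route you describe provably yields $d^7H^4$-type lower-order terms; the claim that the bookkeeping lands ``precisely at $4d^4H^6$ and $10\beta'd^5H^4$'' is asserted at exactly the point you concede is delicate, and it cannot happen by the argument you give. These terms are lower order in $K$, so this second gap is less damaging to the final regret than the first, but the net effect is that what you have established is a strictly weaker inequality than the one stated, and the statement's actual coefficients (which are internally inconsistent with the algorithm's weighting constant as transcribed in this paper) remain unproved.
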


\begin{lemma}[Lemma C.1 From \citetseparate{he2023nearlyminimaxoptimalreinforcement}]
    Define $\mathcal{E}_{1}$ as the following event
    \begin{align*}
    \mathcal{E}_{1} = \Bigl\{ 
    &\forall h \in [H], \; 
    \sum_{k=1}^{K} \sum_{h' = h}^{H} \bigl[\mathbb{P}_h\bigl(\widehat{V}_{k,h+1} - \widehat{V}^{\widehat{\pi}_k}_{k,h+1}\bigr)\bigr](s^k_h, a^k_h) \\
    &\quad - \sum_{k=1}^{K} \sum_{h' = h}^{H} \bigl(\widehat{V}_{k,h+1}(s^k_{h+1}) - \widehat{V}^{\widehat{\pi}_k}_{k,h+1}(s^k_{h+1})\bigr) \leq 2\sqrt{2H^3K \log\left(H / \delta \right)}
    \Bigr\}.
    \end{align*}
    Then, $\mathrm{Pr} \left( \mathcal{E}_{1} \right) \geq 1 - \delta$. Furthermore, on the events $\tilde{\mathcal{E}}$, $\mathcal{E}$, and $\mathcal{E}_1$, for all stages $h \in [H]$, the regret in the first $K$ episodes is upper bounded by:

\[
\sum_{k=1}^K \left(\Tilde{\widehat{V}}_{k,h}(s_h^k) - \Tilde{\widehat{V}}^{\Tilde{\pi}^{k}}_{k,h}(s_h^k)\right) 
\leq 16d^4H^8\iota + 40\beta d^7H^5\iota + 8\beta\sqrt{2dH\iota}
\sum_{h=1}^H \sum_{k=1}^K (\Tilde{\sigma}_{k,h}^2 + H) + 4\sqrt{H^3K \log(H/\delta)},
\]
and for all stages $h \in [H]$, we further have:
\begin{align*}
    \sum_{k=1}^K \sum_{h=1}^H 
\mathbb{P}_h \left(\Tilde{\widehat{V}}_{k,h}(s_h^k) - \Tilde{\widehat{V}}^{\Tilde{\pi}^{k}}_{k,h}(s_h^k)\right)(s_h^k, a_h^k)
\leq 16d^4H^9\iota + 40\beta d^7H^6\iota &+ 8H\beta\sqrt{2dH\iota}
\sum_{h=1}^H \sum_{k=1}^K (\Tilde{\sigma}_{k,h}^2 + H) \\
&+ 4\sqrt{H^5K \log(H/\delta)},
\end{align*}

where $\iota = \log \left(1 + \frac{K}{d\Tilde{\lambda}_{\Lambda}}\right)$.
\end{lemma}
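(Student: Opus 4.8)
\textbf{Establishing $\mathcal{E}_1$.} The plan for the first claim is a direct martingale argument. Fixing the outer stage $h$, I would define for each episode $k$ and inner stage $h'\ge h$ the increment
\[
X_{k,h'} = \bigl[\mathbb{P}_{h'}\bigl(\widehat{V}_{k,h'+1}-\widehat{V}^{\pi_k}_{k,h'+1}\bigr)\bigr](s_{h'}^k,a_{h'}^k) - \bigl(\widehat{V}_{k,h'+1}(s_{h'+1}^k)-\widehat{V}^{\pi_k}_{k,h'+1}(s_{h'+1}^k)\bigr),
\]
where the value functions are the privatized optimistic value and the value of the executed greedy policy $\pi_k$. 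Since $\mathbb{P}_{h'} V(s,a)=\mathbb{E}_{s'\sim\mathbb{P}_{h'}(\cdot\mid s,a)}V(s')$, the collection $\{X_{k,h'}\}$ ordered by $(k,h')$ is a martingale difference sequence with respect to the natural filtration, and because every value function lies in $[0,H]$ each $|X_{k,h'}|\le 2H$. Azuma--Hoeffding over the at most $KH$ terms, followed by a union bound over the $H$ choices of $h$, yields a deviation of order $2H\sqrt{2KH\log(H/\delta)} = 2\sqrt{2H^3K\log(H/\delta)}$, which is exactly the event $\mathcal{E}_1$; hence $\Pr(\mathcal{E}_1)\ge 1-\delta$.

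\textbf{Regret recursion.} For the regret bound I would replicate the recursion of \citet{he2023nearlyminimaxoptimalreinforcement} using the privatized quantities. Conditioning on optimism and pessimism (Lemma~\ref{lemma:optimism-and-pessimism}), the per-episode contribution is dominated by the gap $\widehat{V}_{k,h}(s_h^k)-\widehat{V}^{\pi_k}_{k,h}(s_h^k)$. Expanding the Bellman update of $\widehat{Q}_{k,h}$ at the executed action and subtracting the true value of $\pi_k$, this gap decomposes into (i) the one-step estimation error $\widehat{w}_{k,h}^\top\phi-\mathbb{P}_h\widehat{V}_{k,h+1}$ together with the exploration bonus, and (ii) the expected next-stage gap $\mathbb{P}_h\bigl(\widehat{V}_{k,h+1}-\widehat{V}^{\pi_k}_{k,h+1}\bigr)$. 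Term (i) is controlled by the Bernstein-type confidence bound \eqref{eq:bernstein-bound}, so that after summing it collapses to an aggregate of truncated bonuses $\sum_{k}\min\bigl(2\beta\,\|\phi(s_h^k,a_h^k)\|_{\widetilde{\Lambda}_{k,h}^{-1}},\,H\bigr)$.

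\textbf{Closing the recursion and summing bonuses.} I would close the recursion by invoking $\mathcal{E}_1$ to replace each expectation in term (ii) by the realized difference at $s_{h+1}^k$, up to the additive concentration slack $4\sqrt{H^3K\log(H/\delta)}$ accumulated across stages. Unrolling over $h=1,\dots,H$ telescopes the realized differences and leaves a single aggregate sum of truncated bonuses, to which I apply the bonus-summation lemma (Lemma 4.4 of \citet{zhou2022computationallyefficienthorizonfreereinforcement}) with $\beta'=2\beta$ and $C=H$. This produces the three displayed terms: the two $\iota$-scaled polynomial terms in $d$ and $H$ and the variance-weighted term proportional to $\beta\sqrt{dH\iota}$ times the total estimated variance $\sum_{h,k}(\widetilde{\sigma}_{k,h}^2+H)$, where summing the per-stage Zhou--Gu bounds and applying Cauchy--Schwarz is what introduces the extra $\sqrt{H}$. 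The second inequality, bounding $\sum_{k,h}\mathbb{P}_h(\cdots)$, follows identically except that applying $\mathbb{P}_h$ inserts one further factor of $H$ into each term and upgrades $\sqrt{H^3K}$ to $\sqrt{H^5K}$.

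\textbf{Main obstacle.} The principal difficulty, relative to the non-private argument, is that $\widetilde{\lambda}_\Lambda$ is not a fixed regularization constant but a noise-induced quantity of order $\sqrt{dHK/\rho}$, and each Gram update adds the GOE perturbation $K_1$. I would therefore need to verify that the hypotheses of the bonus-summation lemma remain valid for $\widetilde{\Lambda}_{k,h}$ in the presence of $K_1$, using the positive-definiteness and operator-norm control from Lemma~\ref{lemma:private-utility-analysis}, and, most importantly, to keep every factor of $\iota=\log(1+K/(d\widetilde{\lambda}_\Lambda))$ explicit rather than absorbing it into an absolute constant, since $\widetilde{\lambda}_\Lambda$ itself grows with $K$. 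The residual variance sum $\sum_{h,k}(\widetilde{\sigma}_{k,h}^2+H)$ is intentionally left unresolved here and is controlled separately by a law-of-total-variance argument, which is ultimately what collapses the leading order to $d\sqrt{H^3K}$.
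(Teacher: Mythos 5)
There is nothing in the paper to compare your proof against: the paper does not prove this lemma at all, but imports it verbatim (in privatized notation) from Lemma C.1 of \citet{he2023nearlyminimaxoptimalreinforcement}, prefaced only by the remark that the $\iota$ factors must be kept explicit because $\tilde{\lambda}_{\Lambda}$ is noise-induced and grows with $K$. Your sketch is a faithful reconstruction of that cited argument, and it is essentially correct: the Azuma--Hoeffding step is set up properly and reproduces the stated constant (with $|X_{k,h'}|\leq 2H$ over at most $KH$ terms and a union bound over $h$, $2H\sqrt{2KH\log(H/\delta)} = 2\sqrt{2H^3K\log(H/\delta)}$); the decomposition under optimism (Lemma~\ref{lemma:optimism-and-pessimism}) into Bernstein-controlled estimation error \eqref{eq:bernstein-bound}, truncated bonus, and expected next-stage gap is the right recursion; and closing the recursion with $\mathcal{E}_1$ and the bonus-summation lemma (Lemma~\ref{aux:zhou-and-gu}) is exactly the mechanism of the non-private proof. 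Your ``main obstacle'' paragraph also correctly isolates the only content that is genuinely new in the private setting --- verifying the bonus-summation hypotheses for $\tilde{\Lambda}_{k,h}$ in the presence of the GOE term $K_1$ via Lemma~\ref{lemma:private-utility-analysis}, and refusing to absorb $\iota = \log(1+K/(d\tilde{\lambda}_{\Lambda}))$ into absolute constants --- which is precisely what the paper's one-sentence preface gestures at without carrying out. Two caveats are worth recording. First, your Cauchy--Schwarz step places the variance sum \emph{inside} the square root, i.e.\ $8\beta\sqrt{2dH\iota\sum_{h=1}^{H}\sum_{k=1}^{K}(\tilde{\sigma}_{k,h}^{2}+H)}$; this is the correct form, and it is the form the paper itself uses when invoking this lemma in the proof of Lemma~\ref{lemma:regret-bound}, so the lemma statement's placement of the sum outside the root is a typo that your argument implicitly corrects. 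Second, your recursion glosses over rare switching: the executed $\Tilde{\widehat{Q}}_{k,h}$ carries the bonus $\|\phi\|_{\Tilde{\Lambda}_{k_{\text{last}},h}^{-1}}$ built from the last-updated Gram matrix, not $\|\phi\|_{\Tilde{\Lambda}_{k,h}^{-1}}$, so one must use the determinant-doubling condition (Line~\ref{alg:det-condition}) together with the determinant-ratio norm comparison to trade one for the other at the cost of a $\sqrt{2}$; this is where the absolute constants in front of the bonus sum arise, and in the private setting it additionally requires the monotonicity $\Tilde{\Lambda}_{k,h} \succeq \Tilde{\Lambda}_{k_{\text{last}},h}$, i.e.\ positive semidefiniteness of the accumulated $K_1$ increments, which again rests on Lemma~\ref{lemma:private-utility-analysis}.
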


\begin{lemma}Lemma C.2 From \citetseparate{he2023nearlyminimaxoptimalreinforcement}
    \label{lem:diff-between-opt-pess}
    Define $\mathcal{E}_{2}$ as the following event
    \begin{align*}
    \mathcal{E}_{2} = \Bigl\{ 
    &\forall h \in [H], \; 
    \sum_{k=1}^{K} \sum_{h' = h}^{H} \bigl[\mathbb{P}_h\bigl(\widehat{V}_{k,h+1} - \widecheck{V}_{k,h+1}\bigr)\bigr](s^k_h, a^k_h) \\
    &\quad - \sum_{k=1}^{K} \sum_{h' = h}^{H} \bigl(\widehat{V}_{k,h+1}(s^k_{h+1}) - \widecheck{V}_{k,h+1}(s^k_{h+1})\bigr) \leq 2\sqrt{2H^3K \log\left(H / \delta \right)}
    \Bigr\}.
    \end{align*}
    Then, $\mathrm{Pr} \left( \mathcal{E}_{2} \right) \geq 1 - \delta$. On the events $\tilde{\mathcal{E}}$, $\mathcal{E}$, and $\mathcal{E}_2$, the difference between the optimistic value function $\tilde{\widehat{V}}_{k,h}$ and the pessimistic value function $\tilde{\widecheck{V}}_{k,h}$ is upper bounded by:
    \begin{align*}
        \sum_{k=1}^K \sum_{h=1}^H 
\mathbb{P}_h \left(\Tilde{\widehat{V}}_{k,h}(s_h^k) - \tilde{\widecheck{V}}_{k,h+1}\right)(s_h^k, a_h^k)
\leq 32d^4H^9\iota + 40(\beta + \widehat{\beta})d^7H^6\iota &+ 8H(\beta + \widehat{\beta})\sqrt{2dH\iota}
\sum_{h=1}^H \sum_{k=1}^K (\Tilde{\sigma}_{k,h}^2 + H) \\
&+ 4\sqrt{H^5K \log(H/\delta)},
    \end{align*}

where $\iota = \log \left(1 + \frac{K}{d\Tilde{\lambda}_{\Lambda}}\right)$.
\end{lemma}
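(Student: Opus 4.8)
The plan is to adapt the argument behind Lemma C.1 of \citet{he2023nearlyminimaxoptimalreinforcement} (stated above), replacing the optimistic--policy gap with the optimistic--pessimistic gap $\Tilde{\widehat{V}}_{k,h} - \Tilde{\widecheck{V}}_{k,h}$. First I would establish the concentration event $\mathcal{E}_2$. For each fixed stage $h$, on the events $\mathcal{E}$ and $\Tilde{\mathcal{E}}$ the monotonicity $0 \le \Tilde{\widecheck{V}}_{k,h+1} \le V_{h+1}^{*} \le \Tilde{\widehat{V}}_{k,h+1} \le H$ from Lemma~\ref{lemma:optimism-and-pessimism} forces each term
\[
[\mathbb{P}_h(\Tilde{\widehat{V}}_{k,h+1} - \Tilde{\widecheck{V}}_{k,h+1})](s_h^k, a_h^k) - \left(\Tilde{\widehat{V}}_{k,h+1}(s_{h+1}^k) - \Tilde{\widecheck{V}}_{k,h+1}(s_{h+1}^k)\right)
\]
to be a martingale difference bounded in $[-H, H]$ with respect to the natural filtration. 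Summing the at most $KH$ terms, applying the Azuma--Hoeffding inequality, and union bounding over the $H$ choices of $h$ gives the stated deviation $2\sqrt{2H^3K\log(H/\delta)}$, hence $\Pr(\mathcal{E}_2) \ge 1 - \delta$.

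Next I would set up the one-step recursion. Writing $a_h^k = \arg\max_a \Tilde{\widehat{Q}}_{k,h}(s_h^k, a)$, optimism and pessimism give $\Tilde{\widehat{V}}_{k,h}(s_h^k) - \Tilde{\widecheck{V}}_{k,h}(s_h^k) \le \Tilde{\widehat{Q}}_{k,h}(s_h^k, a_h^k) - \Tilde{\widecheck{Q}}_{k,h}(s_h^k, a_h^k)$. Expanding both $Q$-functions through their definitions in Algorithm~\ref{alg:dp-lsvi_ucb++} and cancelling the shared reward leaves $(\Tilde{\widehat{w}}_{k,h} - \Tilde{\widecheck{w}}_{k,h})^\top \phi(s_h^k, a_h^k)$ plus the two exploration bonuses. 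I would then bound $\Tilde{\widehat{w}}_{k,h}^\top \phi$ from above using the sharp Bernstein bound \eqref{eq:bernstein-bound} (event $\Tilde{\mathcal{E}}$) and $\Tilde{\widecheck{w}}_{k,h}^\top \phi$ from below using Lemma~\ref{lemma:confidence-bounds-1} (event $\mathcal{E}$), which converts the weight difference into $[\mathbb{P}_h(\Tilde{\widehat{V}}_{k,h+1} - \Tilde{\widecheck{V}}_{k,h+1})](s_h^k, a_h^k)$ at the cost of an $O(\beta + \widehat{\beta})\norm{\phi(s_h^k, a_h^k)}_{\Tilde{\Lambda}_{k,h}^{-1}}$ term. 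Truncating this bonus at $H$ yields the per-step inequality driving the recursion.

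I would then unroll the recursion across stages. Each application of $\mathcal{E}_2$ exchanges the aggregated transition term $\sum_k [\mathbb{P}_h(\cdots)](s_h^k, a_h^k)$ for the realized next-stage gap $\sum_k (\Tilde{\widehat{V}}_{k,h+1}(s_{h+1}^k) - \Tilde{\widecheck{V}}_{k,h+1}(s_{h+1}^k))$ plus a $\sqrt{H^3K\log(H/\delta)}$ concentration term, which feeds the recursion forward to stage $h+1$. The accumulated truncated bonuses $\sum_h \sum_k \min\{O(\beta + \widehat{\beta})\norm{\phi(s_h^k, a_h^k)}_{\Tilde{\Lambda}_{k,h}^{-1}}, H\}$ are then controlled termwise by the bonus-summation lemma of \citet{zhou2022computationallyefficienthorizonfreereinforcement} (stated above), trading each bonus against the estimated variances $\Tilde{\sigma}_{k,h}^2$ and producing the $d^4H^9\iota$, $(\beta + \widehat{\beta})d^7H^6\iota$, and $(\beta + \widehat{\beta})\sqrt{dH\iota}\sum_{h,k}(\Tilde{\sigma}_{k,h}^2 + H)$ contributions. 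Collecting the three families of terms over all $H$ stages, with the leading constants roughly doubled relative to Lemma C.1 because the combined radius $\beta + \widehat{\beta}$ now plays the role of $\beta$, produces the claimed bound.

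The main obstacle I anticipate is not the recursion itself, which is mechanical once Lemma~\ref{lemma:optimism-and-pessimism}, Lemma~\ref{lemma:confidence-bounds-1}, \eqref{eq:bernstein-bound}, and the bonus-summation lemma are in hand, but the faithful propagation of the privacy-induced quantities. Unlike the non-private analysis, the regularizer $\Tilde{\lambda}_{\Lambda} = \Tilde{O}(\sqrt{dHK/\rho})$ and the noise scale $L$ are not $O(1)$ constants; they enter the confidence radii $\widehat{\beta}, \widecheck{\beta}, \beta$, the matrix norms $\norm{\phi}_{\Tilde{\Lambda}_{k,h}^{-1}}$, and the log factor $\iota = \log(1 + K/(d\Tilde{\lambda}_{\Lambda}))$, so they must be retained symbolically throughout rather than absorbed into constants. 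Tracking exactly how these terms survive the recursion is what ultimately separates the cost-of-privacy contribution from the non-private $d\sqrt{H^3K}$ rate in the final regret bound.
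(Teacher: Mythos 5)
Your proposal reconstructs the right argument, and it is essentially the argument this paper relies on: the paper itself gives no proof of this lemma, importing it wholesale from Lemma C.2 of \citet{he2023nearlyminimaxoptimalreinforcement} with the privatized quantities $\Tilde{\widehat{V}}_{k,h}$, $\Tilde{\widecheck{V}}_{k,h}$, $\Tilde{\Lambda}_{k,h}$ substituted and the factor $\iota = \log\left(1 + K/(d\Tilde{\lambda}_{\Lambda})\right)$ retained because $\Tilde{\lambda}_{\Lambda}$ is noise-dependent rather than a constant. Your recursion --- bounding the value gap at stage $h$ by the $Q$-gap at the taken action (valid since $a_h^k$ maximizes $\Tilde{\widehat{Q}}_{k,h}$, so $\Tilde{\widehat{V}}_{k,h}(s_h^k) = \Tilde{\widehat{Q}}_{k,h}(s_h^k,a_h^k)$ while $\Tilde{\widecheck{V}}_{k,h}(s_h^k) \geq \Tilde{\widecheck{Q}}_{k,h}(s_h^k,a_h^k)$), expanding through the confidence events $\mathcal{E}$ and $\Tilde{\mathcal{E}}$ into the transition of the stage-$(h+1)$ gap plus $O(\beta + \widehat{\beta})$ bonuses truncated at $H$, exchanging transition terms for realized gaps via $\mathcal{E}_2$, and closing with the bonus-summation lemma of \citet{zhou2022computationallyefficienthorizonfreereinforcement} --- is exactly the structure behind the cited result, and your insistence on carrying $L$, $\Tilde{\lambda}_{\Lambda}$, and $\iota$ symbolically is precisely the modification the paper says distinguishes its version from the non-private one.

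One step needs repair. To establish $\Pr(\mathcal{E}_2) \geq 1 - \delta$ you invoke the monotonicity $0 \leq \Tilde{\widecheck{V}}_{k,h+1} \leq V^*_{h+1} \leq \Tilde{\widehat{V}}_{k,h+1} \leq H$, which holds only on $\mathcal{E} \cap \Tilde{\mathcal{E}}$; but you cannot condition on those events and then apply Azuma--Hoeffding, because conditioning destroys the martingale-difference property (the conditional mean of each summand is no longer zero), and the event $\mathcal{E}_2$ in the statement is an unconditional event with an unconditional probability guarantee. The fix is immediate and is what the source argument uses: the algorithm's update rule truncates both action-value functions, so $\Tilde{\widehat{V}}_{k,h+1}$ and $\Tilde{\widecheck{V}}_{k,h+1}$ lie in $[0, H]$ by construction, hence each summand is a martingale difference bounded by $2H$ with no conditioning at all; Azuma--Hoeffding with $M = 2H$ over at most $KH$ terms plus a union bound over $h \in [H]$ gives exactly the stated $2\sqrt{2H^3K\log(H/\delta)}$. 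Note that the constant $2$ in the statement corresponds to $M = 2H$; your conditional bound $M = H$ would produce a tighter constant than claimed, which is itself a signal that the intended argument does not pass through optimism/pessimism. With that one-line substitution your proof goes through.
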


\begin{lemma}[Lemma C.3 From \citetseparate{he2023nearlyminimaxoptimalreinforcement}]
    \label{lem:est-variance-bound}
    On the event $\mathcal{E} \cap \tilde{\mathcal{E}} \cap \mathcal{E}_1 \cap \mathcal{E}_2 \cap \mathcal{E}_3$, the total estimated variance is upper bounded by:

\[
\sum_{k=1}^K \sum_{h=1}^H \Tilde{\sigma}_{k,h}^2 
\leq O\left(H^2K + H^{4.5}d^{3}K^{0.5}L^{2}\log^{1.5} \left( \frac{HK^{4}L^{2}d}{\delta \Tilde{\lambda}_{\Lambda}} \right)\right).
\]
\end{lemma}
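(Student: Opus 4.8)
The plan is to start from the defining identity
\[
\Tilde{\sigma}_{k,h}^{2} = \overline{\mathbb{V}}_{k,h}\Tilde{\widehat{V}}_{k,h+1}(s_{h}^{k},a_{h}^{k}) + E_{k,h} + D_{k,h} + H,
\]
sum over all $(k,h)$, and split the total into four pieces. The constant piece contributes $\sum_{k,h} H = H^{2}K$ immediately, supplying the leading $H^{2}K$ term. For the variance piece I would invoke Lemma~\ref{lemma:confidence-bounds-2}, which on $\mathcal{E}\cap\Tilde{\mathcal{E}}$ yields $\overline{\mathbb{V}}_{k,h}\Tilde{\widehat{V}}_{k,h+1}(s_{h}^{k},a_{h}^{k}) \le [\mathbb{V}_{h} V_{h+1}^{*}](s_{h}^{k},a_{h}^{k}) + E_{k,h} + D_{k,h}$, thereby reducing the problem to controlling $\sum_{k,h}[\mathbb{V}_{h} V_{h+1}^{*}](s_{h}^{k},a_{h}^{k})$ together with (doubled) copies of the $E$ and $D$ sums.

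For the optimal-variance sum I would use a law-of-total-variance argument: along any single episode the realized one-step variances telescope, so their conditional-expectation version is at most $H^{2}$ per episode and hence $O(H^{2}K)$ in total; passing from the expected to the realized sum is exactly what the event $\mathcal{E}_{3}$ controls through an Azuma--Hoeffding martingale bound, leaving only a lower-order $\Tilde{O}(\sqrt{H^{3}K})$ correction that is absorbed into $H^{2}K$. The two remaining sums are where the privacy cost enters. For $\sum_{k,h}E_{k,h}$ I would apply the bonus-summation lemma of \citet{zhou2022computationallyefficienthorizonfreereinforcement} separately to each $\min\{\cdot,H^{2}\}$ term, once with $\beta'=\overline{\beta}$ and once with $\beta'=2H\widehat{\beta}$ (both with $C=H^{2}$), and then Cauchy--Schwarz over the $H$ stages to convert $\sum_{h}\sqrt{\sum_{k}(\Tilde{\sigma}_{k,h}^{2}+H)}$ into $\sqrt{H\sum_{k,h}(\Tilde{\sigma}_{k,h}^{2}+H)}$. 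For $\sum_{k,h}D_{k,h}$ I would use the cap $D_{k,h}\le 4d^{3}H^{2}\big(\Tilde{\widehat{w}}_{k,h}^{\top}\phi - \Tilde{\widecheck{w}}_{k,h}^{\top}\phi + 2\widehat{\beta}\norm{\phi}_{\Tilde{\Lambda}_{k,h}^{-1}}\big)$ to recognize the bracket as a bound on the optimism--pessimism gap $\mathbb{P}_{h}(\Tilde{\widehat{V}}_{k,h+1}-\Tilde{\widecheck{V}}_{k,h+1})$, so that Lemma~\ref{lem:diff-between-opt-pess} bounds $\sum_{k,h}D_{k,h}$ by $d^{3}H^{2}$ times an expression that again contains $\sqrt{\sum_{k,h}\Tilde{\sigma}_{k,h}^{2}}$.

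The main obstacle is that the $E$ and $D$ bounds, as well as the bonus-summation lemma, all reintroduce $\sum_{k,h}\Tilde{\sigma}_{k,h}^{2}$ on the right-hand side under a square root, so the estimate is self-referential. Writing $X=\sum_{k,h}\Tilde{\sigma}_{k,h}^{2}$, everything collapses to an inequality of the form $X \le O(H^{2}K) + C\sqrt{X} + (\text{lower-order constant terms})$, where $C$ aggregates the radii $\overline{\beta},\widehat{\beta},\beta$ together with the $d^{3}H^{2}$ weight coming from $D$ and the $\sqrt{dH\iota}$ factors. Solving this quadratic in $\sqrt{X}$ gives $X \le O(H^{2}K + C^{2} + \cdots)$, and the delicate part is then substituting the confidence radii — which here scale like $HL\sqrt{d\Tilde{\lambda}_{\Lambda}}+\sqrt{d^{3}H^{2}\log^{2}(\cdot)}$ rather than as constants — and the non-constant regularizer $\Tilde{\lambda}_{\Lambda}=\Tilde{O}(\sqrt{dHK/\rho})$, so as to verify that the privacy-dependent contribution consolidates exactly into the claimed $H^{4.5}d^{3}K^{1/2}L^{2}\log^{3/2}\!\big(HK^{4}L^{2}d/(\delta\Tilde{\lambda}_{\Lambda})\big)$ term. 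Unlike \citet{he2023nearlyminimaxoptimalreinforcement}, who absorb the constant $\lambda$, I must carefully retain the $\iota$ and $L$ factors throughout, and check that after resolving the recursion no term of order higher than $K^{1/2}$ in the privacy cost survives.
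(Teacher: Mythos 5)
Your overall architecture --- splitting $\Tilde{\sigma}_{k,h}^{2}$ into the constant, estimated-variance, $E$, and $D$ pieces, controlling $E$ with the bonus-summation lemma, controlling $D$ through the optimism--pessimism gap via Lemma~\ref{lem:diff-between-opt-pess}, and closing a self-referential inequality $X \le O(H^{2}K) + C\sqrt{X} + \cdots$ --- is exactly the structure of the argument this paper defers to: the paper itself gives \emph{no} proof of this lemma, importing it from \citet{he2023nearlyminimaxoptimalreinforcement} with the remark that only the $\iota$ and $L$ factors change. However, your plan has one genuine gap, in the treatment of $\sum_{k,h}[\mathbb{V}_h V^{*}_{h+1}](s_h^k,a_h^k)$. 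The law-of-total-variance identity $\mathbb{E}\bigl[\sum_{h}[\mathbb{V}_h V^{\pi}_{h+1}](s_h,a_h)\bigr] = \mathrm{Var}\bigl(\sum_h r_h\bigr) \le H^{2}$ is valid only when the value function inside the variance belongs to the policy generating the trajectory: the telescoping uses the Bellman identity $V^{\pi}_{h}(s_h) = r_h(s_h,a_h) + [\mathbb{P}_h V^{\pi}_{h+1}](s_h,a_h)$ at the executed action $a_h = \pi_h(s_h)$. The optimal value function satisfies this identity only at $\pi^{*}$'s actions, whereas your data are generated by $\Tilde{\pi}_k$; at suboptimal executed actions one only has $V^{*}_{h}(s_h) \ge r_h + [\mathbb{P}_h V^{*}_{h+1}](s_h,a_h)$, the telescoping breaks, and the per-episode sum of one-step variances of $V^{*}$ has no reason to be $O(H^{2})$. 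Event $\mathcal{E}_3$ (Azuma--Hoeffding) removes only the martingale deviation; it cannot repair this systematic policy mismatch.

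The standard repair --- and what \citet{he2023nearlyminimaxoptimalreinforcement} actually do --- is to insert the executed policy's value function: $[\mathbb{V}_h V^{*}_{h+1}] \le 2[\mathbb{V}_h V^{\pi_k}_{h+1}] + 2[\mathbb{V}_h(V^{*}_{h+1} - V^{\pi_k}_{h+1})]$, apply the law of total variance plus $\mathcal{E}_3$ to the first term (where policy and value function now match), and bound the second term by $2H\,[\mathbb{P}_h(V^{*}_{h+1}-V^{\pi_k}_{h+1})] \le 2H\,[\mathbb{P}_h(\Tilde{\widehat{V}}_{k,h+1}-\Tilde{\widehat{V}}^{\Tilde{\pi}_k}_{k,h+1})]$ using optimism (Lemma~\ref{lemma:optimism-and-pessimism}); the double sum of the latter is precisely what the paper's Lemma C.1 import (the lemma defining $\mathcal{E}_1$) controls, and it contributes one more $\sqrt{\sum_{k,h}\Tilde{\sigma}_{k,h}^{2}}$ term that your quadratic already absorbs. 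The fact that your proposal never invokes $\mathcal{E}_1$, even though it appears in the event intersection of the lemma statement, is the symptom of this missing step. Two smaller points: to bound $\sum_{k,h} D_{k,h}$ you need the bracket inside $D_{k,h}$ to be bounded \emph{above} by the gap plus bonus terms, i.e.\ $\Tilde{\widehat{w}}_{k,h}^{\top}\phi - \Tilde{\widecheck{w}}_{k,h}^{\top}\phi + 2\widehat{\beta}\norm{\phi}_{\Tilde{\Lambda}_{k,h}^{-1}} \le [\mathbb{P}_h(\Tilde{\widehat{V}}_{k,h+1}-\Tilde{\widecheck{V}}_{k,h+1})] + 4\widehat{\beta}\norm{\phi}_{\Tilde{\Lambda}_{k,h}^{-1}}$ via Lemma~\ref{lemma:confidence-bounds-1}, which is the opposite direction to the one you state (you describe the bracket as an upper bound on the gap, which is the useless direction here); and the final substitution of $\widehat{\beta},\overline{\beta},\Tilde{\lambda}_{\Lambda},L$ into $C^{2}$ and $C\sqrt{H^{2}K}$ --- which you correctly flag as the delicate part --- is left unverified, and it is exactly where the claimed exponents on $H$, $d$, $K$, and $L$ must be checked rather than asserted.
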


\begin{lemma}
    \label{lemma:regret-bound}
    For any linear MDP $\mathcal{M}$, if we set the confidence radii $\widehat{\beta}$, $\widecheck{\beta}$, $\overline{\beta}$ as follows:

\[
    \widehat{\beta} = \widecheck{\beta} =  O\left(HL\sqrt{d \Tilde{\lambda}_{\Lambda}} + \sqrt{d^3 H^2 \log^2 \left(\frac{HK^{4}L^{2}d}{\delta \Tilde{\lambda}_{\Lambda}} \right)} \right),
\]
\[
\bar{\beta} = O\left(H^{2}L^{2} \sqrt{d\Tilde{\lambda}_{\Lambda}} + \sqrt{d^3 H^4 \log^2 \left(\frac{HK^{4}L^{2}d}{\delta \Tilde{\lambda}_{\Lambda}} \right)} \right).
\]
\[
    \beta = O\left(HL\sqrt{d\Tilde{\lambda}_{\Lambda}} + \sqrt{d \log^2\left(1 + \left(\frac{HK^{4}L^{2}d}{\delta \Tilde{\lambda}_{\Lambda}} \right) \right)} \right)
\]
then with high probability of at least $1 - 7\delta$, the regret of DP-LSVI-UCB\textsuperscript{++} is upper bounded as follows:

\[
\text{Regret}(K) \leq \Tilde{O} \left( d\sqrt{H^3K} + \frac{H^{15/4}d^{7/6}K^{1/2} \log(10dKH / \delta)}{\epsilon} \right)
\]

In addition, the number of updates for $\Tilde{\widehat{Q}}_{k,h}$ and $\Tilde{\widecheck{Q}}_{k,h}$ is upper bounded by $O(dH \log(1 + K/d\Tilde{\lambda}_{\Lambda}))$.
\end{lemma}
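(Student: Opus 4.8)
The plan is to transfer the regret analysis of \citet{he2023nearlyminimaxoptimalreinforcement} to the privatized iterates, paying for privacy only through the inflated regularizer $\Tilde{\lambda}_{\Lambda}$ and the noise magnitude $L$ supplied by Lemma~\ref{lemma:private-utility-analysis}. First I would fix the global event on which everything holds: the utility event of Lemma~\ref{lemma:private-utility-analysis}, the Hoeffding confidence event $\mathcal{E}$ of Lemma~\ref{lemma:confidence-bounds-1}, the sharpened Bernstein event $\Tilde{\mathcal{E}}$, and the martingale events $\mathcal{E}_1,\mathcal{E}_2,\mathcal{E}_3$ underlying Lemmas~\ref{lem:diff-between-opt-pess} and \ref{lem:est-variance-bound}. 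Each fails with probability at most $\delta$, so a union bound yields the stated $1-7\delta$ guarantee. On this event, optimism (Lemma~\ref{lemma:optimism-and-pessimism}) gives $\Tilde{\widehat{V}}_{k,1}(s_1^k)\ge V_1^*(s_1^k)$, hence
\[
\mathrm{Regret}(K) = \sum_{k=1}^K\left(V_1^*(s_1^k) - V_1^{\Tilde{\pi}_k}(s_1^k)\right) \le \sum_{k=1}^K\left(\Tilde{\widehat{V}}_{k,1}(s_1^k) - \Tilde{\widehat{V}}_{k,1}^{\Tilde{\pi}_k}(s_1^k)\right),
\]
reducing the problem to controlling the optimistic-minus-realized gap.

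Next I would apply the regret decomposition (Lemma C.1 of \citet{he2023nearlyminimaxoptimalreinforcement}) at stage $h=1$, bounding this gap by a term of lower order in $K$ plus $\widetilde{O}\big(\beta\sqrt{dH\iota\,\sum_{k,h}(\Tilde{\sigma}_{k,h}^2+H)}\big)$, where $\iota=\log(1+K/(d\Tilde{\lambda}_{\Lambda}))$ and $\beta$ is the Bernstein radius. The heart of the estimate is the total estimated variance, which Lemma~\ref{lem:est-variance-bound} controls by $O\big(H^2K + H^{4.5}d^3K^{1/2}L^2\log^{1.5}(\cdot)\big)$: the first summand is the intrinsic non-private variance and the second is the extra variance injected by the Gaussian and GOE perturbations, visible through the factor $L^2$. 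Splitting the radius $\beta = O(HL\sqrt{d\Tilde{\lambda}_{\Lambda}}) + O(\sqrt{d}\,\mathrm{polylog})$ into its privacy and non-private parts, and likewise splitting the variance sum, produces several cross terms. The non-private-$\beta$ against non-private-variance cross term reproduces the minimax contribution $\widetilde{O}(d\sqrt{H^3K})$ exactly as in the non-private analysis, while the remaining cross terms, together with the lower-order $40\beta d^7H^5\iota$ term, carry all the dependence on $L$ and $\Tilde{\lambda}_{\Lambda}$.

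It then remains to substitute $L = \widetilde{\Theta}(H^{3/2}\sqrt{dK/\rho})$ and $\Tilde{\lambda}_{\Lambda}=\widetilde{\Theta}(\sqrt{dHK/\rho})$ from Lemma~\ref{lemma:private-utility-analysis}, collect the privacy cross terms, and convert the zCDP budget to JDP: by Lemma~\ref{lem:zCDP-to-DP}, attaining $(\epsilon,\delta')$-JDP forces $\rho=\Theta(\epsilon^2/\log(1/\delta'))$, so each $1/\sqrt{\rho}$ turns into a $\sqrt{\log(1/\delta')}/\epsilon$ factor. Tracking the powers of $H,d,K$ through this substitution is the main obstacle: the fractional exponents in the claimed privacy term $H^{15/4}d^{7/6}K^{1/2}\log(1/\delta')/\epsilon$ only emerge after combining the half-integer powers from the $\sqrt{\cdot}$ over the variance sum with the $\Tilde{\lambda}_{\Lambda}^{1/2}$ and $L$ factors, and one must verify that the $1/\sqrt{\rho}$ (rather than $1/\rho$) dependence dominates, so that privacy is genuinely first order in $1/\epsilon$ and lower order in $K$ than the leading $\sqrt{K}$ term. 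Finally, the switching bound is separate and routine: the value functions change only when some $\det(\Tilde{\Lambda}_{k,h'})$ doubles (line~\ref{alg:det-condition}), and since $\Tilde{\Lambda}_{1,h}=2\Tilde{\lambda}_{\Lambda}I$ while $\|\phi\|_2\le 1$, $\Tilde{\Bar{\sigma}}_{k,h}\ge H$, and $\|K_1\|_2\le\Tilde{\lambda}_{\Lambda}$, the elliptical-potential (determinant) argument caps the number of doublings per stage at $O(d\log(1+K/(d\Tilde{\lambda}_{\Lambda})))$ and hence at $O(dH\log(1+K/(d\Tilde{\lambda}_{\Lambda})))$ across all stages.
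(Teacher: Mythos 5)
Your proposal is correct and follows essentially the same route as the paper: condition on $\mathcal{E}\cap\Tilde{\mathcal{E}}\cap\mathcal{E}_1\cap\mathcal{E}_2\cap\mathcal{E}_3$ (probability $\geq 1-7\delta$), reduce regret via optimism (Lemma~\ref{lemma:optimism-and-pessimism}) to the optimistic-minus-realized gap, bound that gap with the regret decomposition of \citetseparate{he2023nearlyminimaxoptimalreinforcement} together with the total-variance bound (Lemma~\ref{lem:est-variance-bound}), substitute $L$ and $\Tilde{\lambda}_{\Lambda}$ with the zCDP-to-DP conversion, and handle the update count by the determinant-doubling argument (Lemma~\ref{lem:dp-lsvi-ucb-switching-cost}). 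Your write-up actually spells out the final parameter-substitution and privacy-budget bookkeeping more explicitly than the paper, which compresses that step into a single inequality.
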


\begin{proof}[Proof of Lemma~\ref{lemma:regret-bound}]
    On the event $\mathcal{E} \cap \tilde{\mathcal{E}} \cap \mathcal{E}_1 \cap \mathcal{E}_2 \cap \mathcal{E}_3$, the regret is upper bounded by:
    \begin{align*}
        \text{Regret}(K) &= 
\sum_{k=1}^K \left(V^*_1(s_1^k) - \Tilde{\widehat{V}}^{\pi_k}_{k,1}(s_1^k)\right) \\
&\leq 
\sum_{k=1}^K \left(\Tilde{\widehat{V}}_{k,1}(s_1^k) - \Tilde{\widehat{V}}^{\widehat{\pi}^{k}}_{k,1}(s_1^k)\right) \\
&\leq 16d^4H^8\iota + 40\beta d^7H^5\iota + 8\beta\sqrt{2dH\iota
\sum_{h=1}^H \sum_{k=1}^K (\tilde{\sigma}_{k,h}^2 + H)} + 4\sqrt{H^3K \log(H/\delta)} \\
&\leq \Tilde{O} \left( d\sqrt{H^3K} + \frac{H^{18/4}d^{7/6}K^{1/2} \log(10dKH / \delta)}{\epsilon} \right)
    \end{align*}

where $\iota = \log(1 + K/(d\Tilde{\lambda}_{\Lambda}))$. The first inequality holds due to optimism (Lemma~\ref{lemma:optimism-and-pessimism}), the second inequality holds due to Lemma~\ref{lem:diff-between-opt-pess}, and the last inequality holds due to the variance bound (Lemma~\ref{lem:est-variance-bound}). Since the event $\mathcal{E} \cap \tilde{\mathcal{E}} \cap \mathcal{E}_1 \cap \mathcal{E}_2 \cap \mathcal{E}_3$ holds with probability at least $1 - 7\delta$ holds. In addition, according to Lemma~\ref{lem:dp-lsvi-ucb-switching-cost}, the number of updates for $\Tilde{\widehat{Q}}_{k,h}$ and $\Tilde{\widecheck{Q}}_{k,h}$ is upper bounded by $O(dH \log(1 + K/\Tilde{\lambda}_{\Lambda}))$. 
\end{proof}

\section{Switching Cost Proof}
\label{appendix:switching-cost}
We first prove a standard determinant upper bound for our privatized Gram matrix $\Tilde{\Lambda}$. This will be useful for determining the switching cost

\begin{lemma}[Privatized Determinant Upper Bound (Similar to Lemma C.1 in \citetseparate{DBLP:journals/corr/abs-2101-02195}]
\label{lemma:det-upper-bound}
Let $\left\{ \Tilde{\Lambda}_{h, k}, (h, k) \in [H] \times [K] \right\}$ be defined as in Algorithm~\ref{alg:dp-lsvi_ucb++}. Then, for all $h \in [H], k \in [K]$, we have that $\mathrm{det} \left( \Tilde{\Lambda}_{h, k} \right) \leq (\Tilde{\lambda}_{\Lambda}  + (k-1)/d)^{d}$.
\end{lemma}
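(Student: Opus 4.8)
The plan is to reduce the determinant bound to a trace bound via the determinant--trace (AM--GM) inequality, and then control $\mathrm{tr}(\Tilde{\Lambda}_{h,k})$ additively. First I would note that $\Tilde{\Lambda}_{h,k}$ is symmetric and positive definite on the high-probability event of Lemma~\ref{lemma:private-utility-analysis}: its base is the positive definite matrix $2\Tilde{\lambda}_{\Lambda} I$ together with $K_{1}$ (which is positive definite with $\norm{K_1}_2 \leq \Tilde{\lambda}_{\Lambda}$ by that lemma), and every update $\Tilde{\Bar{\sigma}}_{i,h}^{-2}\phi(s_h^i,a_h^i)\phi(s_h^i,a_h^i)^\top$ is positive semidefinite. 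Hence $\Tilde{\Lambda}_{h,k}$ has $d$ positive eigenvalues $\lambda_1,\dots,\lambda_d$, and since the geometric mean is at most the arithmetic mean,
\[
\mathrm{det}(\Tilde{\Lambda}_{h,k}) = \prod_{j=1}^{d} \lambda_j \leq \left(\frac{1}{d}\sum_{j=1}^{d} \lambda_j\right)^{d} = \left(\frac{\mathrm{tr}(\Tilde{\Lambda}_{h,k})}{d}\right)^{d}.
\]
So it suffices to establish the trace estimate $\mathrm{tr}(\Tilde{\Lambda}_{h,k}) \leq d\Tilde{\lambda}_{\Lambda} + (k-1)$ and raise both sides to the $d$-th power.

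Next I would bound the trace by unrolling the recursion and using linearity of the trace, splitting it into a regularizing (base plus noise) part and a data part. For each data term, $\mathrm{tr}(\phi(s_h^i,a_h^i)\phi(s_h^i,a_h^i)^\top) = \norm{\phi(s_h^i,a_h^i)}_2^2 < 1$, while the regularized variance weights satisfy $\Tilde{\Bar{\sigma}}_{i,h} \geq H \geq 1$ and hence $\Tilde{\Bar{\sigma}}_{i,h}^{-2} \leq 1$; thus each of the $k-1$ rank-one contributions is strictly less than $1$, for a total of at most $k-1$. The regularizing part is controlled by $\norm{K_1}_2 \leq \Tilde{\lambda}_{\Lambda}$ from Lemma~\ref{lemma:private-utility-analysis}, so the base matrix has all eigenvalues of order $\Tilde{\lambda}_{\Lambda}$ and contributes at most $d\Tilde{\lambda}_{\Lambda}$ to the trace after absorbing the numerical constant into $\Tilde{\lambda}_{\Lambda} = O(\sqrt{dHK/\rho})$. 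Combining the two estimates and dividing by $d$ gives $\mathrm{tr}(\Tilde{\Lambda}_{h,k})/d \leq \Tilde{\lambda}_{\Lambda} + (k-1)/d$, which is exactly the claimed bound; this mirrors the structure of Lemma C.1 in \citet{DBLP:journals/corr/abs-2101-02195}.

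The main obstacle I expect is the bookkeeping around the privatization noise $K_{1}$ in the base matrix. Because the GOE perturbation $K_{1}$ and the factor-two regularizer $2\Tilde{\lambda}_{\Lambda} I$ are folded into $\Tilde{\Lambda}_{h,k}$ rather than appearing as a clean deterministic $\Tilde{\lambda}_{\Lambda} I$, one must argue on the event of Lemma~\ref{lemma:private-utility-analysis} that the symmetric noise does not inflate the eigenvalues beyond $O(\Tilde{\lambda}_{\Lambda})$, and that this constant factor is harmlessly absorbed into the definition of $\Tilde{\lambda}_{\Lambda}$ so that the stated form $(\Tilde{\lambda}_{\Lambda}+(k-1)/d)^{d}$ is recovered. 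Everything else is the routine AM--GM plus $\norm{\phi}_2 < 1$ determinant argument; the only genuinely new ingredient relative to the non-private determinant lemma is invoking the GOE utility bound to keep the regularization term at the scale $\Tilde{\lambda}_{\Lambda}$.
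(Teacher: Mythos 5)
Your proposal follows essentially the same route as the paper's proof: positive definiteness plus the AM--GM determinant--trace inequality, with the trace controlled via $\norm{K_1}_2 \leq \Tilde{\lambda}_{\Lambda}$ from Lemma~\ref{lemma:private-utility-analysis}, $\Tilde{\Bar{\sigma}}_{i,h}^{-2} \leq 1$, and $\norm{\phi(s_h^i,a_h^i)}_2 \leq 1$. If anything you are more careful than the paper, whose trace computation silently drops the $2\Tilde{\lambda}_{\Lambda} I$ base term (keeping it would give $\left(3\Tilde{\lambda}_{\Lambda} + (k-1)/d\right)^{d}$, consistent with the paper's own Lemma~\ref{lemma:determinant-trace-inequality}), whereas you explicitly flag that this constant factor must be absorbed into the definition of $\Tilde{\lambda}_{\Lambda}$ to recover the stated form.
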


\begin{proof}[Proof of Lemma~\ref{lemma:det-upper-bound}]
We have that 

\begin{align*}
    \mathrm{Tr} \left( \Tilde{\Lambda}_{h, k} \right) &= \mathrm{Tr}(K_{1}) + \sum_{i=1}^{k-1} \bar{\sigma}_{i, h}^{-2} \phi(s_{h}^{i}, a_{h}^{i}) \phi(s_{h}^{i}, a_{h}^{i})^{\top} \\
    &\leq d\Tilde{\lambda}_{\Lambda}  + \sum_{i=1}^{k-1} \bar{\sigma}_{i, h}^{-2} \phi(s_{h}^{i}, a_{h}^{i}) \phi(s_{h}^{i}, a_{h}^{i})^{\top} \\
    &\leq d\Tilde{\lambda}_{\Lambda}  + \sum_{i=1}^{k-1} ||\phi(s_{h}^{i}, a_{h}^{i})||_{2} \\
    &\leq d\Tilde{\lambda}_{\Lambda}  + k-1
\end{align*}

where the first inequality holds from the fact that for a symmetric matrix $A$, we have the inequality $\mathrm{Tr} \left( A \right) \leq n ||A||_{2}$ and that from the utility analysis (Lemma~\ref{lemma:private-utility-analysis}), $||K_{1}||_{2} \leq \Tilde{\lambda}_{\Lambda} $. The second inequality holds from the fact that $\bar{\sigma}_{i, h}^{-2} \leq 1$, and the last inequality holds from the assumption that $||\phi(s_{h}^{i}, a_{h}^{i})||_{2} \leq 1$. Now, since we have that $\Tilde{\Lambda}_{h, k}$ is positive semi-definite, by the AM-GM inequality, we have 
\[
    \mathrm{det} \left( \Tilde{\Lambda}_{h, k} \right) \leq \left( \frac{\mathrm{Tr} \left( \Tilde{\Lambda}_{h, k} \right)}{d} \right)^{d} \leq \left( \Tilde{\lambda}_{\Lambda}  + \frac{k-1}{d}  \right)^{d}
\]
\end{proof}
We can finally prove the switching cost of Algorithm~\ref{alg:dp-lsvi_ucb++}.
\begin{lemma}
    \label{lem:dp-lsvi-ucb-switching-cost}
    Conditioned on the event that $||K_{1}||_{2} \leq \Tilde{\lambda}_{\Lambda}$ for all $h, k \in [H] \times [K]$, DP-LSVI-UCB\textsuperscript{++} (Algorithm~\ref{alg:dp-lsvi_ucb++}) has a global switching cost of atmost $O \left(dH \log(1 + K/d\Tilde{\lambda}_{\Lambda} ) \right)$.
\end{lemma}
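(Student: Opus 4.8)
The plan is to reduce the global switching cost to the number of times the determinant-doubling condition in Line~\ref{alg:det-condition} fires, and then bound that count by a standard potential/telescoping argument carried out stage by stage. First I would observe that, conditioned on $\|K_1\|_2 \le \Tilde{\lambda}_{\Lambda}$, every privatized Gram matrix satisfies $\Tilde{\Lambda}_{k,h} \succeq \Tilde{\lambda}_{\Lambda} I \succ 0$, since the regularizer $2\Tilde{\lambda}_{\Lambda} I$ dominates the indefinite GOE perturbation $K_1$. Consequently all determinants are strictly positive and, because each subsequent episode only adds the positive semidefinite rank-one term $\Tilde{\Bar{\sigma}}_{k,h}^{-2}\phi(s_k^h,a_k^h)\phi(s_k^h,a_k^h)^{\top}$, the sequence $k \mapsto \det(\Tilde{\Lambda}_{k,h})$ is non-decreasing for each fixed $h$.

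Next I would charge each policy update to a \emph{witness} stage. A global update at episode $k$ occurs only if there is some $h'$ with $\det(\Tilde{\Lambda}_{k,h'}) \ge 2\det(\Tilde{\Lambda}_{k_{\text{last}},h'})$, and on every update $k_{\text{last}}$ is reset to the current episode. Fixing a stage $h$ and listing the updates it witnesses as $\kappa_1 < \kappa_2 < \cdots$, I would use monotonicity together with the fact that the baseline $k_{\text{last}}$ at update $\kappa_i$ is at least $\kappa_{i-1}$ to conclude $\det(\Tilde{\Lambda}_{\kappa_i,h}) \ge 2\det(\Tilde{\Lambda}_{\kappa_{i-1},h})$. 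Hence the determinant at $h$-witnessed updates at least doubles each time, so the number of updates witnessed by $h$ is at most $\log_2\big(\det(\Tilde{\Lambda}_{K,h})/\det(\Tilde{\Lambda}_{1,h})\big)$, and since every update has at least one witness, the total number of updates is bounded by the sum of these per-stage counts over $h \in [H]$.

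Finally I would substitute the two determinant bounds. The initialization $\Tilde{\Lambda}_{1,h} = 2\Tilde{\lambda}_{\Lambda} I$ gives $\det(\Tilde{\Lambda}_{1,h}) \ge \Tilde{\lambda}_{\Lambda}^{d}$, while Lemma~\ref{lemma:det-upper-bound} gives $\det(\Tilde{\Lambda}_{K,h}) \le (\Tilde{\lambda}_{\Lambda} + K/d)^{d}$; together these yield a per-stage bound of $d\log_2(1 + K/(d\Tilde{\lambda}_{\Lambda}))$, and summing over the $H$ stages produces the claimed $O(dH\log(1 + K/(d\Tilde{\lambda}_{\Lambda})))$. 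I expect the main obstacle to be the monotonicity step: because the GOE noise $K_1$ is indefinite, $\det(\Tilde{\Lambda}_{k,h})$ is guaranteed non-decreasing only after one argues that the noise is absorbed into the fixed regularized baseline $2\Tilde{\lambda}_{\Lambda} I + K_1 \succeq \Tilde{\lambda}_{\Lambda} I$ and that only positive semidefinite rank-one terms are added thereafter—this is exactly where the conditioning event $\|K_1\|_2 \le \Tilde{\lambda}_{\Lambda}$ supplied by Lemma~\ref{lemma:private-utility-analysis} is essential. Everything else reduces to the routine determinant-doubling calculation.
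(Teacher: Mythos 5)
Your proposal is correct and takes essentially the same route as the paper: both arguments combine monotonicity of $\det(\Tilde{\Lambda}_{k,h})$ across episodes (which the paper likewise grounds in the utility event of Lemma~\ref{lemma:private-utility-analysis}) with the determinant upper bound of Lemma~\ref{lemma:det-upper-bound} and the initialization $\det(\Tilde{\Lambda}_{1,h}) \geq \Tilde{\lambda}_{\Lambda}^{d}$. The only difference is bookkeeping: the paper tracks the single product potential $\prod_{h=1}^{H}\det(\Tilde{\Lambda}_{k,h})$, which at least doubles at every update, whereas you partition updates by witness stage and count doublings per stage; both yield the same $O\left(dH\log(1+K/d\Tilde{\lambda}_{\Lambda})\right)$ bound.
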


\begin{proof}
    We denote $k_{0} = 0$ and suppose that $\left\{ k_{1}, \dots, k_{m} \right\}$ be the episodes where our algorithm updates the value function. Then, according to the determinant-based criterion (Line~\ref{alg:det-condition}), for each episode $k_{i}$, there exists an $h \in [H]$ such that $\det(\Tilde{\Lambda}_{k,h}) \geq 2 \det(\Tilde{\Lambda}_{k_{i-1},h})$. Then, due to the utility analysis (Lemma~\ref{lemma:private-utility-analysis}), for $h^{\prime} \neq h$, we have $\Tilde{\Lambda}_{k_{i},h^{\prime}} \succeq \Tilde{\Lambda}_{k_{i-1},h^{\prime}}$. Thus,
    \[
        \prod_{h=1}^{H} \det(\Tilde{\Lambda}_{k,h}) \geq 2 \prod_{h=1}^{H}\det(\Tilde{\Lambda}_{k_{i-1},h})
    \]
    Applying this across all episodes, we get 
    \[
    \prod_{h=1}^{H} \det(\Tilde{\Lambda}_{k,h}) \geq 2^{m} \prod_{h=1}^{H}\det(\Tilde{\Lambda}_{k_{0},h}) = 2^{m} \prod_{h=1}^{H}\det( 2\Tilde{\lambda}_{\Lambda} I) = 2^{m} \Tilde{\lambda}_{\Lambda}^{dH}
    \]
    Furthermore, from Lemma~\ref{lemma:det-upper-bound}
    \[
        \prod_{h=1}^{H} \det(\Tilde{\Lambda}_{k,h}) \leq \left( \Tilde{\lambda}_{\Lambda} + \frac{K}{d}  \right)^{dH}
    \]
    Combining these two inequalities, we conclude with
    \[
        m \leq O \left( dH \log \left( 1 + K/d\Tilde{\lambda}_{\Lambda} \right)   \right)
    \]
\end{proof}

\section{Weight Norm Proofs}
We prove upper bounds on the optimistic, pessimistic, and squared weight vectors. These will be used in uniform covering arguments which are used in our regret analysis.

\begin{lemma}
    \label{lemma:optimistic-weight-norm}
    For all stages $h \in [H]$ and all episodes $n \in \mathbb{N}$, the norm of the weight vector $\Tilde{\widehat{w}}_{k, h}$ can be upper bounded as 
    \[
        \norm{\Tilde{\widehat{w}}_{k, h}}_{2} \leq HKL \sqrt{\frac{2d}{\Tilde{\lambda}_{\Lambda}}}
    \]
\end{lemma}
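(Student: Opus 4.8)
The plan is to bound the two pieces of $\Tilde{\widehat{w}}_{k,h}$ separately after a triangle inequality, following the standard weight-norm argument for linear MDPs (Lemma B.2 of \citet{jin2020provably}), but carrying the extra privacy-noise vector $\phi_1$ and the inflated regularizer $2\Tilde{\lambda}_\Lambda$. Concretely, I would write
\[
\Tilde{\widehat{w}}_{k,h} = \Tilde{\Lambda}_{k,h}^{-1}\sum_{i=1}^{k-1}\Tilde{\Bar{\sigma}}_{i,h}^{-2}\phi(s_h^i,a_h^i)\,\Tilde{\widehat{V}}_{k,h+1}(s_{h+1}^i) + \Tilde{\Lambda}_{k,h}^{-1}\phi_1
\]
and apply $\norm{a+b}_2 \le \norm{a}_2 + \norm{b}_2$. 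The first fact to establish is that $\Tilde{\Lambda}_{k,h} \succeq 2\Tilde{\lambda}_\Lambda I$, so that $\norm{\Tilde{\Lambda}_{k,h}^{-1}}_2 \le 1/(2\Tilde{\lambda}_\Lambda)$. This follows once I invoke Lemma~\ref{lemma:private-utility-analysis}, which guarantees that every GOE perturbation $K_1$ added in the update rule is symmetric positive definite; hence the accumulated matrix dominates its initialization $2\Tilde{\lambda}_\Lambda I$ and its inverse is operator-norm bounded.

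With this in hand, the noise term is immediate: $\norm{\Tilde{\Lambda}_{k,h}^{-1}\phi_1}_2 \le \norm{\Tilde{\Lambda}_{k,h}^{-1}}_2\norm{\phi_1}_2 \le L/(2\Tilde{\lambda}_\Lambda)$, again using Lemma~\ref{lemma:private-utility-analysis} for $\norm{\phi_1}_2 \le L$. For the data term, the simplest route is a triangle inequality directly over the at most $K$ summands, bounding each by $\Tilde{\Bar{\sigma}}_{i,h}^{-2}\,|\Tilde{\widehat{V}}_{k,h+1}(s_{h+1}^i)|\,\norm{\Tilde{\Lambda}_{k,h}^{-1}\phi(s_h^i,a_h^i)}_2 \le \tfrac{1}{H^2}\cdot H\cdot\tfrac{1}{2\Tilde{\lambda}_\Lambda}$, where I use $|\Tilde{\widehat{V}}_{k,h+1}|\le H$, $\norm{\phi}_2\le 1$, and $\Tilde{\Bar{\sigma}}_{i,h}^{-2}\le 1/H^2$ (from $\Tilde{\Bar{\sigma}}_{i,h}\ge H$ in the definition of the regularized variance). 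Summing gives an $\ell_2$ bound of order $K/(H\Tilde{\lambda}_\Lambda)$ for the data term. If one prefers the $\sqrt{d}$ factor that appears literally in the statement, I would instead test against a unit vector $v$, apply Cauchy--Schwarz in the $\Tilde{\Lambda}_{k,h}^{-1}$ geometry, and use the elliptical-potential estimate $\sum_i \norm{\Tilde{\Bar{\sigma}}_{i,h}^{-1}\phi_i}_{\Tilde{\Lambda}_{k,h}^{-1}}^2 \le d$ (valid since $\sum_i \Tilde{\Bar{\sigma}}_{i,h}^{-2}\phi_i\phi_i^\top \preceq \Tilde{\Lambda}_{k,h}$), yielding a sharper $\sqrt{dK/\Tilde{\lambda}_\Lambda}$ bound. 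Either estimate is dominated by the stated right-hand side.

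Finally, I would add the two contributions and loosen into the clean form $HKL\sqrt{2d/\Tilde{\lambda}_\Lambda}$, which dominates $K/(2H\Tilde{\lambda}_\Lambda) + L/(2\Tilde{\lambda}_\Lambda)$ whenever $\Tilde{\lambda}_\Lambda, H, d, L \ge 1$, and in particular under the choice $\Tilde{\lambda}_\Lambda = \Theta(\sqrt{dHK/\rho})$ so that $1/\sqrt{\Tilde{\lambda}_\Lambda}$ absorbs the $1/\Tilde{\lambda}_\Lambda$ slack. The conceptual steps are entirely standard; I expect the only genuine care point to be justifying $\Tilde{\Lambda}_{k,h} \succeq 2\Tilde{\lambda}_\Lambda I$ in the privatized setting, since this is the one place where the privacy mechanism changes the non-private argument: the well-conditioning of the Gram matrix now rests on the positive-definiteness of the random GOE perturbation (Lemma~\ref{lemma:private-utility-analysis}) rather than on a deterministic ridge term. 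Everything else is bookkeeping of the variance weights $\Tilde{\Bar{\sigma}}_{i,h}^{-2}$ so that the $H$-factors land correctly.
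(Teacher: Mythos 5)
Your proposal is correct and follows essentially the same route as the paper: split $\Tilde{\widehat{w}}_{k,h}$ into the data term and the privacy-noise term $\Tilde{\Lambda}_{k,h}^{-1}\phi_{1}$, bound the latter by $\norm{\Tilde{\Lambda}_{k,h}^{-1}}_{2}\norm{\phi_{1}}_{2} \leq L/\Tilde{\lambda}_{\Lambda}$ via Lemma~\ref{lemma:private-utility-analysis}, and control the former with the trace estimate $\sum_{i} \Tilde{\Bar{\sigma}}_{i,h}^{-2}\,\phi(s_h^i,a_h^i)^{\top}\Tilde{\Lambda}_{k,h}^{-1}\phi(s_h^i,a_h^i) \leq d$ --- your Option B is exactly the paper's eigendecomposition/trace step, just phrased through unit-vector duality, and your positive-definiteness justification of $\Tilde{\Lambda}_{k,h} \succeq 2\Tilde{\lambda}_{\Lambda} I$ is the same conditioning on the utility event that the paper uses implicitly. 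Your cruder Option A (summand-by-summand bound using $\Tilde{\Bar{\sigma}}_{i,h}^{-2} \leq 1/H^{2}$) also suffices, but only because the stated target $HKL\sqrt{2d/\Tilde{\lambda}_{\Lambda}}$ is loose; the paper, like you, closes with the same kind of absorption of the $1/\Tilde{\lambda}_{\Lambda}$ slack into the $1/\sqrt{\Tilde{\lambda}_{\Lambda}}$ form.
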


\begin{proof}[Proof of Lemma~\ref{lemma:optimistic-weight-norm}]
    First, recall that by definition we have 
    \[
        \Tilde{\widehat{w}}_{k, h} = \Tilde{\Lambda}_{k, h}^{-1} \left[ \sum_{i=1}^{k-1} \Tilde{\Bar{\sigma}}_{i, h}^{-2} \phi \left( s_{h}^{i}, a_{h}^{i} \right)\Tilde{\widehat{V}}_{k, h+1}(s_{h+1}^{i}) + \phi_{1} \right]
    \]
    Then, we have 
    \begin{align*}
        \norm{\Tilde{\widehat{w}}}_{2}^{2} &= \norm{\Tilde{\Lambda}_{k, h}^{-1} \left[ \sum_{i=1}^{k-1} \Tilde{\Bar{\sigma}}_{i, h}^{-2} \phi \left( s_{h}^{i}, a_{h}^{i} \right)\Tilde{\widehat{V}}_{k, h+1}(s_{h+1}^{i}) + \phi_{1} \right]}_{2}^{2} \\
        &\leq k \sum_{i=1}^{k-1} \norm{\Tilde{\Lambda}_{k, h}^{-1}\Tilde{\Bar{\sigma}}_{i, h}^{-2} \phi \left( s_{h}^{i}, a_{h}^{i} \right)\Tilde{\widehat{V}}_{k, h+1}(s_{h+1}^{i}) + \Tilde{\Lambda}_{k, h}^{-1} \phi_{1}}_{2}^{2} \\
        &\leq k \sum_{i=1}^{k-1} \norm{\Tilde{\Lambda}_{k, h}^{-1}\Tilde{\Bar{\sigma}}_{i, h}^{-2} \phi \left( s_{h}^{i}, a_{h}^{i} \right)\Tilde{\widehat{V}}_{k, h+1}(s_{h+1}^{i})}_{2}^{2} + k \sum_{i=1}^{k-1} \norm{\Tilde{\Lambda}_{k, h}^{-1} \phi_{1}}_{2}^{2} \\
        &\leq \frac{kH^{2}}{\Tilde{\lambda}_{\Lambda}} \sum_{i=1}^{k-1} \Tilde{\Bar{\sigma}}_{i, h}^{-2} \phi \left( s_{h}^{i}, a_{h}^{i} \right)^{\top} \Tilde{\Lambda}_{k, h}^{-1} \phi \left( s_{h}^{i}, a_{h}^{i} \right) + \frac{k^{2}L^{2}}{\Tilde{\lambda}_{\Lambda}^{2}} \\
        &\leq \frac{kH^{2}}{\Tilde{\lambda}_{\Lambda}} \mathrm{trace} \left( \Tilde{\Lambda}_{k, h}^{-1} \sum_{i=1}^{k-1} \Tilde{\Bar{\sigma}}_{i, h}^{-2} \phi \left( s_{h}^{i}, a_{h}^{i} \right)^{\top}  \phi \left( s_{h}^{i}, a_{h}^{i} \right) \right) + \frac{k^{2}L^{2}}{\Tilde{\lambda}_{\Lambda}^{2}}
    \end{align*}
    where the first inequality holds from Cauchy-Schwartz, the second inequality holds from triangle inequality, and the third inequality holds from the fact that $\Tilde{\widehat{V}}_{k, h+1} \leq H$, $\norm{\Tilde{\Lambda}_{k, h}^{-1}}_{2} \leq \frac{1}{\Tilde{\lambda}_{\Lambda}}$, and $\norm{\phi_{1}}_{2} \leq L$ from the utility analysis (Lemma ~\ref{lemma:private-utility-analysis}). Now, we assume the eigen-decomposition of matrix $\sum_{i=1}^{k-1} \Tilde{\Bar{\sigma}}_{i, h}^{-2} \phi \left( s_{h}^{i}, a_{h}^{i} \right)^{\top}\phi \left( s_{h}^{i}, a_{h}^{i} \right)$ is $Q^{\top} \Sigma Q$. Then, we have
    \begin{align*}
        \mathrm{trace} \left( \Tilde{\Lambda}_{k, h}^{-1} \sum_{i=1}^{k-1} \Tilde{\Bar{\sigma}}_{i, h}^{-2} \phi \left( s_{h}^{i}, a_{h}^{i} \right)^{\top} \phi \left( s_{h}^{i}, a_{h}^{i} \right) \right) &= \mathrm{trace} \left( \left(  Q^{\top} \Sigma Q + 2\Tilde{\lambda}_{\Lambda}I_{d}\right)^{-1} Q^{\top} \Sigma Q \right) \\
        &= \mathrm{trace} \left(  \left( \Sigma + 2\Tilde{\lambda}_{\Lambda}I_{d} \right)^{-1} \Sigma   \right) \\
        &= \sum_{i=1}^{d} \frac{\sigma_{i}}{\sigma_{i} + 2\Tilde{\lambda}_{\Lambda}I_{d}} \\
        &\leq d
    \end{align*}
Thus, putting these together, we get 
\[
    \norm{\Tilde{\widehat{w}}_{k, h}}_{2}^{2} \leq \frac{kH^{2}d}{\Tilde{\lambda}_{\Lambda}} + \frac{k^{2}L^{2}}{\Tilde{\lambda}_{\Lambda}^{2}} \leq \frac{2k^{2}H^{2}L^{2}d}{\Tilde{\lambda}_{\Lambda}}
\]
\end{proof}

The same analysis holds for the pessimistic weight vector $\Tilde{\widecheck{w}}$

\begin{lemma}
    \label{lemma:pessimistic-weight-norm}
    For all stages $h \in [H]$ and all episodes $n \in \mathbb{N}$, the norm of the weight vector $\Tilde{\widecheck{w}}_{k, h}$ can be upper bounded as 
    \[
        \norm{\Tilde{\widecheck{w}}_{k, h}}_{2} \leq HKL \sqrt{\frac{2d}{\Tilde{\lambda}_{\Lambda}}}
    \]
\end{lemma}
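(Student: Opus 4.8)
The plan is to mirror the proof of Lemma~\ref{lemma:optimistic-weight-norm} line for line, since by Algorithm~\ref{alg:dp-lsvi_ucb++} the pessimistic weight vector has the identical structure
\[
\Tilde{\widecheck{w}}_{k, h} = \Tilde{\Lambda}_{k, h}^{-1} \left[ \sum_{i=1}^{k-1} \Tilde{\Bar{\sigma}}_{i, h}^{-2} \phi \left( s_{h}^{i}, a_{h}^{i} \right)\Tilde{\widecheck{V}}_{k, h+1}(s_{h+1}^{i}) + \phi_{2} \right],
\]
with the pessimistic value function $\Tilde{\widecheck{V}}$ in place of $\Tilde{\widehat{V}}$ and the noise vector $\phi_{2}$ in place of $\phi_{1}$. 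First I would expand this definition, apply Cauchy--Schwarz to extract a factor of $k$ from the squared norm of the sum, and then split off the privacy-noise contribution from the data-dependent sum via the triangle inequality.

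Next I would invoke the three ingredients that drive the optimistic bound, checking that each transfers unchanged. (i) The pessimistic value function is bounded, $0 \le \Tilde{\widecheck{V}}_{k,h+1} \le H$, which holds by the clipping in the algorithm together with Lemma~\ref{lemma:optimism-and-pessimism}; only $|\Tilde{\widecheck{V}}_{k,h+1}| \le H$ is actually used. (ii) The regularized Gram matrix satisfies $\Tilde{\Lambda}_{k,h} \succeq \Tilde{\lambda}_{\Lambda} I$ by construction, so $\norm{\Tilde{\Lambda}_{k,h}^{-1}}_2 \le 1/\Tilde{\lambda}_{\Lambda}$. (iii) The noise bound $\norm{\phi_2}_2 \le L$ holds with high probability by the utility analysis (Lemma~\ref{lemma:private-utility-analysis}), which bounds $\phi_2$ in exactly the same way as $\phi_1$ since both are drawn from $\mathcal{N}(0, \tfrac{2H^2}{\rho_0} I_d)$.

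The one step requiring a short computation --- and the closest thing to an obstacle, though it is really just bookkeeping --- is bounding the trace term $\mathrm{trace}\left( \Tilde{\Lambda}_{k,h}^{-1} \sum_{i=1}^{k-1} \Tilde{\Bar{\sigma}}_{i,h}^{-2} \phi(s_h^i,a_h^i) \phi(s_h^i,a_h^i)^\top \right)$ by $d$. I would eigendecompose the inner matrix as $Q^\top \Sigma Q$, note that $\Tilde{\Lambda}_{k,h}$ shares its eigenbasis up to the $2\Tilde{\lambda}_{\Lambda} I$ shift, and bound $\sum_{i=1}^d \sigma_i/(\sigma_i + 2\Tilde{\lambda}_{\Lambda}) \le d$. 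Combining the data term $kH^2 d/\Tilde{\lambda}_{\Lambda}$ with the noise term $k^2 L^2/\Tilde{\lambda}_{\Lambda}^2$ and absorbing both into $2k^2 H^2 L^2 d/\Tilde{\lambda}_{\Lambda}$ yields the claimed bound $\norm{\Tilde{\widecheck{w}}_{k,h}}_2 \le HKL\sqrt{2d/\Tilde{\lambda}_{\Lambda}}$. Since every inequality used depends only on the magnitude $|\Tilde{\widecheck{V}}_{k,h+1}| \le H$ and not on its sign, no new argument beyond the optimistic case is needed.
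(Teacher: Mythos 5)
Your proposal is correct and matches the paper's approach exactly: the paper's own proof of this lemma is a one-line remark that the argument is identical to Lemma~\ref{lemma:optimistic-weight-norm} with the pessimistic value function class in place of the optimistic one, and your line-by-line transfer (substituting $\phi_{2}$ for $\phi_{1}$ and $\Tilde{\widecheck{V}}$ for $\Tilde{\widehat{V}}$, then reusing the bounds $|\Tilde{\widecheck{V}}_{k,h+1}| \leq H$, $\Tilde{\Lambda}_{k,h} \succeq \Tilde{\lambda}_{\Lambda} I$, $\norm{\phi_{2}}_{2} \leq L$, and the trace-by-$d$ computation) is precisely that argument spelled out.
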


\begin{proof}[Proof of Lemma~\ref{lemma:pessimistic-weight-norm}]
    The proof is exactly the same as Lemma~\ref{lemma:optimistic-weight-norm} except we use the pessimistic value function class $\widecheck{\mathcal{V}}_{h}$.
\end{proof}

Likewise, using similar analysis as above, we can also bound the weight vector $\Tilde{\overline{w}}_{k, h}$.

\begin{lemma}
    \label{lemma:other-weight-norm}
    For all stages $h \in [H]$ and all episodes $n \in \mathbb{N}$, the norm of the weight vector $\Tilde{\overline{w}}_{k, h}$ can be upper bounded as 
    \[
        \norm{\Tilde{\overline{w}}_{k, h}}_{2} \leq H^{2}KL \sqrt{\frac{2d}{\Tilde{\lambda}_{\Lambda}}}
    \]
\end{lemma}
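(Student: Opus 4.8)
The plan is to mirror the proof of Lemma~\ref{lemma:optimistic-weight-norm} almost verbatim, making exactly two bookkeeping changes to account for the fact that $\Tilde{\overline{w}}_{k,h}$ regresses onto the \emph{squared} value function and uses the noise vector $\phi_3$ rather than $\phi_1$. First I would write down the definition
\[
    \Tilde{\overline{w}}_{k, h} = \Tilde{\Lambda}_{k, h}^{-1} \left[ \sum_{i=1}^{k-1} \Tilde{\Bar{\sigma}}_{i, h}^{-2} \phi \left( s_{h}^{i}, a_{h}^{i} \right)\Tilde{\widehat{V}}_{k, h+1}(s_{h+1}^{i})^{2} + \phi_{3} \right],
\]
then apply Cauchy--Schwarz to pull the sum of $k$ terms out of the squared norm, and the triangle inequality $\norm{a+b}_2^2 \leq 2\norm{a}_2^2 + 2\norm{b}_2^2$ to split the data-dependent part from the noise part, exactly as in the optimistic case.

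The two substitutions are the crux. Where the optimistic proof used $\Tilde{\widehat{V}}_{k,h+1} \leq H$, here the regression target is $\Tilde{\widehat{V}}_{k,h+1}^2 \leq H^2$, so the data-dependent term carries a factor $H^4$ instead of $H^2$. Where the optimistic proof used $\norm{\phi_1}_2 \leq L$, here I invoke the utility analysis (Lemma~\ref{lemma:private-utility-analysis}), which gives the larger bound $\norm{\phi_3}_2 \leq HL$, so the noise term carries a factor $H^2 L^2$ rather than $L^2$. Combining these with $\norm{\Tilde{\Lambda}_{k,h}^{-1}}_2 \leq 1/\Tilde{\lambda}_{\Lambda}$ yields
\[
    \norm{\Tilde{\overline{w}}_{k, h}}_{2}^{2} \leq \frac{kH^{4}}{\Tilde{\lambda}_{\Lambda}} \,\mathrm{trace}\!\left( \Tilde{\Lambda}_{k, h}^{-1} \sum_{i=1}^{k-1} \Tilde{\Bar{\sigma}}_{i, h}^{-2} \phi \left( s_{h}^{i}, a_{h}^{i} \right)^{\top} \phi \left( s_{h}^{i}, a_{h}^{i} \right) \right) + \frac{k^{2}H^{2}L^{2}}{\Tilde{\lambda}_{\Lambda}^{2}}.
\]

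Next I would reuse the eigen-decomposition argument from Lemma~\ref{lemma:optimistic-weight-norm} verbatim: writing $\sum_i \Tilde{\Bar\sigma}_{i,h}^{-2}\phi\phi^\top = Q^\top \Sigma Q$ and using that $\Tilde{\Lambda}_{k,h} \succeq 2\Tilde{\lambda}_{\Lambda} I$, the trace equals $\sum_{i=1}^d \sigma_i/(\sigma_i + 2\Tilde{\lambda}_{\Lambda}) \leq d$. Substituting this bound and then crudely absorbing the first summand into the second (using $k \geq 1$ and $\Tilde{\lambda}_{\Lambda}$-scaling, exactly as in the prior lemma) gives
\[
    \norm{\Tilde{\overline{w}}_{k, h}}_{2}^{2} \leq \frac{kH^{4}d}{\Tilde{\lambda}_{\Lambda}} + \frac{k^{2}H^{2}L^{2}}{\Tilde{\lambda}_{\Lambda}^{2}} \leq \frac{2k^{2}H^{4}L^{2}d}{\Tilde{\lambda}_{\Lambda}},
\]
and taking square roots yields the claimed $\norm{\Tilde{\overline{w}}_{k,h}}_2 \leq H^2 K L\sqrt{2d/\Tilde{\lambda}_{\Lambda}}$ (using $k \leq K$). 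I do not anticipate a genuine obstacle here, since the structure is identical to the preceding two lemmas; the only thing to watch carefully is the accounting of the powers of $H$ — checking that the squared target contributes $H^4$ and the noise bound $\norm{\phi_3}_2 \leq HL$ contributes $H^2L^2$, which together produce the extra factor of $H$ in the final bound relative to the optimistic and pessimistic weight norms.
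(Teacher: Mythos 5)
Your proposal is correct and matches the paper's intent exactly: the paper's own proof of this lemma is a one-line reference stating it is identical to Lemma~\ref{lemma:optimistic-weight-norm} with the squared value function class, and you have filled in precisely those details — the target bound $\Tilde{\widehat{V}}_{k,h+1}^{2} \leq H^{2}$ contributing $H^{4}$ to the data term, the noise bound $\norm{\phi_{3}}_{2} \leq HL$ from Lemma~\ref{lemma:private-utility-analysis} contributing $H^{2}L^{2}$, and the same trace/eigen-decomposition and absorption steps. The power-of-$H$ accounting you flag as the only delicate point is indeed correct and yields the stated $H^{2}KL\sqrt{2d/\Tilde{\lambda}_{\Lambda}}$ bound.
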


\begin{proof}[Proof of Lemma~\ref{lemma:other-weight-norm}]
    The proof is exactly the same as Lemma~\ref{lemma:optimistic-weight-norm} except we use the pessimistic value function class $\widehat{\mathcal{V}}_{h}^{2}$.
\end{proof}

\section{Covering Argument Results}
% We simply state the covering arguments that are contained in \citetseparate{he2023nearlyminimaxoptimalreinforcement}. Again, our results differ by the constants and these constants are important for us since they are functions of the noise that we added to our private estimators.

\begin{lemma}[Lemma D.5 from \citetseparate{jin2020provably}]
\label{lemma:jin-euclidean-ball}
For a Euclidean ball with radius \( R \) in \( \mathbb{R}^d \), the \( \varepsilon \)-covering number of this ball is upper bounded by
\[
(1 + 2R / \varepsilon)^d.
\]
\end{lemma}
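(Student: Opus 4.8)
The plan is to prove this via a standard volume-comparison (packing) argument, reducing the covering number to a ratio of Euclidean volumes. First I would invoke the duality between covering and packing: let $\mathcal{C} = \{x_1, \dots, x_N\}$ be a \emph{maximal} $\varepsilon$-separated subset of the ball $B(0, R)$, meaning every pair satisfies $\|x_i - x_j\|_2 > \varepsilon$ and no further point can be appended without violating this. Maximality immediately forces $\mathcal{C}$ to be an $\varepsilon$-cover, since any $y \in B(0,R)$ left uncovered --- i.e. at distance $> \varepsilon$ from every $x_i$ --- could itself be added to $\mathcal{C}$, contradicting maximality. Hence the $\varepsilon$-covering number is at most the size $N$ of such a packing.

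Next I would bound $N$ by a volume argument. Because the points of $\mathcal{C}$ are pairwise more than $\varepsilon$ apart, the open balls $B(x_i, \varepsilon/2)$ are mutually disjoint; moreover, each is contained in the enlarged ball $B(0, R + \varepsilon/2)$, since $x_i \in B(0,R)$. Writing $V_d(r) = c_d r^d$ for the volume of a Euclidean ball of radius $r$ in $\mathbb{R}^d$, disjointness together with containment yields
\[
N \cdot c_d (\varepsilon/2)^d \;=\; \sum_{i=1}^N V_d(\varepsilon/2) \;\leq\; V_d(R + \varepsilon/2) \;=\; c_d (R + \varepsilon/2)^d.
\]

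Finally I would solve for $N$, cancelling the dimensional constant $c_d$ and the factor $(\varepsilon/2)^d$ to obtain
\[
N \;\leq\; \left( \frac{R + \varepsilon/2}{\varepsilon/2} \right)^{d} \;=\; \left( 1 + \frac{2R}{\varepsilon} \right)^{d},
\]
which is exactly the claimed bound. The argument is essentially routine; the only points requiring care are the packing--covering duality in the first step --- one must verify that a \emph{maximal} separated set is genuinely a cover --- and the observation that the $d$-dependent constant $c_d$ appears identically on both sides and therefore cancels, so that only the $r^d$ scaling of Euclidean volume under dilation actually drives the bound.
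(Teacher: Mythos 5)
Your proof is correct. The paper does not actually prove this lemma---it states it as a citation to Lemma D.5 of the LSVI-UCB paper---and your volume-comparison argument (maximal $\varepsilon$-separated set is a cover, disjoint $\varepsilon/2$-balls packed inside the enlarged ball of radius $R+\varepsilon/2$, cancel $c_d$) is precisely the standard proof that the cited source rests on, so there is nothing to add.
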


With the help of Lemma F.5, the covering number \( \mathcal{N}_\varepsilon \) of optimistic function class \( \widehat{\mathcal{V}}_{h} \) can be upper bounded by the following lemma:

\begin{lemma}[Lemma F.6 from \citetseparate{he2023nearlyminimaxoptimalreinforcement}]
\label{lemma:optimistic-value-function-covering}
For optimistic function class \( \widehat{\mathcal{V}}_h \), 
\[
    \widehat{\mathcal{V}}_h = \Bigg\{ V \;\Bigg|\; V(\cdot) = \max_a \min_{1 \leq i \leq l} \min \Bigg( 
    H, r_h(\cdot, a) + w_i^\top \phi(\cdot, a) \\
    + \beta \sqrt{\phi(\cdot, a)^\top \Tilde{\Lambda}_i^{-1} \phi(\cdot, a)} \Bigg), 
    \|w_i\| \leq L_1, \Tilde{\Lambda}_i \succeq \Tilde{\lambda}_{\Lambda} I \Bigg\}
\]
where $l = dH \log \left( 1 + K / d \Tilde{\lambda}_{\Lambda} \right)$ and $L_1 = HKL \sqrt{\frac{2d}{\Tilde{\lambda}_{\Lambda}}}$. Define the distance between two functions \( V_1 \) and \( V_2 \) as \( V_1, V_2 \in \widehat{\mathcal{V}}_h \) as $\text{dist}(V_1, V_2) = \max_s |V_1(s) - V_2(s)|$. With respect to this distance function, the \( \varepsilon \)-covering number \( \mathcal{N}_\varepsilon \) of the function class \( \mathcal{V}_h \) can be upper bounded by
\[
\log \mathcal{N}_\varepsilon \leq dl \log(1 + 4L_{1} / \varepsilon) + d^2l \log \left( 1 + 8\sqrt{d} \beta^2 / \left( \Tilde{\lambda}_{\Lambda} \varepsilon^{2} \right) \right).
\]
\end{lemma}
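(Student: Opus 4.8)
The plan is to follow the standard parametric covering argument of \citet{jin2020provably} (Lemma D.6) and \citet{he2023nearlyminimaxoptimalreinforcement} (Lemma F.6), adapted to the private regularizer $\Tilde{\lambda}_{\Lambda}$. Every $V \in \widehat{\mathcal{V}}_h$ is determined by the tuple $\{(w_i, A_i)\}_{i=1}^{l}$, where I set $A_i = \beta^2 \Tilde{\Lambda}_i^{-1}$ so that the bonus term reads $\sqrt{\phi^\top A_i \phi}$ and $\beta$ is absorbed into the quantity being covered. The strategy is to first bound $\mathrm{dist}(V, V')$ by the perturbation of the parameters, then cover each parameter family separately, and finally take a product/union bound over the $l$ coordinates.

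First I would establish the Lipschitz-type bound. Since $\max_a$, the pointwise $\min(H, \cdot)$, and the inner $\min_{1 \le i \le l}$ are all non-expansive in the sup-norm, for two functions $V, V'$ with parameters $\{(w_i, A_i)\}$ and $\{(w_i', A_i')\}$ one obtains
\[
\mathrm{dist}(V, V') \le \max_{1 \le i \le l} \sup_{s,a} \left( \left| (w_i - w_i')^\top \phi(s,a) \right| + \left| \sqrt{\phi^\top A_i \phi} - \sqrt{\phi^\top A_i' \phi} \right| \right).
\]
Using $\norm{\phi}_2 \le 1$ gives $\left| (w_i - w_i')^\top \phi \right| \le \norm{w_i - w_i'}_2$, while the elementary inequality $|\sqrt{x} - \sqrt{y}| \le \sqrt{|x-y|}$ combined with $|\phi^\top (A_i - A_i')\phi| \le \norm{A_i - A_i'}_F$ controls the bonus difference by $\sqrt{\norm{A_i - A_i'}_F}$. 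Hence $\mathrm{dist}(V, V') \le \max_i \left( \norm{w_i - w_i'}_2 + \sqrt{\norm{A_i - A_i'}_F} \right)$.

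Next I would cover the two families. For the weights, $\norm{w_i}_2 \le L_1$ (Lemma~\ref{lemma:optimistic-weight-norm}), so by Lemma~\ref{lemma:jin-euclidean-ball} the radius-$L_1$ ball in $\mathbb{R}^d$ has an $(\varepsilon/2)$-cover of size $(1 + 4L_1/\varepsilon)^d$. For the matrices, the constraint $\Tilde{\Lambda}_i \succeq \Tilde{\lambda}_{\Lambda} I$ forces $\norm{A_i}_F \le \sqrt{d}\,\norm{A_i}_2 \le \sqrt{d}\,\beta^2/\Tilde{\lambda}_{\Lambda}$; viewing $A_i$ as a vector in $\mathbb{R}^{d^2}$ and again invoking Lemma~\ref{lemma:jin-euclidean-ball} yields an $(\varepsilon^2/4)$-cover of size $(1 + 8\sqrt{d}\beta^2/(\Tilde{\lambda}_{\Lambda}\varepsilon^2))^{d^2}$. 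Replacing each $w_i$ and each $A_i$ by its nearest cover element makes $\norm{w_i - w_i'}_2 \le \varepsilon/2$ and $\sqrt{\norm{A_i - A_i'}_F} \le \varepsilon/2$ for every $i$, so the Lipschitz bound gives $\mathrm{dist}(V, V') \le \varepsilon$.

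Finally, because $V$ depends jointly on all $l$ pairs through the inner $\min_{1 \le i \le l}$, the covering set is the product of $l$ copies of the per-index cover, so $N_\varepsilon \le \left[ (1 + 4L_1/\varepsilon)^d (1 + 8\sqrt{d}\beta^2/(\Tilde{\lambda}_{\Lambda}\varepsilon^2))^{d^2} \right]^l$, and taking logarithms produces the claimed bound. The step I would be most careful with is the matrix term: one must route the perturbation through the substitution $A_i = \beta^2 \Tilde{\Lambda}_i^{-1}$ and use the Frobenius norm together with the square-root contraction, since covering $\Tilde{\Lambda}_i^{-1}$ naively would not reproduce the $\beta^2$ and $\varepsilon^2$ dependence in the statement; everything else is a routine product-of-covers computation, with the private $\Tilde{\lambda}_{\Lambda}$ simply playing the role of $\lambda$.
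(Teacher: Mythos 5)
Your proposal is correct and follows essentially the same route as the paper's proof: the reparametrization $A_i = \beta^2\Tilde{\Lambda}_i^{-1}$, the contraction argument giving $\mathrm{dist}(V,V') \le \max_i \left( \norm{w_i - w_i'}_2 + \sqrt{\norm{A_i - A_i'}_F} \right)$, separate $(\varepsilon/2)$- and $(\varepsilon^2/4)$-covers of the weight ball and the Frobenius-norm matrix ball via Lemma~\ref{lemma:jin-euclidean-ball}, and a product over the $l$ indices. If anything, your explicit derivation $\norm{A_i}_F \le \sqrt{d}\,\norm{A_i}_2 \le \sqrt{d}\,\beta^2/\Tilde{\lambda}_{\Lambda}$ is cleaner than the paper, which states the Frobenius bound without justification and contains a sign slip ($\Sigma \succeq \beta^2\Tilde{\lambda}_{\Lambda} I$ where the correct constraint is an upper bound).
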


\begin{proof}[Proof of Lemma~\ref{lemma:optimistic-value-function-covering}]
By letting $\Sigma = \beta^{2} \left( \Tilde{\Lambda} \right)^{-1}$, we can reparametrize the function class \( \widehat{\mathcal{V}}_h \) as 

\[
    \widehat{\mathcal{V}}_h = \Bigg\{ V \;\Bigg|\; V(\cdot) = \max_a \min_{1 \leq i \leq l} \min \Bigg( 
    H, r_h(\cdot, a) + w_i^\top \phi(\cdot, a) \\
    + \sqrt{\phi(\cdot, a)^\top \Sigma \phi(\cdot, a)} \Bigg), 
    \|w_i\| \leq L_1, \Sigma \succeq \beta^{2} \Tilde{\lambda}_{\Lambda} I \Bigg\}
\]
For any two functions $V_{1}, V_{2} \in \widehat{\mathcal{V}}_h$, let them take the form as seen above. Then, since $\mathrm{min} \left\{H, \cdot \right\}$, $\min_{1 \leq i \leq l}$, and $\max_a$ are contraction maps, we have 

\begin{align*}
    \mathrm{dist} \left( V_{1}, V_{2} \right) &= \max_{s \in \mathcal{S}} \left| V_{1}(s) - V_{2}(s) \right| \\
    &\leq \max_{1 \leq i \leq l, s \in \mathcal{S}, a \in \mathcal{A}} \left| w_{1, i}^\top \phi(\cdot, a)
    + \sqrt{\phi(\cdot, a)^\top \Sigma_{1, i} \phi(\cdot, a)} \Bigg) -  w_{2, i}^\top \phi(\cdot, a)
    + \sqrt{\phi(\cdot, a)^\top \Sigma_{2, i} \phi(\cdot, a)} \Bigg)\right| \\
    &\leq \max_{1 \leq i \leq l, s \in \mathcal{S}, a \in \mathcal{A}} \left| \left(w_{1, i} - w_{2, i}\right)\phi(s, a)  \right|  + \max_{1 \leq i \leq l, s \in \mathcal{S}, a \in \mathcal{A}} \left| \sqrt{\phi(s, a)^{\top} \left( \Sigma_{1, i} - \Sigma_{2, i} \right) \phi(s, a)} \right| \\
    &\leq \max_{1 \leq i \leq l} \norm{w_{1, i} - w_{2, i}}_{2} + \max_{1 \leq i \leq l} \sqrt{\norm{\Sigma_{1, i} - \Sigma_{2, i}}_{F}}
\end{align*}
where the first inequality holds due to the contraction property, the second inequality holds due to the fact that $\max_{x} \left| f(x) + g(x) \right| \leq \max_{x} \left| f(x) \right| + \max_{x} \left| g(x) \right|$ and $\left| \sqrt{x} - \sqrt{y} \right| \leq \sqrt{\left| x - y \right|}$, and the last inequality holds from $\norm{\phi(s, a}_{2} \leq 1$. Now, let $\mathcal{C}_{w}$ be a $\varepsilon / 2$ covering net of $\left\{ w \in \mathbb{R}^{d} \mid \norm{w}_{2} \leq L_{1} \right\}$ and let $\mathcal{C}_{\Sigma}$ be a $\varepsilon^{2} / 4$ covering net of $\left\{ \Sigma \in \mathbb{R}^{d \times d} \mid \norm{\Sigma}_{F} \leq d^{1/2}\beta^{2}\Tilde{\lambda}_{\Lambda}^{-1} \right\}$. By Lemma~\ref{lemma:jin-euclidean-ball}, we know
\[
    \left| \mathcal{C}_{w}  \right| \leq \left( 1 + 4L / \varepsilon \right)^{d}, \; \left| \mathcal{C}_{\Sigma}  \right| \leq \left( 1 + 8d^{1/2}\beta^{2} / \left( \Tilde{\lambda}_{\Lambda} \varepsilon^{2} \right) \right)^{d^{2}}
\]
We know that for any $V_{1}, V_{2} \in \widehat{\mathcal{V}}_{h}$, there exists $w_{1}, w_{2} \in \mathcal{C}_{w}$ and $\Sigma_{1}, \Sigma_{2} \in \mathcal{C}_{\Sigma}$ such that $\mathrm{dist}\left( V_{1}, V_{2} \right) \leq \varepsilon$. Thus, this means that the covering number $\left| N_{\varepsilon} \right| \leq \left| \mathcal{C}_{w}  \right|^{l} \left| \mathcal{C}_{\Sigma}  \right|^{l}$. Thus, taking logs, we get 
\[
\log \mathcal{N}_\varepsilon \leq dl \log(1 + 4L_{1} / \varepsilon) + d^2l \log \left( 1 + 8\sqrt{d} \beta^2 / \left( \Tilde{\lambda}_{\Lambda} \varepsilon^{2} \right) \right).
\]
\end{proof}

Likewise, we can also upper bound the covering number of the pessimistic function class \( \widecheck{\mathcal{V}}_{h} \)

\begin{lemma}[Lemma F.7 from \citetseparate{he2023nearlyminimaxoptimalreinforcement}]
\label{lemma:pessimistic-value-function-covering}
For pessimistic function class \( \widecheck{\mathcal{V}}_h \),
\[
    \widecheck{\mathcal{V}}_h = \Bigg\{ V \;\Bigg|\; V(\cdot) = \max_a \max_{1 \leq i \leq l} \max \Bigg( 
    H, r_h(\cdot, a) + w_i^\top \phi(\cdot, a) \\
    - \beta \sqrt{\phi(\cdot, a)^\top \Tilde{\Lambda}_i^{-1} \phi(\cdot, a)} \Bigg), 
    \|w_i\| \leq L_1, \Tilde{\Lambda}_i \succeq \Tilde{\lambda}_{\lambda} I \Bigg\}
\]
where $l = dH \log \left( 1 + K / d \Tilde{\lambda}_{\Lambda} \right)$ and $L_1 = HKL \sqrt{\frac{2d}{\Tilde{\lambda}_{\Lambda}}}$. Define the distance between two functions \( V_1 \) and \( V_2 \) as \( V_1, V_2 \in \widecheck{\mathcal{V}}_h \) as $\text{dist}(V_1, V_2) = \max_s \lvert V_1(s) - V_2(s) \rvert$.
With respect to this distance function, the \( \varepsilon \)-covering number \( \mathcal{N}_\varepsilon \) of the function class \( \check{\mathcal{V}}_h \) can be upper bounded by
\[
\log \mathcal{N}_\varepsilon \leq dl \log(1 + 4L_{1} / \varepsilon) + d^2l \log \left( 1 + 8\sqrt{d} \beta^2 / \left( \Tilde{\lambda}_{\Lambda} \varepsilon^{2} \right) \right).
\]
\end{lemma}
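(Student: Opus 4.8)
The plan is to follow the proof of Lemma~\ref{lemma:optimistic-value-function-covering} essentially verbatim, since the pessimistic class $\widecheck{\mathcal{V}}_h$ differs from the optimistic class only by replacing the inner $\min_{1 \le i \le l}$ with $\max_{1 \le i \le l}$ and by subtracting, rather than adding, the bonus term $\beta\sqrt{\phi(\cdot,a)^\top \Tilde{\Lambda}_i^{-1}\phi(\cdot,a)}$. First I would reparametrize each function by setting $\Sigma_i = \beta^2 \Tilde{\Lambda}_i^{-1}$, so that every $V \in \widecheck{\mathcal{V}}_h$ is determined by a tuple $(w_i, \Sigma_i)_{i=1}^l$ with $\norm{w_i}_2 \le L_1$ and, because $\Tilde{\Lambda}_i \succeq \Tilde{\lambda}_{\Lambda} I$ forces $\Sigma_i \preceq \beta^2 \Tilde{\lambda}_{\Lambda}^{-1} I$, the Frobenius bound $\norm{\Sigma_i}_F \le \sqrt{d}\,\beta^2 \Tilde{\lambda}_{\Lambda}^{-1}$.

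Next I would bound the distance between two functions $V_1, V_2 \in \widecheck{\mathcal{V}}_h$ in terms of their parameters. The only point that needs checking --- and the one place the pessimistic case could conceivably differ --- is that the outer operations remain non-expansive: $\max_a$, the inner $\max_{1 \le i \le l}$, and the truncation against $H$ are all $1$-Lipschitz in the sup norm, exactly as $\min_{1\le i\le l}$ was in the optimistic case, so the sign flip on the bonus is immaterial. Using $\max_x |f(x)+g(x)| \le \max_x|f(x)| + \max_x|g(x)|$ together with $|\sqrt{x}-\sqrt{y}| \le \sqrt{|x-y|}$ (which is insensitive to the leading sign of the bonus) and $\norm{\phi(s,a)}_2 \le 1$, this yields
\[
\mathrm{dist}(V_1, V_2) \le \max_{1 \le i \le l}\norm{w_{1,i} - w_{2,i}}_2 + \max_{1 \le i \le l}\sqrt{\norm{\Sigma_{1,i} - \Sigma_{2,i}}_F},
\]
identical to the optimistic bound.

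Finally I would take an $\varepsilon/2$-net $\mathcal{C}_w$ of the ball $\{w : \norm{w}_2 \le L_1\}$ and an $\varepsilon^2/4$-net $\mathcal{C}_\Sigma$ of the Frobenius ball $\{\Sigma : \norm{\Sigma}_F \le \sqrt{d}\,\beta^2\Tilde{\lambda}_{\Lambda}^{-1}\}$; by Lemma~\ref{lemma:jin-euclidean-ball} these have sizes at most $(1 + 4L_1/\varepsilon)^d$ and $(1 + 8\sqrt{d}\,\beta^2/(\Tilde{\lambda}_{\Lambda}\varepsilon^2))^{d^2}$. Since a function is specified by $l$ independent $(w_i,\Sigma_i)$ pairs, the product net over the $l$ copies covers $\widecheck{\mathcal{V}}_h$ at scale $\varepsilon$, giving $\mathcal{N}_\varepsilon \le |\mathcal{C}_w|^l |\mathcal{C}_\Sigma|^l$, and taking logarithms produces the claimed bound. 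I expect no genuine obstacle here: the entire content is the observation that $\max_i$ is as non-expansive as $\min_i$ and that the subtracted bonus obeys the same square-root Lipschitz estimate, so the optimistic argument transfers without any change to the final constants.
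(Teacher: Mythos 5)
Your proposal is correct and is exactly the argument the paper intends: the paper states this pessimistic covering lemma without a separate proof, implicitly relying on the same reparametrization $\Sigma_i = \beta^2 \Tilde{\Lambda}_i^{-1}$, the non-expansiveness of the outer $\max$/truncation operations, and the product-net count that it spells out for the optimistic class in Lemma~\ref{lemma:optimistic-value-function-covering}. Your observation that the flipped bonus sign and the $\max_{1 \le i \le l}$ in place of $\min_{1 \le i \le l}$ leave every Lipschitz estimate and both net sizes unchanged is precisely why the bound carries over verbatim.
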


Now that we have these results, the only result we require is an upper bound on the covering number of the optimistic value function class squared. This result is provided below

\begin{lemma}[Lemma F.7 from \citetseparate{he2023nearlyminimaxoptimalreinforcement}]
\label{lemma:optimistic-value-function-covering-squared}
For the squared function class $\widehat{\mathcal{V}}_h^2$, we define the distance between two functions $V_1^2$ and $V_2^2$ in $\widehat{\mathcal{V}}_h^2$ as:
\[
\text{dist}(V_1^2, V_2^2) = \max_s \big|V_1^2(s) - V_2^2(s)\big|.
\]
With respect to this distance function, the $\varepsilon$-covering number $N_\varepsilon$ of the function class $\widehat{\mathcal{V}}_h^2$ can be upper bounded by:
\[
\log N_\varepsilon \leq dl \log\left(1 + 8HL_{1} / \varepsilon \right) + d^{2}l \log\left(1 + 32\sqrt{d}H^2\beta^2 / \left( \Tilde{\lambda}_{\Lambda} \varepsilon^{2} \right) \right).
\]
where $l = dH \log \left( 1 + K / d \Tilde{\lambda}_{\Lambda} \right)$ and $L_1 = HKL \sqrt{\frac{2d}{\Tilde{\lambda}_{\Lambda}}}$.
\end{lemma}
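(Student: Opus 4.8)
The plan is to reduce the covering problem for the squared class $\widehat{\mathcal{V}}_h^2$ directly to the covering bound already established for the base class $\widehat{\mathcal{V}}_h$ in Lemma~\ref{lemma:optimistic-value-function-covering}, exploiting the fact that squaring is Lipschitz on the bounded range of value functions. The entire argument is essentially a rescaling of $\varepsilon$.

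First I would record the uniform bound $0 \leq V(s) \leq H$ for every $V \in \widehat{\mathcal{V}}_h$ and all $s$, which holds by construction: the outer $\min(H, \cdot)$ caps each function at $H$, and the reward-plus-bonus structure keeps it nonnegative. Then, for any $V_1, V_2 \in \widehat{\mathcal{V}}_h$, I would factor the difference of squares as $V_1^2(s) - V_2^2(s) = (V_1(s) - V_2(s))(V_1(s) + V_2(s))$ and bound the sum factor by $|V_1(s) + V_2(s)| \leq 2H$, yielding the key Lipschitz estimate
\[
\mathrm{dist}(V_1^2, V_2^2) = \max_s |V_1^2(s) - V_2^2(s)| \leq 2H \, \mathrm{dist}(V_1, V_2).
\]

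Next I would transfer covers. If $\mathcal{C}$ is an $(\varepsilon / 2H)$-cover of $\widehat{\mathcal{V}}_h$ under $\mathrm{dist}(\cdot, \cdot)$, then $\{V^2 : V \in \mathcal{C}\}$ is an $\varepsilon$-cover of $\widehat{\mathcal{V}}_h^2$: for any $V$ with witness $\tilde{V} \in \mathcal{C}$ obeying $\mathrm{dist}(V, \tilde{V}) \leq \varepsilon / 2H$, the Lipschitz estimate gives $\mathrm{dist}(V^2, \tilde{V}^2) \leq \varepsilon$. Hence $N_\varepsilon(\widehat{\mathcal{V}}_h^2) \leq N_{\varepsilon / 2H}(\widehat{\mathcal{V}}_h)$. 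Substituting $\varepsilon \mapsto \varepsilon / 2H$ into Lemma~\ref{lemma:optimistic-value-function-covering} replaces $4L_1/\varepsilon$ by $8HL_1/\varepsilon$ inside the first logarithm and $8\sqrt{d}\beta^2/(\Tilde{\lambda}_{\Lambda}\varepsilon^2)$ by $32\sqrt{d}H^2\beta^2/(\Tilde{\lambda}_{\Lambda}\varepsilon^2)$ inside the second, which is precisely the claimed bound (with the same $l$ and $L_1$).

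I do not anticipate a serious obstacle, as this is a one-step Lipschitz reduction to an already proven covering lemma. The only points needing care are verifying the uniform range $|V| \leq H$ so that the factor $|V_1 + V_2| \leq 2H$ is legitimate for every pair in the class, and tracking the $2H$ rescaling carefully through both logarithmic terms so the final constants ($8HL_1$ and $32\sqrt{d}H^2\beta^2$) match the statement exactly.
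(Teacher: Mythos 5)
Your proof is correct and is exactly the intended argument: the paper itself states this lemma without proof (importing it as Lemma F.7 of He et al.), and your Lipschitz reduction $\mathrm{dist}(V_1^2, V_2^2) \leq 2H\,\mathrm{dist}(V_1, V_2)$ followed by the substitution $\varepsilon \mapsto \varepsilon/(2H)$ in Lemma~\ref{lemma:optimistic-value-function-covering} reproduces the stated constants $8HL_1$ and $32\sqrt{d}H^2\beta^2$ exactly.

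One imprecision is worth flagging in your justification of the range bound. Functions in $\widehat{\mathcal{V}}_h$ as parametrized are capped above at $H$ by the outer $\min$, but they are \emph{not} automatically nonnegative: the linear term $w_i^\top \phi(\cdot, a)$ can be as negative as $-L_1$, and $L_1 = HKL\sqrt{2d/\Tilde{\lambda}_{\Lambda}}$ is much larger than $H$, so the class literally contains functions with $V(s) < -H$, for which $|V_1(s) + V_2(s)| \leq 2H$ fails and the $2H$-Lipschitz constant is wrong. What saves the argument is that the covering bound is only ever invoked for functions satisfying $\max_s |V(s)| \leq H$ (this is the standing restriction in Lemma~\ref{lemma:martingale-ineq-weighted-linear}, where the cover is consumed); equivalently, one should restrict $\widehat{\mathcal{V}}_h$ to its members with values in $[0, H]$, which only shrinks the covering number. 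With that restriction stated, your reduction is airtight; this wrinkle is inherited from the source paper and does not change the verdict.
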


\section{Auxiliary Results}
\label{appendix:aux-results}

\begin{lemma} [Lemma G.1 From \citetseparate{he2023nearlyminimaxoptimalreinforcement}]
    \label{lem:variance-linear-mdp}
    For any stage $h \in [H]$ in a linear MDP and any bounded function $V : S \to [0, B]$, there always exists a vector $w \in \mathbb{R}^d$ such that for all state-action pairs $(s, a) \in S \times A$, we have
\[
[\mathbb{P}_h V](s, a) = w^\top \phi(s, a),
\]
where $\|w\|_2 \leq B\sqrt{d}$.
\end{lemma}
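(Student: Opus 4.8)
The plan is to read the vector $w$ directly off the linear parametrization of the transition kernel and then bound its norm using the boundedness of $V$. The key observation is that in a linear MDP the only dependence of $[\mathbb{P}_h V](s,a)$ on $(s,a)$ enters through the feature map $\boldsymbol{\phi}(s,a)$, so the ``$V$ part'' can be isolated into a single $(s,a)$-independent vector, which will be our $w$.

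First I would expand the expectation using the defining relation $\mathbb{P}_h(\cdot \mid s,a) = \langle \boldsymbol{\phi}(s,a), \boldsymbol{\mu}_h(\cdot)\rangle$ of a linear MDP:
\[
[\mathbb{P}_h V](s,a) = \int_{\mathcal{S}} V(s')\, \mathbb{P}_h(ds' \mid s,a) = \int_{\mathcal{S}} V(s')\, \langle \boldsymbol{\phi}(s,a), d\boldsymbol{\mu}_h(s')\rangle = \Big\langle \boldsymbol{\phi}(s,a), \int_{\mathcal{S}} V(s')\, d\boldsymbol{\mu}_h(s')\Big\rangle,
\]
where the last equality is just linearity of the inner product in its second argument together with the fact that $\boldsymbol{\phi}(s,a)$ is constant in the integration variable $s'$ and may therefore be pulled out of the integral. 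Setting
\[
w := \int_{\mathcal{S}} V(s')\, d\boldsymbol{\mu}_h(s') \in \mathbb{R}^d
\]
then yields $[\mathbb{P}_h V](s,a) = w^\top \boldsymbol{\phi}(s,a)$ for every $(s,a)$, which is the desired representation; note that $w$ is manifestly independent of $(s,a)$ precisely because all such dependence was absorbed into the $\boldsymbol{\phi}$ factor.

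It then remains only to control $\norm{w}_2$, and here I would invoke $0 \le V(s') \le B$ together with the structural assumption $\norm{\boldsymbol{\mu}_h(\mathcal{S})}_2 \le \sqrt{d}$ from the linear MDP definition to obtain
\[
\norm{w}_2 = \norm{\int_{\mathcal{S}} V(s')\, d\boldsymbol{\mu}_h(s')}_2 \le B \norm{\boldsymbol{\mu}_h(\mathcal{S})}_2 \le B\sqrt{d}.
\]
I do not anticipate a genuine obstacle, since the statement is standard (it is Lemma G.1 of \citet{he2023nearlyminimaxoptimalreinforcement}) and follows entirely from the definition. The only point deserving mild care is the last display: to pass the Euclidean norm through the vector-valued integral rigorously one tests against an arbitrary unit vector $v$, writing $v^\top w = \int_{\mathcal{S}} V(s')\, d(v^\top \boldsymbol{\mu}_h)(s')$ and bounding this by $B$ times the total variation of the scalar signed measure $v^\top \boldsymbol{\mu}_h$, which the normalization $\norm{\boldsymbol{\mu}_h(\mathcal{S})}_2 \le \sqrt{d}$ controls uniformly in $v$; taking the supremum over $v$ recovers the stated bound.
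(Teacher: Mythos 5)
Your proposal is correct and follows essentially the same route as the paper's proof: pull $\boldsymbol{\phi}(s,a)$ out of the integral defining $[\mathbb{P}_h V](s,a)$, set $w = \int V(s')\, d\boldsymbol{\mu}_h(s')$, and bound $\norm{w}_2$ by $\max_{s'} V(s') \cdot \sqrt{d}$ via the normalization $\norm{\boldsymbol{\mu}_h(\mathcal{S})}_2 \le \sqrt{d}$. Your closing remark on passing the norm through the vector-valued integral by duality is a touch more careful than the paper's one-line bound, but the argument is the same.
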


\begin{proof}[Proof of Lemma~\ref{lem:variance-linear-mdp}]
By assumption of the linear MDP setting, we have

\begin{align*}
    [\mathbb{P}_h V](s, a) =
\int \mathbb{P}_h(s' | s, a)V(s') \, ds' &= \int \phi(s, a)^\top V(s') \, d\theta_h(s') \\
&= \phi(s, a)^\top \int V(s') \, d\theta_h(s') \\
&= \phi(s, a)^\top w,
\end{align*}

where we set $w = \int V(s') \, d\theta_h(s')$. Additionally, the norm of $w$ is upper bounded by $\left\| \int V(s') \, d\theta_h(s') \right\| \leq \max_{s'} V(s') \cdot \sqrt{d} = B\sqrt{d}$.
\end{proof}

\begin{lemma}[Azuma-Hoeffding Inequality, \citetseparate{10.5555/1137817}]
Let $\{x_i\}_{i=1}^n$ be a martingale difference sequence with respect to a filtration $\{\mathcal{G}_i\}$ satisfying $|x_i| \leq M$ for some constant $M$, $x_i$ is $\mathcal{G}_{i+1}$-measurable, and $\mathbb{E}[x_i|\mathcal{G}_i] = 0$. Then, for any $0 < \delta < 1$, with probability at least $1 - \delta$, we have:
\[
\sum_{i=1}^n x_i \leq M \sqrt{2n \log(1/\delta)}.
\]
\end{lemma}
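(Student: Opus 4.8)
The plan is to prove this via the exponential-moment (Chernoff) method, the standard route to Azuma-type concentration. Write $S_n = \sum_{i=1}^n x_i$; I want to control the upper tail $\Pr(S_n \geq t)$ and then convert it into a high-probability statement. First I would fix $\lambda > 0$ and apply Markov's inequality to the exponential, giving $\Pr(S_n \geq t) \leq e^{-\lambda t}\, \mathbb{E}[e^{\lambda S_n}]$, so that the entire problem reduces to controlling the moment generating function of $S_n$.

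The key step is a conditional sub-Gaussian bound on each increment. Since $\mathbb{E}[x_i \mid \mathcal{G}_i] = 0$ and $|x_i| \leq M$, the variable $x_i$ conditioned on $\mathcal{G}_i$ is mean-zero and supported on $[-M, M]$, so Hoeffding's lemma yields $\mathbb{E}[e^{\lambda x_i} \mid \mathcal{G}_i] \leq e^{\lambda^2 M^2 / 2}$ (a range of $2M$ gives exponent $\lambda^2 (2M)^2/8 = \lambda^2 M^2/2$). I would then peel off the increments one at a time with the tower property: because $S_{n-1}$ is $\mathcal{G}_n$-measurable, $\mathbb{E}[e^{\lambda S_n}] = \mathbb{E}\big[e^{\lambda S_{n-1}}\, \mathbb{E}[e^{\lambda x_n}\mid \mathcal{G}_n]\big] \leq e^{\lambda^2 M^2/2}\,\mathbb{E}[e^{\lambda S_{n-1}}]$, and iterating down to the base case gives $\mathbb{E}[e^{\lambda S_n}] \leq e^{n\lambda^2 M^2/2}$.

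Combining the two steps gives $\Pr(S_n \geq t) \leq \exp\!\big(-\lambda t + n\lambda^2 M^2/2\big)$ for every $\lambda > 0$. Optimizing the exponent over $\lambda$ (the minimizer is $\lambda = t/(nM^2)$) produces the Gaussian tail $\Pr(S_n \geq t) \leq \exp\!\big(-t^2/(2nM^2)\big)$. Finally, setting the right-hand side equal to $\delta$ and solving for $t$ gives $t = M\sqrt{2n\log(1/\delta)}$, so that $S_n \leq M\sqrt{2n\log(1/\delta)}$ holds with probability at least $1 - \delta$, as claimed.

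The main obstacle is the conditional MGF bound: one must justify applying Hoeffding's lemma \emph{inside} the conditional expectation and verify that the martingale-difference and measurability hypotheses ($x_i$ being $\mathcal{G}_{i+1}$-measurable with $\mathbb{E}[x_i\mid\mathcal{G}_i]=0$) are exactly what make the tower-property peeling valid at each stage. Everything after that is the routine Chernoff optimization.
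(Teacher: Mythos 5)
Your proof is correct: the Chernoff--Markov reduction, the conditional Hoeffding lemma bound $\mathbb{E}[e^{\lambda x_i}\mid\mathcal{G}_i]\leq e^{\lambda^2 M^2/2}$ (using that $x_i$ is conditionally mean-zero and supported on $[-M,M]$), the tower-property peeling enabled by the $\mathcal{G}_{i+1}$-measurability of each increment, and the optimization $\lambda = t/(nM^2)$ yielding $\Pr(S_n \geq t) \leq \exp(-t^2/(2nM^2))$ are all valid, and inverting at $\delta$ gives exactly the stated bound. Note that the paper provides no proof of this lemma at all --- it is quoted as a known auxiliary result with a citation to the literature --- so there is nothing to compare against; your argument is the canonical proof of this classical inequality and fills that gap correctly.
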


\begin{lemma}[Lemma 11 in \citetseparate{10.5555/2986459.2986717}]
    Let $\{x_k\}_{k=1}^K$ be a sequence of vectors in $\mathbb{R}^d$, and let $\Sigma_0$ be a $d \times d$ positive definite matrix. Define $\Sigma_k = \Sigma_0 + \sum_{i=1}^k x_i x_i^\top$. Then, we have:
\[
\sum_{i=1}^k \min\{1, x_i^\top \Sigma_{i-1}^{-1} x_i\} \leq 2 \log\left(\frac{\det \Sigma_k}{\det \Sigma_0}\right).
\]
In addition, if $\|x_i\|_2 \leq L$ for all $i \in [K]$, then:
\[
\sum_{i=1}^k \min\{1, x_i^\top \Sigma_{i-1}^{-1} x_i\} 
\leq 2 \log\left(\frac{\det \Sigma_k}{\det \Sigma_0}\right) 
\leq 2 \left( d \log\left(\frac{\mathrm{trace}(\Sigma_0) + kL^2}{d}\right) - \log \det \Sigma_0 \right).
\]
\end{lemma}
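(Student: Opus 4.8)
The plan is to prove both inequalities by exploiting the multiplicative structure of the determinant under rank-one updates. First I would invoke the matrix determinant lemma: since $\Sigma_i = \Sigma_{i-1} + x_i x_i^\top$ and each $\Sigma_{i-1}$ is positive definite (hence invertible), we have the exact identity
\[
\det \Sigma_i = \det(\Sigma_{i-1} + x_i x_i^\top) = \det(\Sigma_{i-1}) \left(1 + x_i^\top \Sigma_{i-1}^{-1} x_i\right).
\]
Rearranging gives $\det \Sigma_i / \det \Sigma_{i-1} = 1 + x_i^\top \Sigma_{i-1}^{-1} x_i$, and taking logarithms and summing over $i = 1, \dots, k$ telescopes to
\[
\log\left(\frac{\det \Sigma_k}{\det \Sigma_0}\right) = \sum_{i=1}^{k} \log\left(1 + x_i^\top \Sigma_{i-1}^{-1} x_i\right).
\]

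The next step is to relate each summand $\min\{1, x_i^\top \Sigma_{i-1}^{-1} x_i\}$ to $\log(1 + x_i^\top \Sigma_{i-1}^{-1} x_i)$ via the elementary inequality $\min\{1, u\} \leq 2\log(1 + u)$, valid for all $u \geq 0$. I would verify this in the two regimes: for $u \in [0, 1]$ the claim is $u \leq 2\log(1+u)$, which holds at $u = 0$ with equality and follows since the right side has derivative $2/(1+u) \geq 1$ on $[0,1]$ while the left side has derivative $1$; for $u \geq 1$ the claim reduces to $1 \leq 2\log(1+u)$, which holds because $2\log 2 > 1$ and the right side is increasing. Since $\Sigma_{i-1} \succ 0$ guarantees $u_i := x_i^\top \Sigma_{i-1}^{-1} x_i \geq 0$, applying this inequality termwise and combining with the telescoped identity yields the first bound
\[
\sum_{i=1}^{k} \min\{1, x_i^\top \Sigma_{i-1}^{-1} x_i\} \leq 2\sum_{i=1}^{k} \log(1 + u_i) = 2\log\left(\frac{\det \Sigma_k}{\det \Sigma_0}\right).
\]

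For the refinement under $\|x_i\|_2 \leq L$, I would bound $\det \Sigma_k$ through its trace. Linearity of the trace gives $\mathrm{trace}(\Sigma_k) = \mathrm{trace}(\Sigma_0) + \sum_{i=1}^{k} \|x_i\|_2^2 \leq \mathrm{trace}(\Sigma_0) + kL^2$. Because $\Sigma_k$ is positive definite with eigenvalues summing to its trace, the AM--GM inequality on the eigenvalues gives $\det \Sigma_k \leq (\mathrm{trace}(\Sigma_k)/d)^d$, so that
\[
\log \det \Sigma_k \leq d \log\left(\frac{\mathrm{trace}(\Sigma_0) + kL^2}{d}\right).
\]
Subtracting $\log \det \Sigma_0$ and multiplying by $2$ chains onto the first inequality to deliver the second bound. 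None of these steps presents a genuine obstacle: the proof is essentially a careful bookkeeping exercise, and the only point demanding attention is the verification of the scalar inequality $\min\{1,u\} \leq 2\log(1+u)$ across both regimes, since that is where the constant $2$ and the $\min$ truncation enter.
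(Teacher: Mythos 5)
Your proof is correct in every step: the rank-one determinant identity, the telescoping of logarithms, the scalar bound $\min\{1,u\} \leq 2\log(1+u)$ (verified correctly in both regimes $u \in [0,1]$ and $u \geq 1$), and the trace/AM--GM refinement under $\|x_i\|_2 \leq L$ all check out. The paper states this lemma as an imported auxiliary result without proof, and your argument is precisely the standard proof from the cited reference; indeed, your second step mirrors the AM--GM determinant--trace argument the paper itself carries out in Lemma~\ref{lemma:determinant-trace-inequality}, so there is nothing to add.
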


\begin{lemma}[Lemma 12 in \citetseparate{10.5555/2986459.2986717}]
    Suppose $A, B \in \mathbb{R}^{d \times d}$ are two positive definite matrices satisfying $A \preceq B$. Then, for any $x \in \mathbb{R}^d$:
\[
\|x\|_A \leq \|x\|_B \cdot \sqrt{\frac{\det(A)}{\det(B)}}.
\]
\end{lemma}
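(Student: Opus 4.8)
The plan is to reduce to the equivalent squared statement $\frac{x^\top A x}{x^\top B x} \le \frac{\det(A)}{\det(B)}$ for every $x \neq 0$ (the case $x=0$ being trivial) and then take square roots. Before doing so, I would flag that for this conclusion to hold the hypothesis must be read as $A \succeq B$ rather than $A \preceq B$: the displayed inequality is exactly Lemma 12 of Abbasi-Yadkori et al., stated there in the form $A = B + C$ with $C \succeq 0$ (so $A \succeq B$), and a one-line diagonal counterexample (e.g.\ $A = I \preceq 2I = B$ in $\mathbb{R}^2$, where $\det(A)/\det(B)=1/4$ while $\|x\|_A = \|x\|_B/\sqrt 2$) shows the bound fails when $A$ is the \emph{smaller} matrix. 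I therefore work under $A \succeq B \succ 0$, which is the orientation consistent with the stated conclusion.

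First I would symmetrize the generalized Rayleigh quotient. Since $B \succ 0$, its symmetric square root $B^{1/2}$ is invertible, so the substitution $y = B^{1/2}x$ gives $\frac{x^\top A x}{x^\top B x} = \frac{y^\top M y}{y^\top y}$, where $M := B^{-1/2} A B^{-1/2}$ is symmetric positive definite. Maximizing the Rayleigh quotient of $M$ over $y \neq 0$ yields $\sup_{x \neq 0}\frac{x^\top A x}{x^\top B x} = \lambda_{\max}(M)$, so it suffices to bound $\lambda_{\max}(M)$ by $\det(A)/\det(B)$.

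Next I would record the two facts that make the bound work. By multiplicativity of the determinant, $\det(M) = \det(B^{-1/2})\,\det(A)\,\det(B^{-1/2}) = \det(A)/\det(B)$, so the target collapses to the purely spectral inequality $\lambda_{\max}(M) \le \det(M)$. The hypothesis $A \succeq B$, after conjugating by $B^{-1/2}$, becomes $M \succeq B^{-1/2}BB^{-1/2} = I$; hence every eigenvalue $\mu_1 \ge \cdots \ge \mu_d$ of $M$ satisfies $\mu_i \ge 1$.

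The crux — and the step I expect to carry essentially all the content — is the eigenvalue inequality $\lambda_{\max}(M) \le \det(M)$. Writing $\det(M) = \mu_1\prod_{i\ge 2}\mu_i = \lambda_{\max}(M)\prod_{i\ge 2}\mu_i$ and using $\mu_i \ge 1$ for $i \ge 2$, the trailing product is at least $1$, giving $\det(M) \ge \lambda_{\max}(M)$. Chaining the pieces, $\frac{x^\top A x}{x^\top B x} \le \lambda_{\max}(M) \le \det(M) = \det(A)/\det(B)$, and taking square roots delivers $\|x\|_A \le \|x\|_B\sqrt{\det(A)/\det(B)}$. The entire difficulty is concentrated in the positivity $M \succeq I$: without it the factor $\prod_{i\ge 2}\mu_i$ can fall below $1$ and the inequality reverses, which is precisely the reason the correct hypothesis is $A \succeq B$ and not $A \preceq B$.
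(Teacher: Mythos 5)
Your proof is correct, and your diagnosis of the hypothesis is the key point to confirm: as printed, the lemma's condition $A \preceq B$ is a transcription error. Lemma 12 of the cited source assumes $A = B + C$ with $C \succeq 0$, i.e.\ $A \succeq B$, and your counterexample ($A = I \preceq 2I = B$ in $\mathbb{R}^2$, where the claimed bound would force $\|x\|_2 \leq \|x\|_2/\sqrt{2}$) shows the statement is false in the printed orientation; indeed, every use of this lemma in such regret analyses compares a norm under a larger Gram matrix to one under a smaller one, consistent with $A \succeq B$. The paper itself supplies no proof here (the lemma is imported by citation), so the only meaningful comparison is with the original source, whose argument is essentially yours: pass to $M = B^{-1/2} A B^{-1/2}$, observe $M \succeq I$ and $\det(M) = \det(A)/\det(B)$, and bound $\lambda_{\max}(M) \leq \det(M)$ because every eigenvalue of $M$ is at least $1$. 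Each step of your chain checks out, including the reduction via the substitution $y = B^{1/2}x$ and the final square root, so nothing further is needed.
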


\begin{lemma}[Theorem 1 in \citetseparate{10.5555/2986459.2986717}]
\label{lemma:self-normalized-bound}
Let $\{ \mathcal{F}_t \}_{t=0}^\infty$ be a filtration. Let $\{ \eta_t \}_{t=1}^\infty$ be a real-valued stochastic process such that $\eta_t$ is $\mathcal{F}_t$-measurable and $\eta_t$ is conditionally $R$-sub-Gaussian for some $R \geq 0$, i.e.,
\[
\forall \lambda \in \mathbb{R}, \quad \mathbb{E} \left[ e^{\lambda \eta_t} \mid \mathcal{F}_{t-1} \right] \leq \exp\left( \frac{\lambda^2 R^2}{2} \right).
\]
Let $\{ x_t \}_{t=1}^\infty$ be an $\mathbb{R}^d$-valued stochastic process such that $x_t$ is $\mathcal{F}_{t-1}$-measurable. Assume that $Z$ is a $d \times d$ positive definite matrix. For any $k \geq 0$, define
\[
Z_k = Z + \sum_{s=1}^t X_s X_s^\top
\]
Then, for any $\delta > 0$, with probability at least $1 - \delta$, for all $t \geq 0$,
\[
\| \sum_{i=1}^k x_{i} \eta_i \|_{Z_k^{-1}}^2 \leq 2 R^2 \log\left( \frac{\det(Z_k)^{1/2} \det(Z)^{-1/2}}{\delta} \right).
\]
\end{lemma}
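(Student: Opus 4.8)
The plan is to prove this classical self-normalized tail bound by the \emph{method of mixtures} (the pseudo-maximization / Laplace argument underlying the cited result), which packages a whole family of scalar exponential supermartingales into a single nonnegative supermartingale whose size directly controls $\norm{\sum_{i=1}^k x_i\eta_i}_{Z_k^{-1}}^2$. Throughout I would write $S_k = \sum_{i=1}^k x_i\eta_i$ and $V_k = \sum_{i=1}^k x_i x_i^\top$, so that $Z_k = Z + V_k$, and aim to show $\tfrac{1}{2R^2}\norm{S_k}_{Z_k^{-1}}^2$ equals (up to a determinant factor) the log of a mixed supermartingale.

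First I would fix a deterministic $\lambda\in\mathbb{R}^d$ and define
\[
M_k^\lambda = \exp\left(\langle\lambda, S_k\rangle - \tfrac{R^2}{2}\lambda^\top V_k\lambda\right), \qquad M_0^\lambda = 1 .
\]
Factoring $M_k^\lambda = M_{k-1}^\lambda\exp\big(\langle\lambda,x_k\rangle\eta_k - \tfrac{R^2}{2}\langle\lambda,x_k\rangle^2\big)$ and using that $x_k$ is $\mathcal{F}_{k-1}$-measurable while $\eta_k$ is conditionally $R$-sub-Gaussian, the sub-Gaussian hypothesis applied with scalar multiplier $\langle\lambda,x_k\rangle$ gives $\mathbb{E}[\exp(\langle\lambda,x_k\rangle\eta_k)\mid\mathcal{F}_{k-1}]\le\exp(\tfrac{R^2}{2}\langle\lambda,x_k\rangle^2)$, hence $\mathbb{E}[M_k^\lambda\mid\mathcal{F}_{k-1}]\le M_{k-1}^\lambda$. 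Thus $\{M_k^\lambda\}$ is a nonnegative supermartingale with $\mathbb{E}[M_k^\lambda]\le 1$ for every fixed $\lambda$.

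Next comes the mixture step: I would place a centered Gaussian prior $\lambda\sim\mathcal{N}(0,(R^2 Z)^{-1})$ with density $h$ and set $\bar M_k = \int M_k^\lambda\,dh(\lambda)$. By Tonelli the integrated process is again a nonnegative supermartingale with $\mathbb{E}[\bar M_k]\le 1$, and since the exponent is quadratic in $\lambda$ the integral is an explicit Gaussian integral: combining $M_k^\lambda$ with $h(\lambda)\propto\exp(-\tfrac{R^2}{2}\lambda^\top Z\lambda)$ produces $\exp(\langle\lambda,S_k\rangle - \tfrac{R^2}{2}\lambda^\top Z_k\lambda)$, and completing the square in $\lambda$ against $Z_k$ yields
\[
\bar M_k = \left(\frac{\det Z}{\det Z_k}\right)^{1/2}\exp\left(\frac{1}{2R^2}\norm{S_k}_{Z_k^{-1}}^2\right).
\]
I would then invoke a maximal inequality for nonnegative supermartingales (Ville's inequality), giving $\mathbb{P}(\exists k\ge 0:\bar M_k\ge 1/\delta)\le\delta$; on the complementary event, taking logarithms of $\bar M_k<1/\delta$ and multiplying by $2R^2$ rearranges exactly into $\norm{S_k}_{Z_k^{-1}}^2\le 2R^2\log\big((\det Z_k)^{1/2}(\det Z)^{-1/2}/\delta\big)$, which is the claim.

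The hard part will \emph{not} be the Gaussian integral, which is routine algebra once the square is completed. I expect the two delicate points to be: (i) justifying that the \emph{mixed} process $\bar M_k$ genuinely inherits the supermartingale property, which requires a Tonelli interchange of the conditional expectation and the Gaussian integral together with the attendant measurability and integrability checks; and (ii) upgrading the pointwise bound to the uniform-in-$k$ (``for all $k\ge 0$'') statement. The latter is precisely where a naive union bound fails and the maximal inequality is essential, and a clean treatment uses an optional-stopping or supermartingale-convergence argument to control $\bar M_\infty$ so that $\mathbb{E}[\bar M_\tau]\le 1$ holds for every stopping time $\tau$.
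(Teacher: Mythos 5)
The paper does not prove this lemma at all: it is imported verbatim (as Theorem 1 of the cited Abbasi-Yadkori et al. reference) into the auxiliary-results appendix and used as a black box inside the proof of Lemma~\ref{lemma:abbassi-confidence-theorem-2}. Your method-of-mixtures argument is correct and is essentially the original proof from that reference --- the fixed-$\lambda$ exponential supermartingale, the Gaussian prior $\mathcal{N}\left(0,(R^{2}Z)^{-1}\right)$ whose mixture evaluates to $\left(\det Z/\det Z_{k}\right)^{1/2}\exp\left(\tfrac{1}{2R^{2}}\norm{S_{k}}_{Z_{k}^{-1}}^{2}\right)$, and a stopping-time/Ville maximal inequality to get uniformity in $k$ --- and you correctly flag the two genuinely delicate points (the Tonelli interchange for the mixed supermartingale and the uniform-in-$k$ upgrade), so there is nothing to correct.
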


\begin{lemma}[Confidence Ellipsoid, Theorem 2 in \citetseparate{10.5555/2986459.2986717}]
\label{lemma:abbassi-confidence-theorem-2}
Let $\{\mathcal{G}_k\}_{k \geq 1}$ be a filtration, and $\{x_k, \eta_k\}_{k \geq 1}$ be a stochastic process such that $x_k \in \mathbb{R}^d$ is $\mathcal{G}_k$-measurable and $\eta_k \in \mathbb{R}$ is $\mathcal{G}_{k+1}$-measurable. Let $L, \sigma, \Sigma, \varepsilon > 0$, and $\mu^* \in \mathbb{R}^d$. For $k \geq 1$, let $y_k = \langle \mu^*, x_k \rangle + \eta_k$, and suppose that $\eta_k, x_k$ satisfy:
\[
\mathbb{E}[\eta_k | \mathcal{G}_k] = 0, \quad |\eta_k| \leq R, \quad \|x_k\|_2 \leq L.
\]
Define $Z_k = 2\Tilde{\lambda}_{\Lambda} I + \sum_{i=1}^k x_i x_i^\top + K_{1}$, $b_k = \sum_{i=1}^k y_i x_i$, $\mu_k = Z_k^{-1} b_k$, and:
\[
\beta_k = R \sqrt{d \log\left(\frac{1 + kL^2/\Tilde{\lambda}_{\Lambda}}{\delta}\right)}.
\]
Then, for any $0 < \delta < 1$, with probability at least $1 - \delta$, we have:
\[
\forall k \geq 1, \quad \left\|\sum_{i=1}^k x_i \eta_i\right\|_{Z_k^{-1}} \leq \beta_k, \quad \|\mu_k - \mu^*\|_{Z_k} \leq \beta_k + \sqrt{\lambda}\|\mu^*\|_2.
\]
\end{lemma}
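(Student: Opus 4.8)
The plan is to reduce the statement to the self-normalized martingale bound of Lemma~\ref{lemma:self-normalized-bound} (Theorem~1 of \citet{10.5555/2986459.2986717}) together with a determinant--trace estimate, and then to turn the resulting control on $\norm{\sum_{i=1}^{k} x_i \eta_i}_{Z_k^{-1}}$ into a confidence ellipsoid for $\mu_k - \mu^*$. The only structural difference from the classical argument is that the regularizer here is the perturbed matrix $Z = 2\Tilde{\lambda}_{\Lambda} I + K_{1}$ rather than a clean multiple of the identity, so throughout I would carry the GOE utility estimate $\norm{K_{1}}_{2} \leq \Tilde{\lambda}_{\Lambda}$ (Lemma~\ref{lemma:private-utility-analysis}), which forces $\Tilde{\lambda}_{\Lambda} I \preceq Z \preceq 3\Tilde{\lambda}_{\Lambda} I$ and in particular keeps $Z$ positive definite.

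First I would verify the hypotheses of Lemma~\ref{lemma:self-normalized-bound}. Since $\mathbb{E}[\eta_k \mid \mathcal{G}_k] = 0$ and $|\eta_k| \leq R$, Hoeffding's lemma shows each $\eta_k$ is conditionally $R$-sub-Gaussian, so the self-normalized bound applies with base matrix $Z$, yielding with probability at least $1 - \delta$, for all $k$,
\[
\norm{\sum_{i=1}^{k} x_i \eta_i}_{Z_k^{-1}}^{2} \leq 2R^{2} \log\left( \frac{\det(Z_k)^{1/2}\, \det(Z)^{-1/2}}{\delta} \right).
\]
Next I would bound the determinant ratio by the trace--determinant (AM--GM) inequality, exactly as in Lemma~11 of \citet{10.5555/2986459.2986717}: since $\mathrm{trace}(Z_k) \leq \mathrm{trace}(Z) + kL^{2} \leq 3d\Tilde{\lambda}_{\Lambda} + kL^{2}$ and $\det(Z) \geq \Tilde{\lambda}_{\Lambda}^{d}$, one gets $\det(Z_k)/\det(Z) \leq (1 + kL^{2}/\Tilde{\lambda}_{\Lambda})^{d}$ up to an absolute constant inside the logarithm. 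Substituting this into the display and taking square roots reproduces $\beta_k = R\sqrt{d \log\big((1 + kL^{2}/\Tilde{\lambda}_{\Lambda})/\delta\big)}$, which is the first claimed inequality.

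For the ellipsoid claim I would substitute $y_i = \langle \mu^*, x_i\rangle + \eta_i$ into $b_k$ to obtain $b_k = (Z_k - Z)\mu^* + \sum_{i=1}^{k} \eta_i x_i$, hence
\[
\mu_k - \mu^* = Z_k^{-1}\sum_{i=1}^{k} \eta_i x_i - Z_k^{-1} Z \mu^*.
\]
Applying the triangle inequality in the $\norm{\cdot}_{Z_k}$ norm, the first term equals $\norm{\sum_{i=1}^{k}\eta_i x_i}_{Z_k^{-1}} \leq \beta_k$ by the previous step, and for the second term I would use $Z \preceq Z_k$ (so $Z_k^{-1} \preceq Z^{-1}$ and, by congruence, $Z Z_k^{-1} Z \preceq Z$) to get $\norm{Z_k^{-1} Z \mu^*}_{Z_k}^{2} = (\mu^*)^\top Z Z_k^{-1} Z \mu^* \leq (\mu^*)^\top Z \mu^* \leq \norm{Z}_{2}\norm{\mu^*}_{2}^{2} \leq 3\Tilde{\lambda}_{\Lambda}\norm{\mu^*}_{2}^{2}$, matching the stated $\sqrt{\lambda}\norm{\mu^*}_2$ once $\lambda \asymp \Tilde{\lambda}_{\Lambda}$. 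I do not expect any single step to be a genuine obstacle; the only care needed is the bookkeeping of the perturbed regularizer $Z = 2\Tilde{\lambda}_{\Lambda} I + K_{1}$ through both the determinant lower bound $\det(Z) \geq \Tilde{\lambda}_{\Lambda}^{d}$ and the operator-norm bound $\norm{Z}_{2} \leq 3\Tilde{\lambda}_{\Lambda}$, both of which follow from the single utility estimate $\norm{K_{1}}_{2} \leq \Tilde{\lambda}_{\Lambda}$.
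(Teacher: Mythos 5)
Your proposal is correct and takes the same overall route as the paper: invoke the self-normalized martingale inequality (Lemma~\ref{lemma:self-normalized-bound}), control $\det(Z_k)$ by the trace/AM--GM argument under the utility event $\norm{K_{1}}_{2} \leq \Tilde{\lambda}_{\Lambda}$, and read off $\beta_k$. However, your write-up is more complete than the paper's own proof in two concrete respects. First, you feed Lemma~\ref{lemma:self-normalized-bound} the correct base matrix $Z = 2\Tilde{\lambda}_{\Lambda} I + K_{1}$, so that $Z_k = Z + \sum_{i=1}^{k} x_i x_i^\top$ as that lemma requires, and you lower-bound $\det(Z) \geq \Tilde{\lambda}_{\Lambda}^{d}$; the paper instead declares ``$Z = 2\Tilde{\lambda}_{\Lambda} I$'' and uses $\det(Z) = (2\Tilde{\lambda}_{\Lambda})^{d}$, which is inconsistent with its own definition $Z_k = 2\Tilde{\lambda}_{\Lambda} I + \sum_i x_i x_i^\top + K_{1}$ (the $K_{1}$ term goes unaccounted for). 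This is a constants-only slip, but your bookkeeping is the right one. Second, the paper proves only the first inequality $\norm{\sum_i x_i \eta_i}_{Z_k^{-1}} \leq \beta_k$ and stops (``we get the claim simply by taking square roots''); the ellipsoid claim $\norm{\mu_k - \mu^*}_{Z_k} \leq \beta_k + \sqrt{\lambda}\norm{\mu^*}_2$ is never derived. Your decomposition $\mu_k - \mu^* = Z_k^{-1}\sum_i \eta_i x_i - Z_k^{-1} Z \mu^*$, combined with $Z Z_k^{-1} Z \preceq Z$ (from $Z \preceq Z_k$ and congruence) and $\norm{Z}_2 \leq 3\Tilde{\lambda}_{\Lambda}$, supplies exactly that missing step, with the constant $\sqrt{3\Tilde{\lambda}_{\Lambda}}$ in place of the statement's undefined $\sqrt{\lambda}$ --- a discrepancy inherited from the statement itself (its $\lambda$ can only sensibly mean $\Theta(\Tilde{\lambda}_{\Lambda})$), not introduced by you. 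Your use of Hoeffding's lemma to convert boundedness plus zero conditional mean into $R$-sub-Gaussianity is likewise a hypothesis check the paper leaves implicit.
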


\begin{proof}[Proof of Lemma~\ref{lemma:abbassi-confidence-theorem-2}]
We will prove the following determinant-trace inequality. The result will then hold by applying Lemma~\ref{lemma:self-normalized-bound}
\begin{lemma}[Determinant-Trace Inequality]
\label{lemma:determinant-trace-inequality}
Suppose $x_1, x_2, \dots, x_K \in \mathbb{R}^d$ and for any $1 \leq k \leq K$, $\|x_k\|_2 \leq L$. Let $Z_k = 2\Tilde{\lambda}_{\Lambda} I + \sum_{k=1}^K x_k x_k^\top + K_{1}$ where $\norm{K_{1}}_{2} \leq \Tilde{\lambda}_{\Lambda}$. Then,
\[
\det(Z_k) \leq (3\Tilde{\lambda}_{\Lambda} + kL^2 / d)^d.
\]
\end{lemma}

\begin{proof}[Proof of Lemma~\ref{lemma:determinant-trace-inequality}]
Let $\alpha_{1}, \alpha_{2}, \dots, \alpha_{d}$ denote the eigenvalues of $Z_{k}$. Recall that from the utility analysis (Lemma ~\ref{lemma:private-utility-analysis}), by construction, $Z_k$ must be positive-definite. Then, notice that $\mathrm{det} \left( Z_{k} \right) = \prod_{i=1}^{d} \alpha_{k}$ and $\mathrm{trace}\left( Z_{k} \right) = \sum_{i=1}^{d} \alpha_{k}$. By the AM-GM inequality
\[
    \sqrt[d]{\alpha_{1}\dots\alpha_{d}} \leq \frac{1}{d} \sum_{i=1}^{d} \alpha_{i}
\]

Thus, we have that $\mathrm{det} \left( Z_{k} \right) \leq \left( \mathrm{trace}\left( Z_{k} \right) / d \right)^{d}$. Furthermore, notice that 

\begin{align*}
    \mathrm{trace}\left( Z_{k} \right) &= \mathrm{trace}\left( 2\Tilde{\lambda}_{\Lambda} I \right) + \mathrm{trace}\left( \sum_{k=1}^K x_s x_s^\top \right) + \mathrm{trace}\left( K_{1} \right) \\
    &\leq 3d \Tilde{\lambda}_{\Lambda} + KL^{2}
\end{align*}
where the inequality holds from the assumption that $\|x_k\|_2 \leq L$ and $\norm{K_{1}}_{2} \leq \Tilde{\lambda}_{\Lambda}$ from the utility analysis. Thus, putting these together, we get
the claim
\end{proof}

We now use the above result. From Lemma~\ref{lemma:self-normalized-bound}, we have that 
\[
\| \sum_{i=1}^k x_{i} \eta_i \|_{Z_k^{-1}}^2 \leq 2 R^2 \log\left( \frac{\det(Z_k)^{1/2} \det(Z)^{-1/2}}{\delta} \right).
\]

In our case, we have $Z = 2\Tilde{\lambda}_{\Lambda}I$. Utilizing our determinant upper bound and the fact that $\mathrm{det} \left( Z \right) = \left( 2\Tilde{\lambda}_{\Lambda} \right)^{d}$, we have 

\begin{align*}
    \log\left( \frac{\det(Z_k)^{1/2}}{\det(Z)^{1/2}} \right) &\leq \log\left( \frac{\left( 3\Tilde{\lambda}_{\Lambda} + kL^2 / d \right)^{d/2}}{\left( 2\Tilde{\lambda}_{\Lambda} \right)^{d/2}} \right) \\
    &\leq \frac{d}{2} \log\left( 1 + kL^{2}/\Tilde{\lambda}_{\Lambda} \right)
\end{align*}
where the first inequality comes from Lemma~\ref{lemma:determinant-trace-inequality} and the last inequality holds just by upper bounding the first constant term in the logarithm. Thus, we get the claim simply by taking square roots.
\end{proof}

\begin{lemma}[Lemma 4.4 in \citetseparate{zhou2022computationallyefficienthorizonfreereinforcement}]
    \label{aux:zhou-and-gu}
    Let $\{\sigma_k, \hat{\beta}_k\}_{k \geq 1}$ be a sequence of non-negative numbers, $\alpha, \gamma > 0$, $\{a_k\}_{k \geq 1} \subset \mathbb{R}^d$, and $\|a_k\|_2 \leq A$. Let $\{\bar{\sigma}_k\}_{k \geq 1}$ and $\{\hat{\Sigma}_k\}_{k \geq 1}$ be recursively defined as follows:
\[
\hat{\Sigma}_1 = 2 \Tilde{\lambda}_{\Lambda} I, \quad \forall k \geq 1, \, \bar{\sigma}_k = \max\{\sigma_k, \alpha, \gamma \|a_k\|_{\hat{\Sigma}_k^{-1}}^{1/2}\}, \quad \hat{\Sigma}_{k+1} = \hat{\Sigma}_k + a_k a_k^\top / \bar{\sigma}_k^2
\]
Let $\iota = \log\left(1 + \frac{KA^2}{d\Tilde{\lambda}_{\Lambda}\alpha^2}\right)$. Then, we have:
\[
\sum_{k=1}^K \min\{1, \|a_k\|_{\hat{\Sigma}_k^{-1}}\} \leq 2d\iota + 2\gamma^2d\iota + 2\sqrt{d\iota} \sqrt{\sum_{k=1}^K (\sigma_k^2 + \alpha^2)}.
\]
\end{lemma}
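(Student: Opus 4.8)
The plan is to reduce the self-weighted recursion to the standard (unweighted) elliptical-potential setting and then isolate the contribution of the exploration-bonus floor built into $\bar{\sigma}_k$. First I would introduce the normalized vectors $x_k = a_k / \bar{\sigma}_k$, so the recursion collapses to the familiar form $\hat{\Sigma}_{k+1} = \hat{\Sigma}_k + x_k x_k^\top$ with $\hat{\Sigma}_1 = 2\Tilde{\lambda}_{\Lambda} I$, and I would record the identity $\norm{a_k}_{\hat{\Sigma}_k^{-1}} = \bar{\sigma}_k \norm{x_k}_{\hat{\Sigma}_k^{-1}}$. Since $\bar{\sigma}_k \geq \alpha$ forces $\norm{x_k}_2 \leq A/\alpha$, applying the elliptical-potential bound (Lemma 11 of Abbasi-Yadkori et al., stated above) to $\{x_k\}$ yields $\sum_{k=1}^K \min\{1, \norm{x_k}_{\hat{\Sigma}_k^{-1}}^2\} \leq 2\log(\det \hat{\Sigma}_{K+1}/\det \hat{\Sigma}_1) \leq 2d\iota$, which is the workhorse estimate driving every subsequent step.

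Next I would partition $[K]$ by which term attains the maximum in $\bar{\sigma}_k = \max\{\sigma_k, \alpha, \gamma\norm{a_k}_{\hat{\Sigma}_k^{-1}}^{1/2}\}$: let $\mathcal{K}_1$ collect the episodes where the bonus term $\gamma\norm{a_k}_{\hat{\Sigma}_k^{-1}}^{1/2}$ dominates and $\mathcal{K}_2$ the complement. On $\mathcal{K}_1$, substituting $\bar{\sigma}_k^2 = \gamma^2 \norm{a_k}_{\hat{\Sigma}_k^{-1}}$ into the identity gives the clean algebraic cancellation $\norm{a_k}_{\hat{\Sigma}_k^{-1}} = \gamma^2 \norm{x_k}_{\hat{\Sigma}_k^{-1}}^2$, whence $\min\{1, \norm{a_k}_{\hat{\Sigma}_k^{-1}}\} \leq \gamma^2 \min\{1, \norm{x_k}_{\hat{\Sigma}_k^{-1}}^2\}$, and summing against the workhorse estimate produces the $2\gamma^2 d\iota$ term.

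On $\mathcal{K}_2$ we instead have the exogenous bound $\bar{\sigma}_k^2 \leq \sigma_k^2 + \alpha^2$. Here I would invoke the elementary inequality $\min\{1, ab\} \leq \min\{1, a^2\} + b\,\min\{1, a\}$ (valid for $a,b \geq 0$) with $a = \norm{x_k}_{\hat{\Sigma}_k^{-1}}$ and $b = \bar{\sigma}_k$ to split each summand: the first piece again sums to at most $2d\iota$ by the potential bound, while the second is controlled by Cauchy–Schwarz, $\sum_{k\in\mathcal{K}_2}\bar{\sigma}_k \min\{1,\norm{x_k}_{\hat{\Sigma}_k^{-1}}\} \leq \sqrt{\sum_k(\sigma_k^2+\alpha^2)}\,\sqrt{\sum_k \min\{1,\norm{x_k}_{\hat{\Sigma}_k^{-1}}^2\}} \leq \sqrt{2d\iota}\,\sqrt{\sum_k(\sigma_k^2+\alpha^2)}$. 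Adding the three contributions and using $\sqrt{2}\leq 2$ recovers the stated bound $2d\iota + 2\gamma^2 d\iota + 2\sqrt{d\iota}\,\sqrt{\sum_k(\sigma_k^2+\alpha^2)}$.

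The main obstacle I anticipate is the self-referential definition of $\bar{\sigma}_k$: because the weight normalizing $a_k$ itself depends on $\norm{a_k}_{\hat{\Sigma}_k^{-1}}$, one cannot treat $x_k$ as an exogenous sequence, and the case split is precisely what decouples this dependence — in the bonus-dominated regime the self-reference cancels exactly, and in the complementary regime $\bar{\sigma}_k$ is pinned to the exogenous quantities $\sigma_k,\alpha$. A secondary bookkeeping point is verifying the elementary splitting inequality and checking that the constants (the $\sqrt{2}$ from the potential bound, together with the mild requirements $\gamma \geq 1$ and $\alpha \geq 1$ implicit in the $\min$ manipulations, both satisfied in our instantiation) are absorbed cleanly into the final constants.
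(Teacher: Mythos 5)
Your proposal is correct in substance, and it supplies an argument the paper itself omits: the paper's ``proof'' of this lemma is a single sentence deferring to Lemma 4.4 of Zhou and Gu, remarking only that the initialization $\hat{\Sigma}_1 = 2\Tilde{\lambda}_{\Lambda} I$ changes the $\iota$ term. Your reconstruction follows exactly the structure of that cited proof: normalize $x_k = a_k/\bar{\sigma}_k$ so the recursion becomes the standard one, split $[K]$ according to which term of the $\max$ defines $\bar{\sigma}_k$, exploit the exact cancellation $\|a_k\|_{\hat{\Sigma}_k^{-1}} = \gamma^2\|x_k\|_{\hat{\Sigma}_k^{-1}}^2$ on the bonus-dominated set, and handle the complement with your elementary inequality $\min\{1,ab\} \leq \min\{1,a^2\} + b\min\{1,a\}$ (which is indeed valid for all $a,b \geq 0$) followed by Cauchy--Schwarz. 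Your application of the elliptical potential lemma with $\Sigma_0 = 2\Tilde{\lambda}_{\Lambda} I$ and $\|x_k\|_2 \leq A/\alpha$ is precisely what produces the modified $\iota = \log\left(1 + KA^2/(d\Tilde{\lambda}_{\Lambda}\alpha^2)\right)$, which is the only point the paper's one-line proof flags.

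One concrete caveat. On $\mathcal{K}_1$ you invoke $\min\{1,\gamma^2 t\} \leq \gamma^2\min\{1,t\}$, which fails for $\gamma < 1$ (take $\gamma^2 = 0.1$, $t = 5$: the left side is $0.5$, the right side $0.1$), while the lemma as stated allows any $\gamma > 0$. You flag $\gamma \geq 1$ as an implicit assumption, but the side condition can be removed rather than assumed: use $\min\{1,\gamma^2 t\} \leq \max\{1,\gamma^2\}\min\{1,t\} \leq (1+\gamma^2)\min\{1,t\}$, and bound the $\mathcal{K}_1$ potential sum together with the first piece of the $\mathcal{K}_2$ potential sum by a single application of the potential bound over all of $[K]$; this yields $(1+\gamma^2)\cdot 2d\iota = 2d\iota + 2\gamma^2 d\iota$, exactly the stated constants, for every $\gamma > 0$. (In the paper's instantiation $\gamma = 2d^3H^2 \geq 1$ and $\alpha = H$, so your version does suffice there.) Also note that $\alpha \geq 1$ is not actually needed anywhere: $\alpha$ enters only through $\|x_k\|_2 \leq A/\alpha$ and through the definition of $\iota$, both of which are harmless for arbitrary $\alpha > 0$.
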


\begin{proof}[Proof of Lemma~\ref{aux:zhou-and-gu}]
We refer readers to Lemma 4.4 in \citetseparate{zhou2022computationallyefficienthorizonfreereinforcement} for further details. Our proofs are identical except for our usage of Lemma~\ref{lemma:abbassi-confidence-theorem-2} which is why our $\iota$ term is different.
\end{proof}

\begin{lemma}[Theorem 4.3 in \citetseparate{zhou2022computationallyefficienthorizonfreereinforcement}]
\label{aux:zhou-and-gu-theorem}
Let $\{ \mathcal{G}_k \}_{k=1}^\infty$ be a filtration, and let $\{(x_k, \eta_k)\}_{k \geq 1}$ be a stochastic
process such that $x_k \in \mathbb{R}^d$ is $\mathcal{G}_k$-measurable and $\eta_k \in \mathbb{R}$
is $\mathcal{G}_{k+1}$-measurable. Let $L, \sigma > 0$, and $\mu^* \in \mathbb{R}^d$. For $k \geq 1$,
define $y_k = \langle \mu^*, x_k \rangle + \eta_k$. Suppose that $\eta_k, x_k$ also satisfy
\[
\mathbb{E}[\eta_k \mid \mathcal{G}_k] = 0, \quad \mathbb{E}[\eta_k^2 \mid \mathcal{G}_k] \leq \sigma^2, \quad |\eta_k| \leq R, \quad \|x_k\|_2 \leq L.
\]
For $k \geq 1$, let
\[
Z_k = \lambda I + \sum_{i=1}^k x_i x_i^\top, \quad b_k = \sum_{i=1}^k y_i x_i, \quad \mu_k = Z_k^{-1} b_k,
\]
and 
\[
\beta_k = \widetilde{O}\bigl(\sigma \sqrt{d} + \max_{1 \leq i \leq k}|\eta_i| \min\{1, \|x_i\|_{Z_{i-1}^{-1}}\}\bigr).
\]
Then, for any $0 < \delta < 1$, with probability at least $1 - \delta$, for all $k \in [K]$, we have
\[
\left\|\sum_{i=1}^k x_i \eta_i \right\|_{Z_k^{-1}} \leq \beta_k, \quad \text{and} \quad \|\mu_k - \mu^*\|_{Z_k} \leq \beta_k + \sqrt{\lambda}\|\mu^*\|_2.
\]
\end{lemma}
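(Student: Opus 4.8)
The plan is to establish the self-normalized concentration bound $\|\sum_{i=1}^k x_i\eta_i\|_{Z_k^{-1}}\le\beta_k$ first, and then deduce the parameter-error bound $\|\mu_k-\mu^*\|_{Z_k}\le\beta_k+\sqrt{\lambda}\|\mu^*\|_2$ from it by a routine normal-equations manipulation. Writing $S_k=\sum_{i=1}^k x_i\eta_i$ and using $y_i=\langle\mu^*,x_i\rangle+\eta_i$, the vector $b_k$ expands as $b_k=(Z_k-\lambda I)\mu^*+S_k$, so that $\mu_k=Z_k^{-1}b_k$ gives $\mu_k-\mu^*=Z_k^{-1}S_k-\lambda Z_k^{-1}\mu^*$. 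Applying $\|Z_k^{-1}v\|_{Z_k}=\|v\|_{Z_k^{-1}}$, the relation $Z_k^{-1}\preceq\lambda^{-1}I$, and the triangle inequality yields $\|\mu_k-\mu^*\|_{Z_k}\le\|S_k\|_{Z_k^{-1}}+\lambda\|\mu^*\|_{Z_k^{-1}}\le\beta_k+\sqrt{\lambda}\|\mu^*\|_2$, which is exactly the second conclusion. This part is pure linear algebra and contains no probability.

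The substance is the variance-adaptive concentration bound, which refines the sub-Gaussian self-normalized inequality (Lemma~\ref{lemma:self-normalized-bound}, recovered by taking $\sigma=R$) by replacing $R\sqrt d$ with the sharper $\sigma\sqrt d$. I would follow the method-of-mixtures route. For $\theta\in\mathbb{R}^d$ define $M_k(\theta)=\exp\big(\theta^\top S_k-\sum_{i=1}^k\psi_i(\theta)\big)$, where $\psi_i(\theta)$ is a conditional cumulant bound for the increment $\theta^\top x_i\eta_i$. Because $\mathbb{E}[\eta_i\mid\mathcal{G}_i]=0$, $\mathbb{E}[\eta_i^2\mid\mathcal{G}_i]\le\sigma^2$, and $|\eta_i|\le R$, the Bennett/Bernstein cumulant inequality gives $\mathbb{E}[e^{\theta^\top x_i\eta_i}\mid\mathcal{G}_i]\le\exp(\sigma^2(\theta^\top x_i)^2)$ as long as the local increment $|\theta^\top x_i|R$ stays bounded by a universal constant, which makes $M_k(\theta)$ a nonnegative supermartingale in that regime. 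Mixing $M_k(\theta)$ against a Gaussian density $h(\theta)\propto\exp(-\tfrac12\theta^\top Z\theta)$ preserves the supermartingale property, and completing the square inside the Gaussian integral $\bar M_k=\int M_k(\theta)h(\theta)\,d\theta$ produces a factor $\exp\big(\tfrac12\|S_k\|_{\tilde Z_k^{-1}}^2\big)$ with $\tilde Z_k=Z+2\sigma^2\sum_{i\le k}x_ix_i^\top$ together with the determinant ratio $(\det Z/\det\tilde Z_k)^{1/2}$. Ville's maximal inequality applied to $\bar M_k$, followed by taking logarithms, gives uniformly in $k$ and with probability at least $1-\delta$ a bound of the form $\|S_k\|_{\tilde Z_k^{-1}}^2\le\log(\det\tilde Z_k/\det Z)+2\log(1/\delta)$; rescaling to convert $\tilde Z_k$ back to $Z_k$ introduces the $\sigma$ scaling, and the determinant-trace inequality (Lemma~\ref{lemma:determinant-trace-inequality}) bounds $\log(\det\tilde Z_k/\det Z)$ by $O(d\log(1+kL^2/(d\lambda)))$, producing the leading $\sigma\sqrt d$ term up to logarithmic factors.

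The localization constraint on $\theta$ is precisely what forces a second, range-dependent correction term, and upgrading that correction from the crude uniform $R$ to the \emph{adaptive} quantity $\max_{1\le i\le k}|\eta_i|\min\{1,\|x_i\|_{Z_{i-1}^{-1}}\}$ is the main obstacle. I would handle it by a peeling argument: the effective per-step influence of $\eta_i$ on the self-normalized sum is $|\eta_i|\min\{1,\|x_i\|_{Z_{i-1}^{-1}}\}$, so I partition the possible values of $M:=\max_i|\eta_i|\min\{1,\|x_i\|_{Z_{i-1}^{-1}}\}$ into $O(\log(kRL/\lambda))$ dyadic bands, run the mixture bound on each band with the per-band range in place of $R$, and take a union bound over the bands; the extra bands cost only logarithmic factors absorbed into $\widetilde O$. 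The only genuine departure from Theorem 4.3 of \citet{zhou2022computationallyefficienthorizonfreereinforcement} is bookkeeping of the $\iota=\log(1+kL^2/(d\lambda))$ factor through the determinant-trace step, so once the mixture supermartingale is verified and the peeling is in place, both conclusions of the lemma follow as stated.
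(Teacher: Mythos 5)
You should first note that the paper itself does not prove this lemma: it is imported verbatim as Theorem 4.3 of \citet{zhou2022computationallyefficienthorizonfreereinforcement}, so the right benchmark is the original proof there, which does \emph{not} use the method of mixtures. It reduces $\|\sum_{i\le k}x_i\eta_i\|_{Z_k^{-1}}^2$ via a Sherman--Morrison recursion to a scalar martingale whose increments are of the form $\eta_i\,x_i^\top Z_{i-1}^{-1}S_{i-1}/(1+w_i^2)$ with $w_i=\|x_i\|_{Z_{i-1}^{-1}}$, applies Freedman's inequality to that martingale, and solves the resulting self-bounding recursion; the adaptive term $\max_i|\eta_i|\min\{1,w_i\}$ appears \emph{exactly} because the increment range is bounded by $|\eta_i|\min\{1,w_i\}\cdot\|S_{i-1}\|_{Z_{i-1}^{-1}}$. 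Your linear-algebra reduction for the second conclusion ($\mu_k-\mu^*=Z_k^{-1}S_k-\lambda Z_k^{-1}\mu^*$, then $\|Z_k^{-1}S_k\|_{Z_k}=\|S_k\|_{Z_k^{-1}}$ and $\lambda\|\mu^*\|_{Z_k^{-1}}\le\sqrt{\lambda}\|\mu^*\|_2$) is correct and matches the standard argument.

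The probabilistic core of your proposal, however, has a genuine gap. The Bernstein moment-generating-function bound $\mathbb{E}[e^{\theta^\top x_i\eta_i}\mid\mathcal{G}_i]\le\exp(\sigma^2(\theta^\top x_i)^2)$ holds only on the region $|\theta^\top x_i|\lesssim 1/R$, a constraint that varies with $i$, whereas your Gaussian mixing measure is supported on all of $\mathbb{R}^d$ and the completing-the-square identity for $\bar M_k=\int M_k(\theta)h(\theta)\,d\theta$ requires integrating over the full space. Consequently $\bar M_k$ is \emph{not} a supermartingale and Ville's inequality does not apply; truncating $h$ to a ball destroys the closed-form Gaussian integral and makes the bound valid only when the maximizer $\tilde Z_k^{-1}S_k$ lies inside the truncation region, which presupposes control of $\|S_k\|$ --- precisely what is being proved. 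This is a known obstruction: there is no pure method-of-mixtures derivation of variance-adaptive self-normalized bounds of this type, which is why \citet{zhou2022computationallyefficienthorizonfreereinforcement} take the Freedman route. Your peeling step does not repair this: $M=\max_{i\le k}|\eta_i|\min\{1,\|x_i\|_{Z_{i-1}^{-1}}\}$ depends on the realized noise, so it is not predictable, conditioning on dyadic bands of $M$ is not an adapted operation (it biases the conditional laws of the $\eta_i$ and again breaks the supermartingale property), and even granting the band, nothing in the mixture computation converts a per-step range into the stated adaptive correction --- the quantity $\min\{1,\|x_i\|_{Z_{i-1}^{-1}}\}$ relates to $|\theta^\top x_i|$ only through $\|\theta\|_{Z_{i-1}}$, a time-varying ellipsoid that a fixed mixing measure cannot track. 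A secondary, fixable issue: rescaling from $\tilde Z_k=Z+2\sigma^2\sum_{i\le k}x_ix_i^\top$ back to $Z_k$ as you describe yields a leading factor $\max\{1,\sigma\}\sqrt{d}$ rather than $\sigma\sqrt{d}$ unless the mixture covariance is matched to $\sigma$, so even the sub-Gaussian part of your bookkeeping needs adjustment.
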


\end{document}